\newcommand{\defeq}{\vcentcolon=}
\newcommand\numberthis{\addtocounter{equation}{1}\tag{\theequation}}
\newtheorem{remark}{Remark}
\newtheorem{theorem}{Theorem}
\newtheorem{corollary}{Corollary}[theorem]
\newtheorem{lemma}{Lemma}
\newtheorem{assumption}{Assumption}
\newtheorem{example}{Example}
\begin{document}

\title{Heterogeneity-Aware Client Sampling for Optimal and Efficient Federated Learning}

\author{Shudi Weng,~\IEEEmembership{Graduate Student Member,~IEEE,}
Chao Ren,~\IEEEmembership{Member,~IEEE,}\\
Ming Xiao,~\IEEEmembership{Senior Member,~IEEE,}
and Mikael Skoglund,~\IEEEmembership{Fellow,~IEEE}\vspace{-1em}
\thanks{Shudi Weng, Chao Ren, Ming Xiao, and Mikael Skoglund are with the Division of Information Science and Engineering, School of Electrical Engineering and Computer Science (EECS), KTH Royal Institute of Technology, 11428 Stockholm, Sweden, Email: \{shudiw, chaor, mingx, skoglund\}@kth.se.

\textit{Corresponding Author: Shudi Weng}.
}
}
\markboth{Journal of \LaTeX\ Class Files,~Vol.~14, No.~8, August~2021}%
{Shell \MakeLowercase{\textit{et al.}}: A Sample Article Using IEEEtran.cls for IEEE Journals}

\IEEEpubid{0000--0000/00\$00.00~\copyright~2021 IEEE}

\maketitle

\begin{abstract}
Federated learning (FL) commonly involves clients with diverse communication and computational capabilities. Such heterogeneity can significantly distort the optimization dynamics and lead to objective inconsistency, where the global model converges to an incorrect stationary point, potentially far from the global optimum.
Despite its critical impact, the joint impact of communication and computation heterogeneity remains unexplored. 
In this paper, we reveal the fundamentally distinct mechanisms through which heterogeneous communication and computation drive inconsistency in FL and conduct a unified theoretical analysis of general heterogeneous FL systems, offering a principled understanding of how these two forms of heterogeneity jointly distort the optimization trajectory under arbitrary choices of local solvers. 
Motivated by these insights, firstly, we investigate the optimal principle of the joint communication and computation design that ensures correct convergence of the global model; secondly, we propose \underline{\textbf{Fed}}erated Heterogeneity-\underline{\textbf{A}}ware \underline{\textbf{C}}lient \underline{\textbf{S}}ampling (\texttt{FedACS}), a universal framework that can tackle inconsistency arising from all types of heterogeneity. The theoretical results show that FedACS is guaranteed to converge towards the correct optimum with a linear rate, even in dynamically heterogeneous environments. Extensive experiments across multiple datasets show that the proposed \texttt{FedACS} outperforms state-of-the-art and category-specific accuracy baselines by $4.3\%$-$36\%$, while saving computation resources by $14\%$-$105\%$ to achieve comparable performance.  
\end{abstract}

\begin{IEEEkeywords}
Federated learning, System heterogeneity, Client sampling, Optimality, Efficiency, Communication-computation co-design.
\end{IEEEkeywords}

\section{Introduction}
\subsection{Backgrounds}
Federated learning (FL) is an emerging subfield in distributed optimization. 
By preventing raw data sharing among clients, FL alleviates communication bottlenecks and enhances privacy protection \cite{lu2024federated}.  However, its performance is heavily influenced by heterogeneity, which manifests in two key forms: \textit{i)} \textit{statistical heterogeneity}, referred to as the imbalanced data distributions across clients, and \textit{ii)} \textit{system heterogeneity}, referred to as diverse communication and computation capabilities among clients, as shown in Fig. \ref{fig:heter_comm_comp}, arising from physical factors, e.g., resource constraints and hardware processing speed at the network edge \cite{10492865}. 

The statistical Heterogeneity in FL, i.e., the non-independent and identically distributed (non-IID) data across clients, can cause \textit{client drift} \cite{karimireddy2019scaffold}, where local models deviate significantly from the global model in each round. The client drift can slow and destabilize the convergence of the global model because the local bias leads to increased variance in the estimated global model per round \cite{li2022data}. In addition, the deviation among local models poses fundamental challenges in the presence of system heterogeneity and may lead to severe sub-optimality. 
\subsection{Related Works}
\subsubsection{System Heterogeneity}
In practice, the communication and computation across edge devices are often in heterogeneous conditions. Heterogeneity in local dataset sizes and hardware capabilities leads to variation in the number of local updates performed by clients in each round. This results in unequal lengths of local model accumulation, causing inconsistencies between the aggregated global model and the ground truth global model. This issue, termed objective inconsistency, was first identified and concretely analyzed in \cite{wangTackling}. To address computation heterogeneity, \cite{wangTackling} further introduces \texttt{FedNova}, a framework that aligns local updates per round by normalization before aggregation. Building upon this intuition, \texttt{FedAU} \cite{wang2024a} and \texttt{FedAWE} \cite{xiang2024efficient} are proposed, which align computation throughout the entire training process by counting the previous participation and statistics. 

Apart from this, the communication between edge devices and the parameter server (PS) is often unreliable and heterogeneous due to physical factors, e.g., fading, shadowing, and the random geometrical distribution of edge devices. Although both client sampling and unreliable communication can lead to partial participation, a distinctive feature of unreliable communication is that both the client participation pattern and the number of participating clients are unpredictable and highly random, making the resulting partial participation suboptimal \cite{11175173}.
More specifically, the communication delivers the local models to PS, altering the frequency of each individual in the weighted aggregation. As a result, the global model $\boldsymbol{X}_r$ obtained in the $r$-th communication round can converge toward the stationary point $\Tilde{\boldsymbol{X}}^\star$ deviating from the intended target $\boldsymbol{X}^\star$. To mitigate these adverse effects, \cite{wang2021quantized, salehi2021federated,10551685} focus on strategic communication resource allocation; alternatively, \cite{zheng2023federated,9726793,perazzone2022communication} propose unbiased aggregation rules that account for heterogeneous communication. Yet these frameworks require identifying the local model updates at the parameter server (PS), raising privacy concerns. Another line of work applies the coding strategy in straggler mitigation \cite{weng2024cooperative,11175173,10802992}, but requires substantial communication resources. Other approaches, such as retransmission \cite{zang2023general} and reusing the latest observed local model updates \cite{jhunjhunwala2022fedvarp}, may prolong the total training time or wrongly magnify the weight of data samples.  

\subsubsection{Communication-Computation Co-Design}
Most existing studies on communication–computation co-design in FL mainly focus on time-domain joint scheduling of communication and computation. For instance, \cite{9475121,10075480,10959106} propose a joint optimization of communication and computation parameters to minimize the overall energy consumption in FL. In \cite{10476747}, weaker channels are assigned to clients with stronger computation capacities to balance communication and computation times, thereby achieving uniform client latency and mitigating the straggler effect. 
However, the time-domain system heterogeneity considered in these works is insufficient to ensure optimal learning performance. A valid theoretical foundation for communication–computation co-design beyond time-domain scheduling is still lacking.

\subsubsection{Client Sampling Strategies}
Client sampling was first introduced to alleviate communication bottlenecks while maintaining unbiased optimization \cite{Li2020}, with early approaches including importance sampling (IS) and uniform sampling (US). Later, optimal sampling (OS) was proposed to accelerate global model convergence by assigning weights proportional to gradient norms \cite{chen2020optimal}, which mitigates the model variance caused by data heterogeneity. However, in OS, high similarity among large-norm gradients can be insufficient in sampling diversity. To mitigate this issue, DELTA \cite{wang2023delta} selects clients based on gradient diversity. 
In comparison, \cite{9887795} addresses computational and statistical heterogeneity by initially selecting clients with the largest gradient norms to accelerate convergence and progressively involving the previously fastest non-participating nodes. 
Alternative frameworks on clustered client sampling, based on data distribution and model diversity, effectively mitigate heterogeneity and improve convergence \cite{balakrishnan2022diverse, fraboni2021clustered}. Further advancements are achieved by heterogeneity-guided sampling \cite{chen_heterogeneity-guided_2024}, which selects clusters based on estimated data heterogeneity by Shannon entropy. 
OS has also been extended to optimal scheduling to reduce the total training time while ensuring unbiasedness in heterogeneous FL  \cite{perazzone2022communication,luo2022tackling}. 
The validity of partial participation designs relies on the assumption of homogeneous local solvers across clients, which makes them unsuitable for general heterogeneous FL. 

\subsection{Our Contributions}
The significance of our work is twofold: $\textit{(i)}$ Heterogeneous computation manifests through communication, leading to an intricate interplay rather than a simple additive effect. Ignoring their interaction is strictly sub-optimal 
However, no prior work investigates this fundamental coupling. 
A sound and unified theoretical framework not only advances understanding but can also be seamlessly integrated with existing popular frameworks to enhance their robustness in heterogeneous environments. 
$\textit{(ii)}$ Although prior works 
have made progress in mitigating objective inconsistency caused by either heterogeneous computation or heterogeneous communication, they fall short in scenarios where both coexist. This limitation arises from the fundamentally different mechanisms of each form of heterogeneity inducing objective inconsistency. There is a critical need for a universal framework.

Our Main Contributions are summarized as follows. 
\begin{itemize}
    \item \textit{Unified Analysis of General Heterogeneous FL:} To the best of our knowledge, this is the first theoretical work to address general heterogeneous FL in the presence of both communication and computation heterogeneity beyond the time domain. 
    Our theoretical analysis reveals the fundamentally distinct mechanisms through which each form of heterogeneity contributes to objective inconsistency. We rigorously analyze and quantify the convergence behavior for both the surrogate and true objective functions under an arbitrary choice of local solver.  In particular, we establish theoretical convergence guarantees for the surrogate objective under non-convex FL settings and derive an achievable convergence bound for the true global objective. This analysis offers a comprehensive characterization of the distortion induced by system heterogeneity, thereby laying a solid foundation for robust FL under general heterogeneous conditions.
    \item \textit{Optimal Communication–Computation Co-Design:}
    Our theoretical results quantify the non-diminishing constant distance caused by jointly heterogeneous communication and computation, based on which we derive the optimal principle of the communication–computation co-design such that the global model can converge toward the correct global optimum. To the best of our knowledge, this is the first work to discuss the optimal co-design principles for local solvers and communications.  The proposed design principle provides a foundation for future work on resource allocation and FL system design. 
    \item \textit{Universal framework to Avoid Objective Inconsistency:} We propose \texttt{FedACS}, a universal framework designed to ensure optimality in heterogeneous FL with partial participation. The core idea is to employ a biased sampling strategy to counteract the bias introduced by system heterogeneity. Unlike existing frameworks, \texttt{FedACS} can effectively address all types of objective inconsistency and ensure efficient convergence even in dynamic heterogeneous settings. Its effectiveness and efficiency are further demonstrated through extensive numerical experiments, demonstrating strong feasibility and substantial performance gains over state-of-the-art benchmarks.  It is also worth noting that our sampling strategy can be readily adapted to other scenarios with only minor modifications.
\end{itemize}
\section{System Model}\label{sec: system model}
In FL, a total of $M$ clients aim to jointly solve the following optimization problem: 
\begin{align}
    \min_{\boldsymbol{X}\in \mathbb{R}^d} \left[ F(\boldsymbol{X})\defeq \sum_{m=1}^M \omega_m F_m(\boldsymbol{X}) \right].
    \label{eq: goal}
\end{align}
In \eqref{eq: goal}, the global objective function (GOF) $F(\cdot)$ is defined by a weighted sum of local objective functions (LOFs), which is defined by the averaged loss $\mathcal{L}(\cdot)$ evaluated on the entire local dataset $\mathcal{D}_m$, i.e., $F_m(\boldsymbol{X})=\frac{1}{\lvert \mathcal{D}_m \rvert}\sum_{\xi\in \mathcal{D}_m}\mathcal{L}(\boldsymbol{X}\vert \xi)$. The importance of clients $\{\omega_m\}_{m=1}^{M}$ is typically determined by the relative size of local datasets.

\subsubsection*{Client Sampling} \label{sec: Client Sampling} 
Given parameters $\boldsymbol{p}\triangleq[p_1, p_2, \cdots, p_M]\in (0,1)^{M\time 1}$ with $\sum_{m=1}^M p_m=1$, PS samples $K$ clients with replacement to form a random set $\mathcal{S}_r(\boldsymbol{p})\subseteq[M]$ at the $r$-th training round. Each client $m$ is included in $\mathcal{S}_r(\boldsymbol{p})$ with probability $p_m$ during each sampling. A client may appear multiple times in $\mathcal{S}_r(\boldsymbol{p})$, and the aggregation weight for each client $m$ corresponds to the frequency of its appearance in $\mathcal{S}_r(\boldsymbol{p})$. This is also called \textit{independent sampling}, as the event $\{k\in \mathcal{S}_r(\boldsymbol{p})\}$ is independent of the event $\{m\in \mathcal{S}_r(\boldsymbol{p})\}$ for $\forall k\neq m$.

\subsubsection*{Generalized Local Update Rule} 
At the $r$-th training round, the selected client $m\in \mathcal{S}_r(\boldsymbol{p})$ independently runs $T_m$ iterations of local solvers, initialized with the current global model $\boldsymbol{X}_{r}$. For an arbitrary choice of the local solver, the cumulative stochastic gradient resulting from the local training can be summarized to
\begin{align}
    \boldsymbol{\Delta}_{m}^{r,\boldsymbol{\xi}}= \nabla\boldsymbol{F}_m^{r,\boldsymbol{\xi}} \cdot\boldsymbol{a}_m,  
    \label{eq: local_accum}
\end{align}
where $\nabla\boldsymbol{F}_m^{r,\boldsymbol{\xi}}=[\nabla\boldsymbol{F}_m(\boldsymbol{X}_{m,r}^{0}\vert \xi_{m,r}^{1}), \cdots, \nabla\boldsymbol{F}_m(\boldsymbol{X}_{m,r}^{ T_m-1}\vert \xi_{m,r}^{ T_m})]$ stores all local stochastic gradients computed at iteration $t\in [T_m]$ and $\boldsymbol{X}_{m,r}^{0}=\boldsymbol{X}_r$. The $\boldsymbol{a}_m$ characterizes how the stochastic gradients are accumulated by different optimizers in $T_m$ local training iterations. The forms of $\boldsymbol{a}_m$ for the commonly used optimizers are given in  \cite{wangTackling}. Similarly, we define $\nabla\boldsymbol{F}_m^{r}=[\nabla\boldsymbol{F}_m(\boldsymbol{X}_{m,r}^{0}), \cdots, \nabla\boldsymbol{F}_m(\boldsymbol{X}_{m,r}^{ T_m-1})]$ and that $\boldsymbol{\boldsymbol{\Delta}}_{m}^r= \nabla\boldsymbol{F}_m^{r} \cdot\boldsymbol{a}_m$ as cumulative local gradients. Consequently, each local model update is $-\eta \boldsymbol{\Delta}_{m}^{r,\boldsymbol{\xi}}$, where $\eta$ is the learning rate.

\subsubsection*{Unreliable Communication}
The selected client $m\in \mathcal{S}_r(\boldsymbol{p})$ sends its local model update $-\eta \boldsymbol{\Delta}_{m}^{r,\boldsymbol{\xi}}$ to PS over an intermittent communication link. The link is modeled as a Bernoulli r.v. $Z_m^r\sim \mathrm{Ber}(1-q_m)$, where $q_m$ is failure probability.  Under this model, $Z_m^r=1$ signifies a successful transmission, allowing PS to receive $-\eta \boldsymbol{\Delta}_{m}^{r,\boldsymbol{\xi}}$ perfectly, while $Z_m^r=0$ indicates a complete communication failure. Furthermore, the transmissions are assumed to be orthogonal and independent. Other communication imperfections, e.g., interference, are beyond the scope of this paper. These assumptions are widely adopted in the communication society \cite{saha2024privacy,10802992}. 

\subsubsection*{Anonymous  Aggregation}
Through unreliable communication, PS receives local model updates from the clients in $\Tilde{\mathcal{S}}_r(\boldsymbol{p},\boldsymbol{q})$, where $\Tilde{\mathcal{S}}_r(\boldsymbol{p},\boldsymbol{q})=\{m: m\in \mathcal{S}_r(\boldsymbol{p}), Z_m^r=1 \}$. 
The aggregation rule is given by
\begin{align}
    \boldsymbol{X}_{r+1}=\boldsymbol{X}_{r} +\frac{1}{K} \sum_{m\in \Tilde{\mathcal{S}}_r(\boldsymbol{p},\boldsymbol{q})} (-\eta\boldsymbol{\Delta}_{m}^{r,\boldsymbol{\xi}}).
    \label{eq: agg_rule1}
\end{align}
Unlike \cite{zang2023general,zheng2023federated}, \eqref{eq: agg_rule1} does not need to count clients, making them particularly suited for over-the-air (OTA) FL \cite{sery2021over, razavikia2024blind} and privacy-preserving FL framework. 

\begin{figure}
    \centering
    \includegraphics[width=0.9\linewidth]{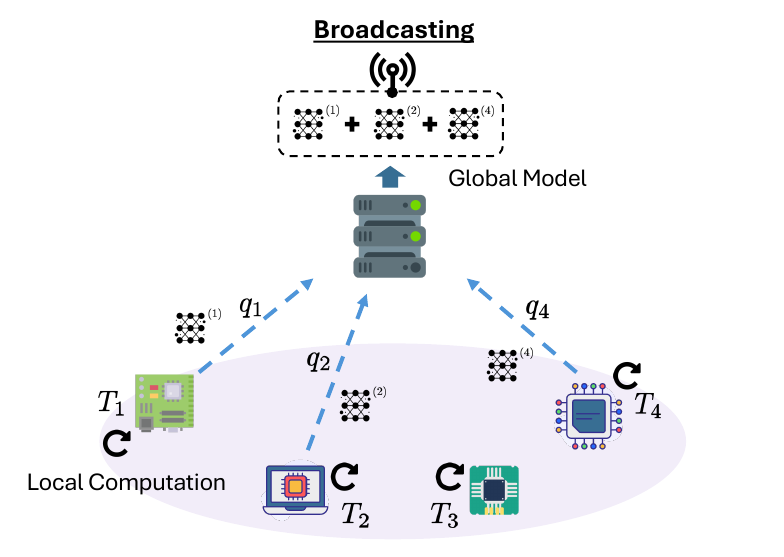}
    \caption{\small An FL scenario at the network edge with heterogeneous communication probabilities $q_m$ and computation epochs $T_m$.}
    \label{fig:heter_comm_comp}
\end{figure}

\vspace{-0.5em}
\section{ Phenomenon of Interest } \label{sec: Phenomenon of Interest}
This section illustrates the negative impacts of heterogeneous communication and computation. An example and various simulations are used to showcase the significant bias. 
\begin{example}[Divergence of Mismatched Objective Functions]\label{example: 1}
Let $F_m(\boldsymbol{X})\triangleq\frac{1}{2}\lVert \boldsymbol{X}-\boldsymbol{E}_m \rVert^2$, where $\boldsymbol{X}, \boldsymbol{E}_m\in \mathbb{R}^d$. Suppose there are $M$ clients in total, the global objective in \eqref{eq: goal} is defined as $F(\boldsymbol{X})=\frac{1}{M}\sum_{m=1}^M F_m(\boldsymbol{X})$ with the unique minimizer $\boldsymbol{X}^\star=\frac{1}{M}\sum_{m=1}^M \boldsymbol{E}_m$. For any set of weights $\{\Omega_m\}_{m=1}^M: \sum_{m=1}^M \Omega_m=1$, the surrogate function is defined as $\Tilde{F}(\boldsymbol{X})=\sum_{m=1}^M \Omega_m F_m(\boldsymbol{X})$ with unique minimizer $\Tilde{\boldsymbol{X}}^\star=\sum_{m=1}^M \Omega_m \boldsymbol{E}_m$. The $\Tilde{\boldsymbol{X}}^\star$ can diverge arbitrarily far from $\boldsymbol{X}^\star$.   
\end{example}
\begin{example}[Mismatched Objective Caused by Heterogeneous Communication and Computation]\label{example: 2}
Following Example \ref{example: 1}, in each round, $K$ clients are independently sampled with probability $p_m = \omega_m = \frac{1}{M}$. Suppose each selected client performs $T_m$ steps of local training using a sufficiently small learning rate $\eta$, and successfully uploads its local model update to the PS with probability $1-q_m$. Under the aggregation rule in \eqref{eq: agg_rule1}, \texttt{FedAvg} will converge to 
\begin{align}
    &\Tilde{\boldsymbol{X}}^\star=\frac{(1-q_m)T_m}{\sum_{m=1}^M(1-q_m)T_m}\boldsymbol{E}_m,
    \label{eq: converge_point_sgd}
\end{align}
which minimizes
\begin{align}
& \Tilde{F}(\boldsymbol{X})=\sum_{m=1}^M\frac{(1-q_m)T_m}{\sum_{m=1}^M(1-q_m)T_m}F_m(\boldsymbol{X}), 
\end{align}
instead of the defined global objective function $F(\boldsymbol{X})$. With this toy example, Figure \ref{fig:examples} shows the distance between the pursued true global minimum $\boldsymbol{X}^\star$ and the convergence point $\Tilde{\boldsymbol{X}}^\star$ achieved by various local solvers and our proposed \texttt{FedACS} under heterogeneous settings. The results imply that $\Tilde{\boldsymbol{X}}^\star$ can deviate largely from $\boldsymbol{X}^\star$ if the system heterogeneity is not properly handled. 
\begin{figure*}
    \centering
     \begin{minipage}[b]{0.32\textwidth}
 \centering
    \includegraphics[width=0.9\linewidth]{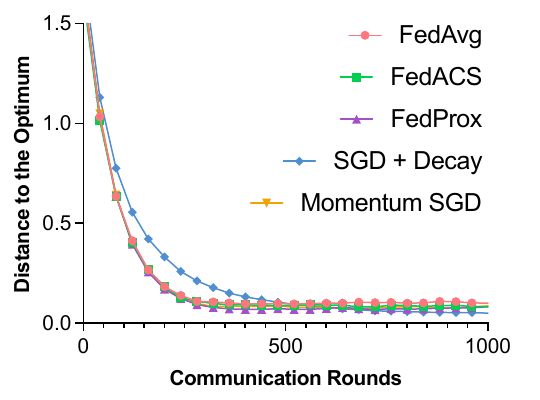}
    \end{minipage}    
    \hfill
    \begin{minipage}[b]{0.32\textwidth}
    \centering
    \includegraphics[width=0.9\linewidth]{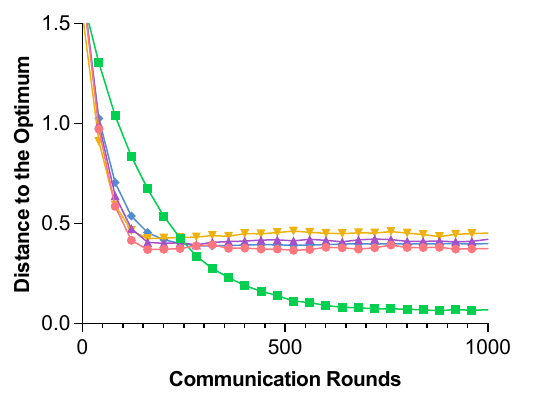}
    \end{minipage}
    \hfill
    \begin{minipage}[b]{0.32\textwidth}
    \centering
    \includegraphics[width=0.9\linewidth]{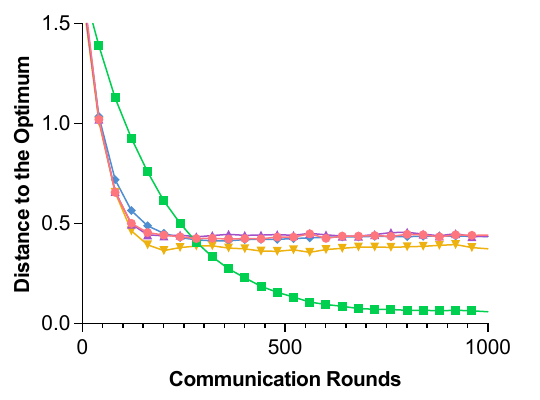}
    \end{minipage}
    \vspace{-0mm}
    \caption{\small Simulations comparing \texttt{FedAvg} with SGD \cite{mcmahan2023communicationefficientlearningdeepnetworks}, proximal SGD ($\mu=1$), SGD with decayed learning rate (decay rate $0.005$), momentum SGD (momentum $0.3$), and our proposed \texttt{FedACS}  (with $M = 30$, $K = 15$, and $\eta = 0.001$) in Example \ref{example: 1}, where $\boldsymbol{E}_m\sim \mathcal{N}(0,\mathbf{I}_{10\times 10})$. 
    \textbf{\textit{Left:}} Homogeneous setting with $T_m=15, q_m=0.2$ for all clients. 
    \textbf{\textit{Middle:}} Heterogeneous setting where $T_m$ is uniformly distributed from $1$ to $30$ and $q_m$ from $0.01$ to $0.3$ across clients.
    \textbf{\textit{Right:}} Time-varying heterogeneous setting: at the $r$-th round, for clients $m \in [15]$, $T_m^{(r)} \sim \mathcal{U}[1, 10]$ and $q_m^{(r)} \sim \mathcal{U}[0.2, 0.4]$; for the remaining clients, $T_m^{(r)} \sim \mathcal{U}[20, 30]$ and $q_m^{(r)} \sim \mathcal{U}[0, 0.2]$. 
    } \label{fig:examples}
\end{figure*}
\end{example}

\section{New Unified Theoretical Analysis for General Heterogeneous FL} \label{sec: analysis_heter_fl}
This section provides fundamentals of the optimization dynamics and multi-view convergence behaviors in general heterogeneous FL. It introduces novel perspectives on mechanisms behind objective inconsistency and presents a non-convex analysis broadly applicable to real-world FL with modern deep neural networks (DNNs). By setting $p_m = \omega_m$, we quantify the surrogate objective and optimization gap induced by system heterogeneity.
\vspace{-0.5em}
\subsection{Impact of Heterogeneous Communication and Computation} \label{sec: Objective Inconsistency}
In each round, heterogeneous computation is conveyed through unreliable communication. 
Consider the general heterogeneous FL in Section \ref{sec: system model}. The expectation of the aggregated local model updates is 
\begin{align}
    &\mathbb{E}_{\boldsymbol{p}, \boldsymbol{q}}\left[ \frac{1}{K} \sum_{m\in \Tilde{\mathcal{S}}_r(\boldsymbol{p},\boldsymbol{q})} (-\eta\boldsymbol{\Delta}_{m}^{r,\boldsymbol{\xi}}) \right] \notag\\
    &=-\underbrace{\eta\sum_{m=1}^M p_m(1-q_m)}_{\eta_{\mathrm{eff}}}\cdot \sum_{m=1}^M \underbrace{\frac{p_m(1-q_m)}{\sum_{m=1}^M p_m(1-q_m)}}_{\gamma_m}\boldsymbol{\Delta}_{m}^{r,\boldsymbol{\xi}}
    \label{eq: E_agg}.
\end{align}
On the one hand, the heterogeneous communication reduces the chances of PS seeing each local model update proportionally with non-identical scales, making the aggregation weights $\{\gamma_m\}_{m=1}^M$ statistically deviate from the pursued target $\{\omega_m\}_{m=1}^M$. On the other hand, the effective learning rate $\eta_{\mathrm{eff}}$ that takes effect in the actual learning process is reduced by communication. 
Next, let us expand the computation component $\boldsymbol{\Delta}_{m}^{r,\boldsymbol{\xi}}$. The \eqref{eq: E_agg} can be further rewritten to 
\begin{align}
    &-\eta_{\mathrm{eff}} \cdot \underbrace{\sum_{m=1}^M \gamma_m \lVert \boldsymbol{a}_m\rVert_1}_{T_{\mathrm{eff}}} \cdot \sum_{m=1}^M \underbrace{\frac{\gamma_m \lVert \boldsymbol{a}_m\rVert_1}{\sum_{m=1}^M \gamma_m \lVert \boldsymbol{a}_m\rVert_1 }}_{\Omega_m} \cdot\underbrace{\frac{\nabla\boldsymbol{F}_m^{r,\boldsymbol{\xi}}\cdot\boldsymbol{a}_m}{\lVert \boldsymbol{a}_m\rVert_1}}_{\nabla\Bar{\boldsymbol{F}}_m^{r,\boldsymbol{\xi}}}. 
    \label{eq: obj_inconsist_comm_comp}
\end{align}
From \eqref{eq: obj_inconsist_comm_comp}, the deviation introduced by heterogeneous computation is structural, rather than statistical. In \eqref{eq: obj_inconsist_comm_comp}, the accumulated local stochastic gradients, $\nabla\boldsymbol{F}_m^{r,\boldsymbol{\xi}}\cdot\boldsymbol{a}_m$, is normalized, denoted by $\nabla\Bar{\boldsymbol{F}}_m^{r,\boldsymbol{\xi}}$, and common effective number of local iterations $T_{\mathrm{eff}}$ is extracted while maintaining $\sum_{m=1}^M \Omega_m=1$. 
Both communication and computation impact on $\{\Omega_m\}_{m=1}^{M}$, indicating that objective inconsistency in general heterogeneous FL is both structural and statistical in nature. 
Existing works \cite{xiang2024efficient,wang2024a,perazzone2022communication,wang2021quantized,jhunjhunwala2022fedvarp} leverage statistical properties to eliminate objective inconsistency are ineffective against structural inconsistency. In contrast, \cite{wangTackling} resolves structural inconsistency but is invalid to statistical inconsistency. Our proposed \texttt{FedACS} framework, however, is capable of addressing both types of inconsistencies.

\begin{assumption}[Unbiased Gradient and Bounded Variance]\label{assump: unbiased}
   For $\forall i \in [M]$, the local stochastic gradient is an unbiased estimator of the true local gradient, i.e., $\mathbb{E}_\xi[\nabla F_m(x\vert \xi)]=\nabla F_m(x)$, where $\nabla F_m(x)$ has bounded data variance $\mathbb{E}_\xi[\lVert \nabla F_m(x\vert \xi)-\nabla F_m(x)\rVert^2]\leq \sigma^2 $, $\sigma^2>0$. 
\end{assumption}
Under Assumption \ref{assump: unbiased}, the results in \eqref{eq: obj_inconsist_comm_comp} imply 
\begin{align}
    \mathbb{E}_{\xi, \boldsymbol{p}, \boldsymbol{q}}\left[ \frac{1}{K} \sum_{m\in \Tilde{\mathcal{S}}_r(\boldsymbol{p},\boldsymbol{q})} (-\eta\boldsymbol{\Delta}_{m}^{r,\boldsymbol{\xi}}) \right]=
    -\eta_{\mathrm{eff}}T_{\mathrm{eff}}\sum_{m=1}^M \Omega_m \nabla\Bar{\boldsymbol{F}}_m^{r}. 
    \label{eq: actual opt}
\end{align}
The term $\nabla\Bar{\boldsymbol{F}}_m^{r}$ is the normalized true gradient, defined similarly to $\nabla\Bar{\boldsymbol{F}}_m^{r,\boldsymbol{\xi}}$, except that $\nabla\boldsymbol{F}_m^{r,\boldsymbol{\xi}}$ is replaced by $\nabla\boldsymbol{F}_m^{r}$. From \eqref{eq: actual opt}, if $\eta$ is chosen small enough, the function being optimized is actually the surrogate function $\Tilde{F}(\boldsymbol{X})= \sum_{m=1}^M \Omega_m F_m(\boldsymbol{X})$
rather than the true objective function $F(\boldsymbol{X})$ in \eqref{eq: goal}. 
Similar results hold for full participation, US, and OS.
\subsection{Non-Convex Convergence Analysis with Heterogeneous Communication and Computation with Arbitrary Local Solver} \label{sec: convergence_heterFL}
To begin with, several widely adopted assumptions are made concerning the local objective functions. 

\begin{assumption}[{$L$-Smoothness}]\label{assump: Smooth}
    For $\forall i \in [M]$, each local objective function is lower bounded by $F_m(x)\geq F^\star$ and is Lipschitz differentiable and its gradient $\nabla F_m(x)$ is L-smooth, i.e., $\lVert \nabla F_m(x)-\nabla F_m(y) \rVert\leq L\lVert x-y \rVert$, $\forall x, y\in \mathbb{R}^d$.
\end{assumption}
\begin{assumption}[Bounded Dissimilarity]\label{assump: Dissimilarity}
    For  $\forall\{\omega_m\}_{m=1}^{M}: \sum_{m=1}^{M} \omega_m=1$, the dissimilarity between the local objective functions $\nabla F_m(x)$ and the global objective function $\nabla F(x)$ is bounded by $\sum_{m=1}^M \omega_m\lVert \nabla F_m(x) \rVert^2\leq \beta^2\lVert  \nabla F(x) \rVert^2+\kappa^2$, $\beta^2\geq 1, \kappa^2\geq 0$.
\end{assumption}

The necessary lemmas are presented below.
\begin{lemma}\label{lemma:3}
Under Assumptions 1--3, it holds that
\begin{align*}
    &\mathbb{E}\left[ \left\langle\nabla\Tilde{F}(\boldsymbol{X}_{r-1}), \boldsymbol{X}_{r}-\boldsymbol{X}_{r-1}\right\rangle \right]\\
    &\leq -\frac{1}{2}\eta_{\mathrm{eff}}T_{\mathrm{eff}}\mathbb{E}\left[ \left\lVert \nabla\Tilde{F}(\boldsymbol{X}_{r-1}) \right\rVert^2 \right]\\
    &+\frac{1}{2}\eta_{\mathrm{eff}}T_{\mathrm{eff}}\sum_{m=1}^M\Omega_m \frac{L^2}{\lVert \boldsymbol{a}_m \rVert_1} \sum_{t=1}^{T_m} a_{m,t} \mathbb{E}\left[ \left\lVert \boldsymbol{X}_{r-1}-  \boldsymbol{X}_{m,r}^{t-1}  
    \right\rVert^2 \right] \notag\\
    &-\frac{1}{2}\eta_{\mathrm{eff}}T_{\mathrm{eff}}\mathbb{E}\left[ \left\lVert \sum_{m=1}^M\Omega_m \nabla\Bar{\boldsymbol{F}}_m^{r} \right\rVert^2 \right]. 
    \numberthis
\end{align*}
\end{lemma}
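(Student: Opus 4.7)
The plan is to start from the aggregation rule \eqref{eq: agg_rule1} and the already-derived identity \eqref{eq: actual opt}, which gives $\mathbb{E}[\boldsymbol{X}_r - \boldsymbol{X}_{r-1}] = -\eta_{\mathrm{eff}} T_{\mathrm{eff}} \sum_{m=1}^M \Omega_m \nabla\bar{\boldsymbol{F}}_m^{r}$, where the expectation is taken over the sampling, communication, and stochastic gradient noise (conditioned on $\boldsymbol{X}_{r-1}$). Substituting this into the inner product and pulling the deterministic factor $\nabla\Tilde{F}(\boldsymbol{X}_{r-1})$ outside the expectation reduces the left-hand side to
\begin{equation*}
-\eta_{\mathrm{eff}} T_{\mathrm{eff}} \, \mathbb{E}\!\left[\bigl\langle \nabla\Tilde{F}(\boldsymbol{X}_{r-1}), \, \textstyle\sum_{m=1}^M \Omega_m \nabla\bar{\boldsymbol{F}}_m^{r} \bigr\rangle\right].
\end{equation*}

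Next, I would apply the polarization identity $2\langle u,v\rangle = \lVert u\rVert^2 + \lVert v\rVert^2 - \lVert u-v\rVert^2$ with $u=\nabla\Tilde{F}(\boldsymbol{X}_{r-1})$ and $v=\sum_m \Omega_m \nabla\bar{\boldsymbol{F}}_m^{r}$. This immediately produces the $-\tfrac{1}{2}\eta_{\mathrm{eff}} T_{\mathrm{eff}}\lVert \nabla\Tilde{F}(\boldsymbol{X}_{r-1})\rVert^2$ and $-\tfrac{1}{2}\eta_{\mathrm{eff}} T_{\mathrm{eff}}\lVert \sum_m \Omega_m \nabla\bar{\boldsymbol{F}}_m^{r}\rVert^2$ terms in the target bound, leaving only the perturbation term $+\tfrac{1}{2}\eta_{\mathrm{eff}} T_{\mathrm{eff}}\,\mathbb{E}\lVert u-v\rVert^2$ to control.

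To bound $\lVert u-v\rVert^2$, I would rewrite the difference in a form that is averaged by a probability distribution. Using $\nabla \Tilde{F}(\boldsymbol{X}_{r-1}) = \sum_m \Omega_m \nabla F_m(\boldsymbol{X}_{r-1})$ together with the definition $\nabla\bar{\boldsymbol{F}}_m^{r} = \tfrac{1}{\lVert\boldsymbol{a}_m\rVert_1}\sum_{t=1}^{T_m} a_{m,t}\,\nabla F_m(\boldsymbol{X}_{m,r}^{t-1})$, the difference becomes
\begin{equation*}
u - v = \sum_{m=1}^M \frac{\Omega_m}{\lVert \boldsymbol{a}_m\rVert_1}\sum_{t=1}^{T_m} a_{m,t}\bigl[\nabla F_m(\boldsymbol{X}_{r-1}) - \nabla F_m(\boldsymbol{X}_{m,r}^{t-1})\bigr].
\end{equation*}
Since the coefficients $\Omega_m \cdot a_{m,t}/\lVert\boldsymbol{a}_m\rVert_1$ are nonnegative and sum to one, Jensen's inequality applied to $\lVert\cdot\rVert^2$ moves the squared norm inside the double sum, and then Assumption~\ref{assump: Smooth} (the $L$-smoothness of each $F_m$) replaces each gradient-difference norm by $L^2 \lVert \boldsymbol{X}_{r-1} - \boldsymbol{X}_{m,r}^{t-1}\rVert^2$. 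Taking expectation yields precisely the third term in the statement.

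The routine mechanics are standard; the only delicate point is making sure the probability-weighted Jensen step uses exactly the weights that make $\Omega_m$ factor out in the final expression, so that the coefficient $L^2/\lVert\boldsymbol{a}_m\rVert_1$ appears correctly. The main potential obstacle is verifying that the innermost expectation over local stochastic noise can be decoupled from the outer $(\boldsymbol{p}, \boldsymbol{q})$ randomness, so that the unbiasedness in Assumption~\ref{assump: unbiased} lets us replace $\nabla\bar{\boldsymbol{F}}_m^{r,\boldsymbol{\xi}}$ by $\nabla\bar{\boldsymbol{F}}_m^{r}$ cleanly; this is why I would invoke \eqref{eq: actual opt} in full and handle the noise before the polarization step, rather than afterwards.
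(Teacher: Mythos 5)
Your proposal is correct and follows the same skeleton as the paper's proof in Appendix E-A: reduce the inner product to $-\eta_{\mathrm{eff}}T_{\mathrm{eff}}\,\mathbb{E}\bigl[\bigl\langle \nabla\Tilde{F}(\boldsymbol{X}_{r-1}),\sum_{m}\Omega_m\nabla\Bar{\boldsymbol{F}}_m^{r}\bigr\rangle\bigr]$ via unbiasedness, apply the polarization identity $-2\langle u,v\rangle=-\lVert u\rVert^2-\lVert v\rVert^2+\lVert u-v\rVert^2$, and then control $\mathbb{E}\lVert u-v\rVert^2$ by Jensen and smoothness. The one place you genuinely diverge is in how the perturbation term is decomposed, and your variant is the more careful one. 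The paper first applies Jensen over $\Omega_m$ and then over $a_{m,t}/\lVert\boldsymbol{a}_m\rVert_1$, arriving at terms of the form $\lVert\nabla\Tilde{F}(\boldsymbol{X}_{r-1})-\nabla F_m(\boldsymbol{X}_{m,r}^{t-1})\rVert^2$, which it then bounds by $L^2\lVert\boldsymbol{X}_{r-1}-\boldsymbol{X}_{m,r}^{t-1}\rVert^2$ citing Assumption~\ref{assump: Smooth}; strictly speaking that assumption only controls differences of the \emph{same} local gradient $\nabla F_m$ at two points, not the gap between $\nabla\Tilde{F}$ and $\nabla F_m$, so that step relies on an implicit additional argument (or an implicit strengthening of the smoothness assumption). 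You instead insert $\nabla F_m(\boldsymbol{X}_{r-1})=\frac{1}{\lVert\boldsymbol{a}_m\rVert_1}\sum_t a_{m,t}\nabla F_m(\boldsymbol{X}_{r-1})$ so that the difference telescopes into $\sum_m\frac{\Omega_m}{\lVert\boldsymbol{a}_m\rVert_1}\sum_t a_{m,t}\bigl[\nabla F_m(\boldsymbol{X}_{r-1})-\nabla F_m(\boldsymbol{X}_{m,r}^{t-1})\bigr]$, to which Jensen with the weights $\Omega_m a_{m,t}/\lVert\boldsymbol{a}_m\rVert_1$ and then the literal statement of Assumption~\ref{assump: Smooth} apply directly, yielding exactly the coefficient $\Omega_m L^2/\lVert\boldsymbol{a}_m\rVert_1$ in the claimed bound. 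Both routes land on the same inequality; yours does so without stretching the smoothness assumption. Your closing caveat about decoupling the $\xi$-noise from the $(\boldsymbol{p},\boldsymbol{q})$ randomness before polarizing is the right place to be careful and matches what the paper implicitly does when passing from $\nabla\Bar{\boldsymbol{F}}_m^{r,\boldsymbol{\xi}}$ to $\nabla\Bar{\boldsymbol{F}}_m^{r}$.
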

\begin{proof}[Proof Sketch]
The proof is provided in Appendix \ref{appx: lemma:3}.
\end{proof}

\begin{lemma}\label{lemma:4}
Under Assumptions 1--3, it holds that
\begin{align*}
    &\mathbb{E}\left[ \left\lVert \frac{1}{K}\sum_{m\in \Tilde{\mathcal{S}}_r(\boldsymbol{p},\boldsymbol{q})} \left(\boldsymbol{\Delta}_{m,r}^{\boldsymbol{\xi}}
    - \boldsymbol{\Delta}_{m,r}\right)\right\rVert^2 \right]\\
    &\leq \sigma^2 \sum_{m=1}^M p_m
\lVert \boldsymbol{a}_m \rVert_1\sum_{t=1}^{T_m} a_{m,t}(1-q_m).
    \numberthis
\end{align*}
\end{lemma}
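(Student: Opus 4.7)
The plan is to reduce the second-moment expectation over three sources of randomness --- stochastic-gradient noise, sampling multiplicities, and per-client communication indicators --- to a diagonal sum and then bound each summand with Cauchy--Schwarz plus Assumption \ref{assump: unbiased}.

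First, I would rewrite the interior sum via multiplicities. Since sampling is with replacement, letting $N_m^r$ be the number of times client $m$ appears in $\mathcal{S}_r(\boldsymbol{p})$, we have
\[
\frac{1}{K}\sum_{m\in\Tilde{\mathcal{S}}_r(\boldsymbol{p},\boldsymbol{q})}\!\bigl(\boldsymbol{\Delta}_{m,r}^{\boldsymbol{\xi}}-\boldsymbol{\Delta}_{m,r}\bigr) \;=\; \frac{1}{K}\sum_{m=1}^M N_m^r Z_m^r \bigl(\boldsymbol{\Delta}_{m,r}^{\boldsymbol{\xi}}-\boldsymbol{\Delta}_{m,r}\bigr).
\]
Conditioning on the realization of $\{N_m^r, Z_m^r\}_{m=1}^M$ and of the current global iterate, Assumption \ref{assump: unbiased} applied iteratively via the tower property across local steps makes each residual $\boldsymbol{\Delta}_{m,r}^{\boldsymbol{\xi}}-\boldsymbol{\Delta}_{m,r}$ zero-mean, and the independence of local datasets across clients makes these residuals mutually independent. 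All cross-terms therefore vanish, and the squared-norm expectation collapses to $\frac{1}{K^2}\sum_{m=1}^M (N_m^r)^2 Z_m^r \, \mathbb{E}_{\boldsymbol{\xi}}\lVert \boldsymbol{\Delta}_{m,r}^{\boldsymbol{\xi}}-\boldsymbol{\Delta}_{m,r}\rVert^2$ after using $(Z_m^r)^2=Z_m^r$.

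Next, I would bound the per-client variance. Writing the residual as $\sum_{t=1}^{T_m} a_{m,t}\, \boldsymbol{\epsilon}_{m,r}^t$ with $\boldsymbol{\epsilon}_{m,r}^t \defeq \nabla F_m(\boldsymbol{X}_{m,r}^{t-1}\vert \xi_{m,r}^t) - \nabla F_m(\boldsymbol{X}_{m,r}^{t-1})$, a single application of Cauchy--Schwarz peels off the $\boldsymbol{a}_m$-weights to yield $\lVert \boldsymbol{a}_m \rVert_1 \sum_t a_{m,t}\lVert \boldsymbol{\epsilon}_{m,r}^t\rVert^2$, after which Assumption \ref{assump: unbiased} gives $\mathbb{E}\lVert \boldsymbol{\Delta}_{m,r}^{\boldsymbol{\xi}}-\boldsymbol{\Delta}_{m,r}\rVert^2 \leq \sigma^2 \lVert \boldsymbol{a}_m\rVert_1 \sum_t a_{m,t}$. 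Finally, taking expectations over the independent pair $N_m^r \sim \mathrm{Binomial}(K,p_m)$ and $Z_m^r \sim \mathrm{Ber}(1-q_m)$ gives $\mathbb{E}[(N_m^r)^2 Z_m^r]/K^2 = \bigl(p_m^2 + p_m(1-p_m)/K\bigr)(1-q_m) \leq p_m(1-q_m)$ for $K \geq 1$, which recovers the stated right-hand side once one observes that $(1-q_m)$ is $t$-independent and can be slid inside the $t$-sum.

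The main obstacle I anticipate is managing the nested filtrations cleanly: the iterates $\boldsymbol{X}_{m,r}^{t-1}$ are themselves $\boldsymbol{\xi}$-measurable, so the zero-mean property of $\boldsymbol{\epsilon}_{m,r}^t$ holds only under the intra-round filtration up to step $t{-}1$ rather than unconditionally, and one must invoke this property repeatedly along the local trajectory. Leading with Cauchy--Schwarz circumvents a martingale argument by requiring only marginal variance bounds, at the price of replacing the tighter $\sum_t a_{m,t}^2$ bound with $\lVert \boldsymbol{a}_m\rVert_1 \sum_t a_{m,t}$. A secondary subtlety is that a single Bernoulli $Z_m^r$ is shared across all $N_m^r$ duplicate selections of client $m$, which forces the joint treatment $\mathbb{E}[(N_m^r)^2 Z_m^r] = \mathbb{E}[(N_m^r)^2](1-q_m)$ rather than an i.i.d.\ per-slot link model.
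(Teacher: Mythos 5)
Your proof is correct and reaches the stated bound, but it takes a genuinely different route from the paper's at the key decomposition step. The paper never invokes orthogonality of the noise: it simply applies Jensen's inequality to the average over the $K$ sampled slots, $\lVert \frac{1}{K}\sum_{k} v_k\rVert^2 \le \frac{1}{K}\sum_k \lVert v_k\rVert^2$, which immediately diagonalizes the sum with no zero-mean or cross-client-independence argument, and then uses $\mathbb{E}[Z_m^r]=1-q_m$ together with independent sampling to pull out the factor $p_m(1-q_m)$. You instead pass to multiplicities $N_m^r\sim\mathrm{Binomial}(K,p_m)$ and kill the cross terms via the martingale/zero-mean structure of the residuals plus independence of the local data streams across clients. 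Both are valid; your route needs the (standard but only implicit in the paper) assumption that stochastic gradients are independent across clients, which the Jensen step does not, but in exchange your intermediate quantity $\mathbb{E}[(N_m^r)^2]/K^2 = p_m^2 + p_m(1-p_m)/K \le p_m$ is strictly tighter than what the paper's convexity bound yields, and would let you retain an order-$1/K$ variance-reduction factor if the lemma were restated. The per-client step --- peeling off the $\boldsymbol{a}_m$-weights by Cauchy--Schwarz/Jensen and then applying Assumption~\ref{assump: unbiased} to each $\lVert\boldsymbol{\epsilon}_{m,r}^t\rVert^2$ --- coincides with the paper's, as does your observation that a single Bernoulli $Z_m^r$ is shared across duplicate selections of the same client.
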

\begin{proof}[Proof Sketch]
The proof is provided in Appendix \ref{appx: lemma:4}.
\end{proof}

\begin{lemma}
\label{lemma: lemma 1}
     For an arbitrary set of functions $\{f_m(x)\}_{m=1}^M$, we have the following.
    \begin{align}
    &\sum_{m=1}^M p_m(1-q_m) \lVert \boldsymbol{a}_m \rVert_1^2 f_m(x) \notag\\
    &\leq A\sum_{m=1}^Mp_m(1-q_m)T_{\mathrm{eff}}\sum_{m=1}^M \Omega_m f_m(x),
    \label{eq: lemma}
    \end{align}
    where $A\triangleq\max_m\{\lVert \boldsymbol{a}_m \rVert_1\}$. 
\end{lemma}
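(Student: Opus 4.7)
The plan is a direct algebraic reduction using the definitions introduced just before the lemma. First I would write $\lVert \boldsymbol{a}_m \rVert_1^2 = \lVert \boldsymbol{a}_m \rVert_1 \cdot \lVert \boldsymbol{a}_m \rVert_1$ and pull out one copy of the norm, bounding it uniformly by $A = \max_m \lVert \boldsymbol{a}_m \rVert_1$. Since each $f_m(x)$ appears multiplied by the nonnegative coefficient $p_m(1-q_m)\lVert \boldsymbol{a}_m\rVert_1$, this step gives
\begin{equation*}
\sum_{m=1}^M p_m(1-q_m)\lVert \boldsymbol{a}_m\rVert_1^2 f_m(x) \;\leq\; A\sum_{m=1}^M p_m(1-q_m)\lVert \boldsymbol{a}_m\rVert_1 f_m(x).
\end{equation*}

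Next I would rewrite the coefficient $p_m(1-q_m)\lVert \boldsymbol{a}_m\rVert_1$ in terms of the normalized quantities defined in \eqref{eq: obj_inconsist_comm_comp}. From the definitions $\gamma_m = p_m(1-q_m)/\sum_{m'} p_{m'}(1-q_{m'})$ and $\Omega_m = \gamma_m \lVert \boldsymbol{a}_m\rVert_1 / T_{\mathrm{eff}}$, I obtain the key identity
\begin{equation*}
p_m(1-q_m)\lVert \boldsymbol{a}_m\rVert_1 \;=\; \Bigl(\textstyle\sum_{m'=1}^M p_{m'}(1-q_{m'})\Bigr)\, T_{\mathrm{eff}}\, \Omega_m.
\end{equation*}
Substituting this into the previous bound and factoring the terms independent of the summation index $m$ yields the right-hand side of \eqref{eq: lemma}, completing the argument.

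There is no real obstacle here; the lemma is essentially a bookkeeping identity combined with a single pointwise bound on $\lVert \boldsymbol{a}_m\rVert_1$. The only thing to be mildly careful about is signs: the inequality direction relies on $f_m(x)$ being coupled with nonnegative weights (which is indeed the case since $p_m \in (0,1)$, $q_m \in [0,1)$, and the entries of $\boldsymbol{a}_m$ are nonnegative for the standard local solvers tabulated in \cite{wangTackling}). If $f_m(x)$ were allowed to be negative the result would still hold as written because both the original weight $p_m(1-q_m)\lVert \boldsymbol{a}_m\rVert_1^2$ and the bounding weight $A \cdot p_m(1-q_m)\lVert \boldsymbol{a}_m\rVert_1$ are nonnegative and related by the same scalar $A/\lVert\boldsymbol{a}_m\rVert_1 \geq 1$; however the subsequent use of this lemma in the convergence analysis will apply it with $f_m(x) = \mathbb{E}[\lVert \boldsymbol{X}_{r-1}-\boldsymbol{X}_{m,r}^{t-1}\rVert^2]$ or similar squared quantities, so nonnegativity is automatic.
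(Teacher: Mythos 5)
Your proof is correct and is essentially the paper's own argument with the two steps reordered: the paper first applies the identity $p_m(1-q_m)\lVert\boldsymbol{a}_m\rVert_1 = \bigl(\sum_{m'}p_{m'}(1-q_{m'})\bigr)\,T_{\mathrm{eff}}\,\Omega_m$ (via $\gamma_m$ and then $\Omega_m$) and only afterwards bounds the remaining factor $\lVert\boldsymbol{a}_m\rVert_1\le A$, whereas you bound first and substitute second, which changes nothing. One correction to your aside, though: for negative $f_m(x)$ the inequality would \emph{not} survive as written, because replacing the weight $p_m(1-q_m)\lVert\boldsymbol{a}_m\rVert_1^2$ by the larger weight $A\,p_m(1-q_m)\lVert\boldsymbol{a}_m\rVert_1$ makes a negative term strictly smaller, so the nonnegativity of $f_m$ (automatic in every application of the lemma, as you correctly observe) is genuinely required and the word ``arbitrary'' in the statement is a slight overreach.
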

\begin{proof}[Proof]
The proof is provided in Appendix \ref{appx: lemma: lemma 1}.
\end{proof}

\begin{lemma}\label{lemma:5}
Under Assumptions 1--3, it holds that
\begin{align*}
    &\mathbb{E}\left[ \left\lVert \frac{1}{K}\sum_{m\in \Tilde{\mathcal{S}}_r(\boldsymbol{p},\boldsymbol{q})} \boldsymbol{\Delta}_{m,r}\right\rVert^2 \right]\leq\\
    & 2L^2 \sum_{m=1}^M p_m (1-q_m) \lVert \boldsymbol{a}_m \rVert_1\sum_{t=1}^{T_m} a_{m,t} \mathbb{E}\left[\left\lVert \boldsymbol{X}_{m,r}^{t-1}- \boldsymbol{X}_{r-1}  \right\rVert^2 \right]\\
    &+ 2A\sum_{m=1}^Mp_m(1-q_m)T_{\mathrm{eff}} \left(\beta^2\mathbb{E}\left[ \left\lVert  \nabla \Tilde{F}(\boldsymbol{X}_{r-1}) \right\rVert^2 \right]+\kappa^2\right).
    \numberthis
\end{align*}
\end{lemma}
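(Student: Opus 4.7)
The plan is to bound the squared norm of the aggregated true cumulative gradients by first handling the sampling/communication randomness, then decomposing each $\boldsymbol{\Delta}_{m,r}$ into a ``drift'' part and an ``anchor'' part evaluated at the global iterate $\boldsymbol{X}_{r-1}$, and finally invoking Lemma \ref{lemma: lemma 1} together with Assumption \ref{assump: Dissimilarity} to convert weights of the form $p_m(1-q_m)\lVert\boldsymbol{a}_m\rVert_1^2$ into the desired $\Omega_m$-weighted form that couples to $\nabla\Tilde{F}$.

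First I would rewrite the aggregated quantity as $\frac{1}{K}\sum_{k=1}^K Y_k$, where $Y_k \defeq Z_{I_k}^{r}\boldsymbol{\Delta}_{I_k,r}$ with $I_k\sim \boldsymbol{p}$ independent across $k$. By convexity of $\lVert\cdot\rVert^2$ (Jensen), $\lVert\frac{1}{K}\sum_k Y_k\rVert^2 \le \frac{1}{K}\sum_k \lVert Y_k\rVert^2$, so after taking expectation the i.i.d.\ structure collapses the bound to $\mathbb{E}[\lVert Y_1\rVert^2]=\sum_{m=1}^M p_m(1-q_m)\,\mathbb{E}[\lVert\boldsymbol{\Delta}_{m,r}\rVert^2]$, which is the clean entry point for the rest of the argument.

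Next I would bound $\lVert\boldsymbol{\Delta}_{m,r}\rVert^2=\bigl\lVert\sum_{t=1}^{T_m} a_{m,t}\nabla F_m(\boldsymbol{X}_{m,r}^{t-1})\bigr\rVert^2$ using the weighted Cauchy--Schwarz inequality $\lVert\sum_t a_{m,t} v_t\rVert^2\le \lVert\boldsymbol{a}_m\rVert_1\sum_t a_{m,t}\lVert v_t\rVert^2$. Inside, I would split $\nabla F_m(\boldsymbol{X}_{m,r}^{t-1}) = [\nabla F_m(\boldsymbol{X}_{m,r}^{t-1})-\nabla F_m(\boldsymbol{X}_{r-1})] + \nabla F_m(\boldsymbol{X}_{r-1})$ and apply $\lVert a+b\rVert^2\le 2\lVert a\rVert^2+2\lVert b\rVert^2$, then invoke $L$-smoothness (Assumption \ref{assump: Smooth}) on the first piece to obtain $L^2\lVert\boldsymbol{X}_{m,r}^{t-1}-\boldsymbol{X}_{r-1}\rVert^2$, and notice that the second piece, summed over $t$, collapses to $\lVert\boldsymbol{a}_m\rVert_1^2\lVert\nabla F_m(\boldsymbol{X}_{r-1})\rVert^2$. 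After multiplying by $p_m(1-q_m)$ and summing over $m$, the drift contribution already matches the first term on the right-hand side of the lemma.

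The remaining piece is $2\sum_m p_m(1-q_m)\lVert\boldsymbol{a}_m\rVert_1^2\mathbb{E}[\lVert\nabla F_m(\boldsymbol{X}_{r-1})\rVert^2]$. Here I would apply Lemma \ref{lemma: lemma 1} with $f_m(x)=\lVert\nabla F_m(x)\rVert^2$, which converts the weights $p_m(1-q_m)\lVert\boldsymbol{a}_m\rVert_1^2$ into $A\cdot\sum_m p_m(1-q_m)\,T_{\mathrm{eff}}\,\Omega_m$; then Assumption \ref{assump: Dissimilarity} applied with the weights $\{\Omega_m\}$ (noting that $\sum_m\Omega_m F_m=\Tilde{F}$) gives the $\beta^2\lVert\nabla\Tilde{F}(\boldsymbol{X}_{r-1})\rVert^2+\kappa^2$ factor. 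The main subtlety I foresee is making sure the weight conversion step is applied with the correct reference objective: Assumption \ref{assump: Dissimilarity} must be instantiated with the surrogate weights $\Omega_m$ so that the induced $\nabla F$ on the right is precisely $\nabla\Tilde{F}$, which is what makes Lemma \ref{lemma: lemma 1} the essential bridge between the raw $\lVert\boldsymbol{a}_m\rVert_1^2$ weighting produced by the expansion and the $\Omega_m$-weighted quantity that the convergence analysis ultimately needs.
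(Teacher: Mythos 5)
Your proposal is correct and follows essentially the same route as the paper's proof: Jensen's inequality over the sampled set, a drift-plus-anchor decomposition of each local gradient around $\boldsymbol{X}_{r-1}$ with $L$-smoothness, the collapse of the anchor term to $\lVert\boldsymbol{a}_m\rVert_1^2\lVert\nabla F_m(\boldsymbol{X}_{r-1})\rVert^2$, and then Lemma~\ref{lemma: lemma 1} plus Assumption~\ref{assump: Dissimilarity} instantiated with the surrogate weights $\{\Omega_m\}$. The only difference is cosmetic ordering — the paper splits the aggregate into the two terms $T_1,T_2$ before applying Jensen to each, while you apply Jensen to the whole aggregate first and then decompose per client — and both yield the identical bound.
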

\begin{proof}[Proof]
The proof is provided in Appendix \ref{appx: lemma:5}.
\end{proof}

\begin{lemma}[Accummulation of Local Variance]
\label{lemma: lemma 2}
    For the consecutive local optimization performed on each client, the accumulated local variance is bounded by   
    \begin{align}
        &\sum_{t=1}^{T_m} a_{m,t}\mathbb{E}\left[ \left\lVert  \boldsymbol{X}_{r-1}-\boldsymbol{X}_{m,r}^{t-1}  \right\rVert^2 \right] \notag\\
        &\leq \frac{2\eta^2\lVert \boldsymbol{a}_m \rVert_1^3}{1-2\eta^2L^2\lVert \boldsymbol{a}_m \rVert_1^2} \mathbb{E}\left[\lVert \nabla F_m\left( \boldsymbol{X}_{r-1}\right) \rVert^2\right].
        \label{eq:local_accum}
    \end{align}
\end{lemma}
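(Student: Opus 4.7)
The plan is to unroll the local update recursion and convert it into a self-referential inequality for the quantity $S \defeq \sum_{t=1}^{T_m} a_{m,t}\mathbb{E}[\lVert \boldsymbol{X}_{r-1}-\boldsymbol{X}_{m,r}^{t-1}\rVert^2]$. First, I would write the displacement after $t-1$ local steps as the partial accumulation
\begin{align*}
\boldsymbol{X}_{m,r}^{t-1}-\boldsymbol{X}_{r-1}
=-\eta\sum_{s=1}^{t-1} a_{m,s}\,\nabla F_m(\boldsymbol{X}_{m,r}^{s-1}),
\end{align*}
(using the convention $\boldsymbol{X}_{m,r}^{0}=\boldsymbol{X}_{r-1}$ so the $t=1$ term is exactly zero, which will matter later). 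Squaring both sides and applying Jensen / Cauchy–Schwarz with the nonnegative weights $a_{m,s}$ would give the first clean bound
\begin{align*}
\mathbb{E}\bigl[\lVert \boldsymbol{X}_{r-1}-\boldsymbol{X}_{m,r}^{t-1}\rVert^2\bigr]
\;\le\; \eta^2\,\lVert \boldsymbol{a}_m\rVert_1\sum_{s=1}^{t-1} a_{m,s}\,
\mathbb{E}\bigl[\lVert \nabla F_m(\boldsymbol{X}_{m,r}^{s-1})\rVert^2\bigr].
\end{align*}

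The next step is to turn $\lVert \nabla F_m(\boldsymbol{X}_{m,r}^{s-1})\rVert^2$ into something controlled by $\lVert \nabla F_m(\boldsymbol{X}_{r-1})\rVert^2$ plus drift. Using $\lVert a+b\rVert^2\le 2\lVert a\rVert^2+2\lVert b\rVert^2$ together with Assumption~\ref{assump: Smooth} ($L$-smoothness of $F_m$), I would bound
\begin{align*}
\lVert \nabla F_m(\boldsymbol{X}_{m,r}^{s-1})\rVert^2
\;\le\; 2L^2\lVert \boldsymbol{X}_{m,r}^{s-1}-\boldsymbol{X}_{r-1}\rVert^2
+2\lVert \nabla F_m(\boldsymbol{X}_{r-1})\rVert^2.
\end{align*}
Substituting back and multiplying by $a_{m,t}$, then summing $t=1,\dots,T_m$, I would interchange the order of summation and use $\sum_{t=s+1}^{T_m}a_{m,t}\le \lVert \boldsymbol{a}_m\rVert_1$ to collapse the double sum into $\lVert \boldsymbol{a}_m\rVert_1\sum_{s=1}^{T_m}a_{m,s}(\cdot)$. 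This produces the self-referential inequality
\begin{align*}
S \;\le\; 2\eta^2 L^2\lVert \boldsymbol{a}_m\rVert_1^2\,S
\;+\; 2\eta^2\lVert \boldsymbol{a}_m\rVert_1^3\,\mathbb{E}\bigl[\lVert \nabla F_m(\boldsymbol{X}_{r-1})\rVert^2\bigr].
\end{align*}

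Finally, assuming $\eta$ is small enough that $1-2\eta^2 L^2\lVert \boldsymbol{a}_m\rVert_1^2>0$ (a standard local–step step-size condition that is consistent with the convergence analysis in Section~\ref{sec: convergence_heterFL}), I would rearrange to isolate $S$ and obtain exactly the claimed bound. The main obstacle in this plan is the bookkeeping at the double-sum step: one has to be careful that the coefficient after swapping summations is $\lVert \boldsymbol{a}_m\rVert_1$ (not $\lVert \boldsymbol{a}_m\rVert_1^2$), which in turn is what keeps a single $\lVert \boldsymbol{a}_m\rVert_1^3$ in the numerator and $\lVert \boldsymbol{a}_m\rVert_1^2$ in the denominator contraction factor, matching \eqref{eq:local_accum}. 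A secondary subtlety worth flagging in the write-up is whether $\boldsymbol{X}_{m,r}^{t-1}$ here should be the true-gradient trajectory or the stochastic one; since Lemma~\ref{lemma:4} already isolates the $\sigma^2$ contribution of the stochastic noise, the bound in Lemma~6 is most cleanly interpreted as governing the deterministic drift component used inside Lemma~\ref{lemma:5}, which is why no $\sigma^2$ term appears on the right-hand side.
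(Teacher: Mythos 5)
Your proposal is correct and follows essentially the same route as the paper's proof: unroll the local recursion, apply Jensen/Cauchy--Schwarz with the weights $a_{m,s}$, decompose $\lVert \nabla F_m(\boldsymbol{X}_{m,r}^{s-1})\rVert^2$ via $L$-smoothness into drift plus $\lVert \nabla F_m(\boldsymbol{X}_{r-1})\rVert^2$, collapse the double sum to get the self-referential inequality $S \le 2\eta^2L^2\lVert\boldsymbol{a}_m\rVert_1^2 S + 2\eta^2\lVert\boldsymbol{a}_m\rVert_1^3\,\mathbb{E}[\lVert\nabla F_m(\boldsymbol{X}_{r-1})\rVert^2]$, and rearrange under the step-size condition $1-2\eta^2L^2\lVert\boldsymbol{a}_m\rVert_1^2>0$. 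Your bookkeeping of the $\lVert\boldsymbol{a}_m\rVert_1$ powers and your reading of the trajectory as the deterministic (true-gradient) one both match the paper.
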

\begin{proof}[Proof Sketch]
The proof is provided in Appendix \ref{appx: lemma: lemma 2}.
\end{proof}

Our main results are stated below. With the help of Lemma \ref{lemma:3}--\ref{lemma: lemma 2}, Lemma \ref{Lemma: decent_surrogate} and \ref{Lemma: decent_true} characterize per-round gradient descent for the surrogate and true objective functions. Theorem \ref{theo: Convergence of the Surrogate} provides a rigorous justification for our analysis in Section \ref{sec: Objective Inconsistency}, characterizing the convergence behavior of the surrogate objective. Building on this result, Theorem~\ref{theo:converge_bound_true} establishes a quantitative convergence analysis for the true objective function.
\begin{lemma}[Decent Lemma of the Surrogate Objective Function]\label{Lemma: decent_surrogate}
Conditioning on the $\sigma$-algebra generated by the randomness up to round $r$, and Assumptions \ref{assump: unbiased}--\ref{assump: Dissimilarity}, the the expected squared $\ell_2$-norm of the gradient of the surrogate objective function $\Tilde{F}(\boldsymbol{X})$ at the $r$-th round is bounded by
\begin{align}
    &\mathbb{E}\left[ \Tilde{F}(\boldsymbol{X}_{r}) \right]
    -\mathbb{E}\left[ \Tilde{F}(\boldsymbol{X}_{r-1}) \right] \notag\\
    &\leq \eta_{\mathrm{eff}} T_{\mathrm{eff}} \beta^2 \left(-1+\rho(\eta,L,A)
    \right)\mathbb{E}\left[ \left\lVert  \nabla \Tilde{F}(\boldsymbol{X}_{r-1}) \right\rVert^2 \right]\notag\\
    &\hspace{5mm}+\eta_{\mathrm{eff}} T_{\mathrm{eff}} \kappa^2 \left(-\frac{1}{2}+
    \rho(\eta,L,A)
    \right)
    +\frac{1}{2}\eta_{\mathrm{eff}} \eta LA^2 \sigma^2,
    \label{eq:one_round_theorem_1}
\end{align}
where $\rho(\eta,L,A)= (2\eta LA+1)\left(\frac{\eta^2L^2 A^2}{1-2\eta^2L^2A^2}+\frac{1}{2}\right)$, and $\lim_{\eta\rightarrow0}\rho(\eta,L,A)=\frac{1}{2}$.
\end{lemma}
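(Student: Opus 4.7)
The plan is to use $L$-smoothness of $\tilde F$ (which inherits $L$-smoothness from Assumption \ref{assump: Smooth} since $\tilde F$ is a convex combination of the $F_m$'s) to write the standard one-step quadratic upper bound
\[
\tilde F(\boldsymbol X_r)-\tilde F(\boldsymbol X_{r-1}) \le \langle \nabla\tilde F(\boldsymbol X_{r-1}),\boldsymbol X_r-\boldsymbol X_{r-1}\rangle + \tfrac{L}{2}\lVert \boldsymbol X_r-\boldsymbol X_{r-1}\rVert^2,
\]
then take conditional expectation and control each term using the machinery already assembled. For the inner product I would directly plug in Lemma \ref{lemma:3}. For the squared norm, I would split it via the standard variance decomposition $\mathbb E[\lVert Y\rVert^2]=\mathbb E[\lVert Y-\mathbb E Y\rVert^2]+\lVert \mathbb E Y\rVert^2$ applied to $\boldsymbol X_r-\boldsymbol X_{r-1}=-\tfrac{\eta}{K}\sum_{m\in\tilde{\mathcal S}_r}\boldsymbol\Delta_m^{r,\boldsymbol\xi}$: the stochastic-gradient fluctuation part is bounded by Lemma \ref{lemma:4}, while the remaining mean-squared piece is bounded by Lemma \ref{lemma:5}.

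Next I would consolidate the local-drift terms $\mathbb E[\lVert \boldsymbol X_{r-1}-\boldsymbol X_{m,r}^{t-1}\rVert^2]$ that appear in both Lemma \ref{lemma:3} and Lemma \ref{lemma:5}. Applying Lemma \ref{lemma: lemma 2} to each such summation converts $\sum_t a_{m,t}\mathbb E[\lVert \boldsymbol X_{r-1}-\boldsymbol X_{m,r}^{t-1}\rVert^2]$ into $\frac{2\eta^2\lVert\boldsymbol a_m\rVert_1^3}{1-2\eta^2L^2\lVert\boldsymbol a_m\rVert_1^2}\mathbb E[\lVert \nabla F_m(\boldsymbol X_{r-1})\rVert^2]$. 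Bounding $\lVert\boldsymbol a_m\rVert_1\le A$ uniformly yields a common rational factor $\frac{\eta^2 L^2 A^2}{1-2\eta^2L^2A^2}$ (times $\lVert\boldsymbol a_m\rVert_1^2$), at which point Lemma \ref{lemma: lemma 1} lets me replace the $p_m(1-q_m)\lVert\boldsymbol a_m\rVert_1^2$ weighting by the surrogate weighting $\Omega_m$, producing the combination $A\sum_m p_m(1-q_m)T_{\mathrm{eff}}\cdot\sum_m\Omega_m\lVert\nabla F_m\rVert^2$.

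With everything expressed in terms of $\sum_m\Omega_m\lVert\nabla F_m(\boldsymbol X_{r-1})\rVert^2$, I invoke the bounded-dissimilarity Assumption \ref{assump: Dissimilarity} with weights $\Omega_m$ to bound this by $\beta^2\lVert\nabla\tilde F(\boldsymbol X_{r-1})\rVert^2+\kappa^2$. Collecting the coefficients on $\lVert\nabla\tilde F\rVert^2$ and on $\kappa^2$ should produce exactly the prefactors $\eta_{\mathrm{eff}}T_{\mathrm{eff}}\beta^2(-1+\rho)$ and $\eta_{\mathrm{eff}}T_{\mathrm{eff}}\kappa^2(-\tfrac12+\rho)$ with $\rho(\eta,L,A)=(2\eta LA+1)\bigl(\tfrac{\eta^2L^2A^2}{1-2\eta^2L^2A^2}+\tfrac12\bigr)$: the $-1$ and $-\tfrac12$ come from the leading $-\tfrac12\eta_{\mathrm{eff}}T_{\mathrm{eff}}\lVert\nabla\tilde F\rVert^2$ in Lemma \ref{lemma:3}, the $\tfrac12$ inside $\rho$ comes from the $\tfrac{L}{2}\lVert\cdot\rVert^2$ smoothness term contributing the "$2A\sum_m p_m(1-q_m)T_{\mathrm{eff}}$" factor in Lemma \ref{lemma:5}, the fractional $\tfrac{\eta^2L^2A^2}{1-2\eta^2L^2A^2}$ is inherited from Lemma \ref{lemma: lemma 2}, and the multiplicative $(2\eta LA+1)$ arises from combining the $L^2$ drift terms of Lemma \ref{lemma:3} and Lemma \ref{lemma:5}. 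Finally, the residual stochastic term $\tfrac12\eta_{\mathrm{eff}}\eta LA^2\sigma^2$ comes from applying Lemma \ref{lemma:4} and then bounding $\lVert\boldsymbol a_m\rVert_1\sum_t a_{m,t}\le A\cdot\lVert\boldsymbol a_m\rVert_1\le A^2$ inside its right-hand side, together with the $\tfrac{L}{2}\eta^2$ prefactor from smoothness which combines with $\sum_m p_m(1-q_m)$ to give $\eta_{\mathrm{eff}}$.

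The main obstacle I expect is the bookkeeping of constants: tracking how the factor $(2\eta LA+1)$ and the fraction $\tfrac{\eta^2L^2A^2}{1-2\eta^2L^2A^2}$ combine cleanly into a single $\rho(\eta,L,A)$ while keeping the $\beta^2$ and $\kappa^2$ coefficients separate but parallel. A useful sanity check throughout is the limit $\eta\to0$: the drift contribution from Lemma \ref{lemma: lemma 2} vanishes, $\rho\to\tfrac12$, and the inequality reduces to the familiar $\mathbb E[\tilde F(\boldsymbol X_r)-\tilde F(\boldsymbol X_{r-1})]\le -\tfrac12\eta_{\mathrm{eff}}T_{\mathrm{eff}}\beta^2\lVert\nabla\tilde F\rVert^2 + O(\eta_{\mathrm{eff}}\eta\sigma^2)$, which is the right first-order behavior for a surrogate-function descent lemma.
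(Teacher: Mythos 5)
Your proposal follows essentially the same route as the paper's own proof in Appendix B: the $L$-smoothness expansion, the bias--variance split of $\mathbb{E}\left[\lVert \boldsymbol{X}_r-\boldsymbol{X}_{r-1}\rVert^2\right]$ handled by Lemmas \ref{lemma:4} and \ref{lemma:5}, the inner product via Lemma \ref{lemma:3}, drift consolidation via Lemma \ref{lemma: lemma 2}, reweighting via Lemma \ref{lemma: lemma 1}, and the final application of Assumption \ref{assump: Dissimilarity}, with the constants assembling into $\rho(\eta,L,A)$ exactly as you describe. The plan is correct and matches the paper's argument, including your accounting of where the factor $(2\eta LA+1)$ and the fraction $\frac{\eta^2L^2A^2}{1-2\eta^2L^2A^2}$ originate.
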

\begin{proof}
    The proof is provided in Appendix \ref{appx: theo1}.
\end{proof}
\begin{lemma}[Decent Lemma of the True Objective Function]\label{Lemma: decent_true}
Under Assumptions \ref{assump: unbiased}--\ref{assump: Dissimilarity}, the $\ell_2$-norm of the gradient evaluated by the true objective function $F(\boldsymbol{X})$ at the $r$-th round is bounded by
\begin{align}
    &\left \lVert \nabla F(\boldsymbol{X}_{r-1}) \right\rVert\notag \\
    &\leq \left(\beta\sqrt{\chi_{\boldsymbol{\omega}\Vert\boldsymbol{\Omega} }^2} +1\right) \left\lVert  \nabla \Tilde{F}(\boldsymbol{X}_{r-1}) \right\rVert  +\kappa \sqrt{\chi_{\boldsymbol{\omega}\Vert\boldsymbol{\Omega} }^2}, 
\end{align}
\end{lemma}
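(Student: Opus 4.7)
The plan is to bound $\|\nabla F\|$ by $\|\nabla \tilde F\|$ through a decomposition that turns the mismatch between the weight vectors $\boldsymbol{\omega}$ and $\boldsymbol{\Omega}$ into exactly the $\chi^2$-divergence appearing on the right-hand side. First I would write the true gradient as
\begin{align*}
\nabla F(\boldsymbol{X}_{r-1})
= \nabla \tilde F(\boldsymbol{X}_{r-1}) + \sum_{m=1}^{M}(\omega_m-\Omega_m)\nabla F_m(\boldsymbol{X}_{r-1}),
\end{align*}
which is immediate from the definitions $F=\sum_m\omega_m F_m$ and $\tilde F=\sum_m\Omega_m F_m$. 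Applying the triangle inequality then reduces the problem to controlling the norm of the weighted sum of local gradients with signed coefficients $\omega_m-\Omega_m$.

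Next, to expose the $\chi^2$-divergence $\chi^2_{\boldsymbol{\omega}\|\boldsymbol{\Omega}}=\sum_m(\omega_m-\Omega_m)^2/\Omega_m$, I would perform the standard re-weighting trick by writing $|\omega_m-\Omega_m|=\frac{|\omega_m-\Omega_m|}{\sqrt{\Omega_m}}\cdot\sqrt{\Omega_m}$ and invoking Cauchy--Schwarz:
\begin{align*}
\Bigl\|\sum_{m=1}^{M}(\omega_m-\Omega_m)\nabla F_m\Bigr\|
\le \sqrt{\sum_{m=1}^{M}\frac{(\omega_m-\Omega_m)^2}{\Omega_m}}\cdot\sqrt{\sum_{m=1}^{M}\Omega_m\|\nabla F_m\|^2}.
\end{align*}
The first factor is precisely $\sqrt{\chi^2_{\boldsymbol{\omega}\|\boldsymbol{\Omega}}}$.

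For the second factor, I would apply Assumption~\ref{assump: Dissimilarity} with weight vector $\boldsymbol{\Omega}$ (which sums to one, so it is an admissible choice), yielding
\begin{align*}
\sum_{m=1}^{M}\Omega_m\|\nabla F_m(\boldsymbol{X}_{r-1})\|^2
\le \beta^2\|\nabla \tilde F(\boldsymbol{X}_{r-1})\|^2+\kappa^2,
\end{align*}
and then use the elementary subadditivity $\sqrt{a+b}\le\sqrt{a}+\sqrt{b}$ to obtain $\sqrt{\beta^2\|\nabla\tilde F\|^2+\kappa^2}\le\beta\|\nabla\tilde F\|+\kappa$. Chaining these bounds gives the claimed inequality.

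I do not expect a major obstacle; the only subtle step is recognizing that Assumption~\ref{assump: Dissimilarity} must be invoked with the surrogate weights $\boldsymbol{\Omega}$ rather than the nominal weights $\boldsymbol{\omega}$, which is legitimate because the assumption is stated for any probability vector. The use of $\sqrt{a+b}\le\sqrt{a}+\sqrt{b}$ slightly loosens the bound but makes the final expression affine in $\|\nabla\tilde F\|$, matching the form needed by Theorem~\ref{theo:converge_bound_true}.
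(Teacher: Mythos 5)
Your proposal is correct and follows essentially the same route as the paper: the paper bounds $\lVert \nabla F(x)-\nabla\Tilde{F}(x)\rVert^2$ via the identical reweighting $\frac{\omega_m-\Omega_m}{\sqrt{\Omega_m}}\cdot\sqrt{\Omega_m}$ and Cauchy--Schwarz, invokes Assumption~\ref{assump: Dissimilarity} with the surrogate weights $\boldsymbol{\Omega}$ to get $\beta^2\lVert\nabla\Tilde{F}\rVert^2+\kappa^2\le(\beta\lVert\nabla\Tilde{F}\rVert+\kappa)^2$, and then concludes by the triangle inequality. Your observation that the assumption must be instantiated with $\boldsymbol{\Omega}$ rather than $\boldsymbol{\omega}$ is exactly the reading the paper uses.
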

\begin{proof}
    The proof is provided in Appendix \ref{appx: theo2}.
\end{proof}
\begin{theorem}[Convergence of the Surrogate Objective Function]\label{theo: Convergence of the Surrogate}
Under assumption \ref{assump: unbiased}--\ref{assump: Dissimilarity}, any federated optimization algorithm, that aims to minimize \eqref{eq: goal}  with computational capabilities $\{T_m\}_{m=1}^M$ and intermittent connectivity $\{Z_m^r\sim \mathrm{Ber}(1-q_m)\}_{m=1}^M$ and follows the update rule in \eqref{eq: agg_rule1}, will converge to a stationary point that minimizes the surrogate objective function $\Tilde{F}(\boldsymbol{X})$. If $\eta$ is sufficiently small, the optimization error after $R$ rounds of training is bounded by 
\begin{align*}
    &\frac{1}{R}\sum_{r=1}^R \mathbb{E}\left[ \left\lVert \nabla \Tilde{F}(\boldsymbol{X}_{r-1}) \right\rVert^2 \right]\\
    &\leq \frac{4(\Tilde{F}(\boldsymbol{X}_R)-\Tilde{F}^\star)}{\eta_{\mathrm{eff}}T_{\mathrm{eff}}R}
    +\eta_{\mathrm{eff}}T_{\mathrm{eff}}\frac{\kappa^2}{\beta^2}+\frac{2\eta LA^2}{T_{\mathrm{eff}}}\sigma^2\triangleq \epsilon_{\mathrm{opt}}.
    \numberthis
    \label{eq: theo1}
\end{align*}
where $A\triangleq \max\left\{\lVert\boldsymbol{a}_m\rVert_1\right\}$. If $\eta \propto \frac{1}{\sqrt{R}}$, $\epsilon_{\mathrm{opt}}\rightarrow0$ with convergence rate $\mathcal{O}\left(\frac{1}{\sqrt{R}}\right)$.    
\end{theorem}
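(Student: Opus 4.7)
The plan is to start from Lemma~\ref{Lemma: decent_surrogate} and convert its one-round descent inequality into a summable recursion, then telescope across $R$ rounds. The crucial observation is that $\lim_{\eta\to 0}\rho(\eta,L,A)=\tfrac{1}{2}$, with $\rho$ continuous in $\eta$ near zero. Hence for $\eta$ sufficiently small, i.e., for any $\eta$ satisfying an explicit upper bound on $\eta LA$ that ensures both $2\eta^2L^2A^2<1$ and $\rho(\eta,L,A)\le 3/4$, the bracketed coefficients in \eqref{eq:one_round_theorem_1} simplify to $-1+\rho\le -\tfrac{1}{4}$ and $-\tfrac{1}{2}+\rho\le \tfrac{1}{4}$.

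Substituting into Lemma~\ref{Lemma: decent_surrogate} yields a descent inequality of the form
\begin{equation*}
\mathbb{E}[\Tilde{F}(\boldsymbol{X}_r)]-\mathbb{E}[\Tilde{F}(\boldsymbol{X}_{r-1})] \le -\tfrac{1}{4}\eta_{\mathrm{eff}}T_{\mathrm{eff}}\beta^2\,\mathbb{E}\!\left[\|\nabla\Tilde{F}(\boldsymbol{X}_{r-1})\|^2\right] + \tfrac{1}{4}\eta_{\mathrm{eff}}T_{\mathrm{eff}}\kappa^2 + \tfrac{1}{2}\eta_{\mathrm{eff}}\eta LA^2\sigma^2.
\end{equation*}
I would then isolate the gradient-norm term on the left by dividing through by $\eta_{\mathrm{eff}}T_{\mathrm{eff}}\beta^2/4$, and invoke $\beta^2\ge 1$ from Assumption~\ref{assump: Dissimilarity} to absorb $\beta^2$ in the denominators of the optimization and noise terms while retaining $\kappa^2/\beta^2$ in the dissimilarity term, matching the structure of the stated bound. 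Summing over $r=1,\dots,R$ produces a telescoping sum $\sum_{r=1}^{R}(\Tilde{F}(\boldsymbol{X}_{r-1})-\Tilde{F}(\boldsymbol{X}_r))\le \Tilde{F}(\boldsymbol{X}_0)-\Tilde{F}^\star$, where the lower bound $\Tilde{F}^\star$ follows from $F_m\ge F^\star$ (Assumption~\ref{assump: Smooth}). Dividing by $R$ gives exactly the expression defining $\epsilon_{\mathrm{opt}}$.

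For the rate claim, I would choose $\eta=c/\sqrt{R}$ for a suitable constant $c$. Since $\eta_{\mathrm{eff}}=\eta\sum_m p_m(1-q_m)$ scales linearly in $\eta$, while $T_{\mathrm{eff}}$ does not depend on $\eta$, the first term in $\epsilon_{\mathrm{opt}}$ becomes $\mathcal{O}(1/\sqrt{R})$, the second term $\eta_{\mathrm{eff}}T_{\mathrm{eff}}\kappa^2/\beta^2$ is $\mathcal{O}(1/\sqrt{R})$, and the third term $2\eta LA^2\sigma^2/T_{\mathrm{eff}}$ is $\mathcal{O}(1/\sqrt{R})$. All three components vanish at the same rate, yielding $\epsilon_{\mathrm{opt}}\to 0$ with convergence rate $\mathcal{O}(1/\sqrt{R})$.

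The subtle point, and the main thing to verify carefully, is the ``sufficiently small $\eta$'' hypothesis: it must simultaneously enforce $2\eta^2L^2A^2<1$ (necessary for Lemma~\ref{lemma: lemma 2} to yield a finite per-round local-variance bound) and $\rho(\eta,L,A)\le 3/4$ (needed for the descent coefficient to be strictly negative). Both reduce to an explicit upper bound on $\eta LA$, which must be compatible with the schedule $\eta\propto 1/\sqrt{R}$; consequently the $\mathcal{O}(1/\sqrt{R})$ conclusion is valid only once $R$ exceeds a constant threshold determined by $L$ and $A$. Verifying the $\beta^2\ge 1$ absorption step is also delicate, since the raw rearrangement produces $\beta^2$ in denominators of terms that the theorem statement writes without $\beta^2$, so the inequality is strictly loose by a factor $\beta^2$ that is absorbed into the stated bound.
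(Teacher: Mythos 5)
Your route is the same as the paper's: you invoke Lemma~\ref{Lemma: decent_surrogate}, impose the smallness condition $\rho(\eta,L,A)\le 3/4$ (which is literally the paper's condition $\tfrac{\eta^2L^2A^2}{1-2\eta^2L^2A^2}+\eta LA\tfrac{2\eta^2L^2A^2}{1-2\eta^2L^2A^2}+\eta LA\le\tfrac14$ rewritten), rearrange the per-round descent inequality, telescope, and set $\eta\propto 1/\sqrt{R}$. Your observations that $\beta^2\ge 1$ is needed to drop $\beta^2$ from the denominators of the first and third terms, and that the smallness condition forces $R$ above a constant threshold, are both correct and are in fact left implicit in the paper.

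There is, however, one step in your write-up that does not deliver the stated bound. After you replace $-\tfrac12+\rho$ by the constant $\tfrac14$, your per-round inequality has the dissimilarity term $\tfrac14\eta_{\mathrm{eff}}T_{\mathrm{eff}}\kappa^2$; dividing through by $\tfrac14\eta_{\mathrm{eff}}T_{\mathrm{eff}}\beta^2$ then leaves the \emph{constant} $\kappa^2/\beta^2$, not the theorem's $\eta_{\mathrm{eff}}T_{\mathrm{eff}}\kappa^2/\beta^2$. A non-vanishing $\kappa^2/\beta^2$ term would contradict the claim $\epsilon_{\mathrm{opt}}\to 0$, so the crude bound $-\tfrac12+\rho\le\tfrac14$ is too lossy for this particular coefficient. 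The fix is to retain the $\eta$-dependence: $-\tfrac12+\rho(\eta,L,A)=(2\eta LA+1)\tfrac{\eta^2L^2A^2}{1-2\eta^2L^2A^2}+\eta LA\le 2\eta LA$ for $\eta LA$ sufficiently small, so the term after division is at most $8\eta LA\kappa^2/\beta^2=\mathcal{O}(\eta)$, which vanishes at rate $\mathcal{O}(1/\sqrt{R})$ as required (albeit with a constant $8LA$ rather than the theorem's $\eta_{\mathrm{eff}}T_{\mathrm{eff}}/\eta$; matching the exact constant would require the additional condition $-\tfrac12+\rho\le\tfrac14\eta_{\mathrm{eff}}T_{\mathrm{eff}}$, which the paper does not state). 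To be fair, the paper's own appendix calls this passage ``a simple variation of \eqref{eq: 57e}'' and elides exactly the same step, so your proposal reproduces the published argument faithfully; but as written, your displayed inequality followed by the division you describe proves a weaker statement than the theorem, and you should make the $\mathcal{O}(\eta LA)$ bookkeeping for the $\kappa^2$ coefficient explicit.
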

\begin{proof}
    The proof is provided in Appendix \ref{appx: theo1}.
\end{proof}
\begin{theorem}[Convergence of the True Objective Function]\label{theo:converge_bound_true}
Under the same condition in Theorem \ref{theo: Convergence of the Surrogate}, if $\eta\propto \frac{1}{\sqrt{R}}$, the optimization error evaluated by the true objective function $F(\boldsymbol{X})$ is bounded by
\begin{align*}
    \lim_{R\rightarrow +\infty}\frac{1}{R}\sum_{r=1}^R \left\lVert \nabla F(\boldsymbol{X}_{r-1}) \right\rVert^2
    \leq \chi_{\boldsymbol{\omega}\Vert\boldsymbol{\Omega}}^2 \kappa^2.
    \numberthis
    \label{eq:converge_bound_true}
\end{align*}
Furthermore, the expected optimization error of $F(\boldsymbol{X})$ evaluated over data samples is bounded by
\begin{align*}
    \lim_{R\rightarrow +\infty}\frac{1}{R}\sum_{r=1}^R \mathbb{E}_\xi\left[\left\lVert \nabla F(\boldsymbol{X}_{r-1}) \right\rVert^2\right]
    \leq \chi_{\boldsymbol{\omega}\Vert\boldsymbol{\Omega}}^2 \kappa^2+\sigma^2.
    \numberthis
    \label{eq:converge_bound_true2}
\end{align*}
\end{theorem}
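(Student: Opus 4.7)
My plan is to piggy-back on Lemma~\ref{Lemma: decent_true}, which already expresses the true-gradient norm as a linear combination of the surrogate-gradient norm and a constant bias $\kappa\sqrt{\chi_{\boldsymbol{\omega}\Vert\boldsymbol{\Omega}}^2}$, and then import the vanishing of the surrogate gradient guaranteed by Theorem~\ref{theo: Convergence of the Surrogate}. The whole argument is essentially ``square, average, take the limit'', with Cauchy--Schwarz to sweep away a cross term; no new heavy lifting should be required.

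In detail, for \eqref{eq:converge_bound_true} I would square both sides of the bound in Lemma~\ref{Lemma: decent_true} and obtain, using $(a+b)^2 = a^2 + 2ab + b^2$,
\begin{align*}
\lVert \nabla F(\boldsymbol{X}_{r-1})\rVert^2
&\leq \bigl(\beta\sqrt{\chi_{\boldsymbol{\omega}\Vert\boldsymbol{\Omega}}^2}+1\bigr)^2 \lVert \nabla \tilde{F}(\boldsymbol{X}_{r-1})\rVert^2 \\
&\quad + 2\bigl(\beta\sqrt{\chi_{\boldsymbol{\omega}\Vert\boldsymbol{\Omega}}^2}+1\bigr)\kappa\sqrt{\chi_{\boldsymbol{\omega}\Vert\boldsymbol{\Omega}}^2}\,\lVert \nabla \tilde{F}(\boldsymbol{X}_{r-1})\rVert + \chi_{\boldsymbol{\omega}\Vert\boldsymbol{\Omega}}^2 \kappa^2.
\end{align*}
Averaging over $r=1,\dots,R$, the first (quadratic) term is controlled by Theorem~\ref{theo: Convergence of the Surrogate}: with $\eta\propto 1/\sqrt{R}$ we have $\epsilon_{\mathrm{opt}} = \mathcal{O}(1/\sqrt{R}) \to 0$, so its time-average vanishes. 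For the linear cross term I invoke Cauchy--Schwarz (equivalently Jensen's inequality with the square function), $\frac{1}{R}\sum_{r=1}^R \lVert \nabla\tilde{F}(\boldsymbol{X}_{r-1})\rVert \leq \sqrt{\frac{1}{R}\sum_{r=1}^R \lVert \nabla\tilde{F}(\boldsymbol{X}_{r-1})\rVert^2}$, which therefore vanishes as well. Only the residual constant $\chi_{\boldsymbol{\omega}\Vert\boldsymbol{\Omega}}^2 \kappa^2$ survives in the limit, proving \eqref{eq:converge_bound_true}.

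For \eqref{eq:converge_bound_true2} I would separate the stochastic contribution from the deterministic one by the standard bias--variance identity applied under Assumption~\ref{assump: unbiased}: for any fixed $\boldsymbol{X}$,
\begin{align*}
\mathbb{E}_\xi\bigl[\lVert \nabla F(\boldsymbol{X}\mid\xi)\rVert^2\bigr]
&= \lVert \nabla F(\boldsymbol{X})\rVert^2 + \mathbb{E}_\xi\bigl[\lVert \nabla F(\boldsymbol{X}\mid\xi)-\nabla F(\boldsymbol{X})\rVert^2\bigr] \\
&\leq \lVert \nabla F(\boldsymbol{X})\rVert^2 + \sigma^2,
\end{align*}
using that the cross term vanishes by unbiasedness and that the variance is uniformly bounded by $\sigma^2$. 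Averaging this inequality and applying \eqref{eq:converge_bound_true} to the deterministic piece immediately yields the bound $\chi_{\boldsymbol{\omega}\Vert\boldsymbol{\Omega}}^2 \kappa^2 + \sigma^2$.

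The step I expect to require the most care is not a calculation but a consistency check: Theorem~\ref{theo: Convergence of the Surrogate} provides a bound on $\frac{1}{R}\sum_r \mathbb{E}[\lVert \nabla \tilde{F}(\boldsymbol{X}_{r-1})\rVert^2]$ in expectation over the sampling/communication randomness, whereas \eqref{eq:converge_bound_true} is written without an expectation; I would either read both statements in the expected sense throughout or, equivalently, push the expectation through the squaring and Cauchy--Schwarz steps (both of which are preserved under $\mathbb{E}$ with appropriate Jensen directions). A minor secondary issue is that the linear-in-$\eta$ residuals inside $\epsilon_{\mathrm{opt}}$ must truly tend to zero, which they do under $\eta\propto 1/\sqrt{R}$ because $\eta_{\mathrm{eff}}T_{\mathrm{eff}}\to 0$ and $\eta LA^2/T_{\mathrm{eff}}\to 0$. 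No new technique beyond Lemma~\ref{Lemma: decent_true}, Theorem~\ref{theo: Convergence of the Surrogate}, Assumption~\ref{assump: unbiased}, and Cauchy--Schwarz is needed.
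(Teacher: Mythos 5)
Your proposal is correct and follows essentially the same route as the paper: both parts rest on the same ingredients, namely the Cauchy--Schwarz bound underlying Lemma~\ref{Lemma: decent_true}, the vanishing time-averaged surrogate gradient from Theorem~\ref{theo: Convergence of the Surrogate}, a Cauchy--Schwarz (Jensen) step to kill the cross term, and the bias--variance decomposition under Assumption~\ref{assump: unbiased} for the second bound. The only difference is cosmetic --- you square the forward triangle-inequality bound of Lemma~\ref{Lemma: decent_true} to get a direct upper bound on $\lVert \nabla F(\boldsymbol{X}_{r-1})\rVert^2$, whereas the paper lower-bounds $\lVert \nabla F-\nabla\Tilde{F}\rVert^2$ via the reverse triangle inequality and extracts the limit from the resulting implicit quadratic inequality --- and your remark about reading the statement in expectation (since Theorem~\ref{theo: Convergence of the Surrogate} is an expected bound) is a legitimate consistency fix that the paper glosses over.
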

\begin{proof}
    The proof is provided in Appendix \ref{appx: theo2}.
\end{proof}
\begin{remark}[Achievability of \eqref{eq:converge_bound_true}] Theorem \ref{theo:converge_bound_true} implies that $F(\boldsymbol{X})$ may converge to a persisting point 
or never converge. Compared to \cite[Theorem 2]{wangTackling}, our constructed convergence bound in \eqref{eq:converge_bound_true} is tight and achievable, 
implying the strict sub-optimality of regular FL algorithms in heterogeneous settings. 
The achievability proof can be found in Appendix \ref{Appx: achieve}
\end{remark}
\begin{corollary}[Distance Between the Inconsistent Solution and Consistent Solution] 
    The Euclidean distance between the resulting inconsistent convergence point $\Tilde{\boldsymbol{X}}^\star$ from heterogeneous FL and the true target $\boldsymbol{X}^\star$ in \eqref{eq: goal} is bounded by
    $\lVert \Tilde{\boldsymbol{X}}^\star-\boldsymbol{X}^\star \rVert\geq 
    \frac{1}{L}\lVert \nabla F(\Tilde{\boldsymbol{X}}^\star)\rVert$,  
where $\lVert \nabla F(\Tilde{\boldsymbol{X}}^\star)\rVert\in \left[0,\sqrt{\chi_{\boldsymbol{\omega}\Vert\boldsymbol{\Omega} }^2} \kappa\right]$. This indicates that  $\Tilde{\boldsymbol{X}}^\star$ can diverge arbitrarily far from $\boldsymbol{X}^\star$. 
\end{corollary}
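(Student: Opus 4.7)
The plan is to combine the $L$-smoothness of the global objective $F$ with the gradient bound for $F$ established in Lemma \ref{Lemma: decent_true}, evaluated at the surrogate stationary point. First, I would note that since $\boldsymbol{X}^\star$ is a minimizer of $F$ on $\mathbb{R}^d$, it is a stationary point, so $\nabla F(\boldsymbol{X}^\star) = \boldsymbol{0}$. Because $F$ is a convex combination of $L$-smooth functions $\{F_m\}$ under Assumption \ref{assump: Smooth}, it inherits the same $L$-smoothness. Applying the Lipschitz property of $\nabla F$ between the points $\Tilde{\boldsymbol{X}}^\star$ and $\boldsymbol{X}^\star$ yields
\begin{equation*}
\lVert \nabla F(\Tilde{\boldsymbol{X}}^\star) \rVert = \lVert \nabla F(\Tilde{\boldsymbol{X}}^\star) - \nabla F(\boldsymbol{X}^\star) \rVert \leq L \lVert \Tilde{\boldsymbol{X}}^\star - \boldsymbol{X}^\star \rVert,
\end{equation*}
which after rearrangement delivers the claimed lower bound $\lVert \Tilde{\boldsymbol{X}}^\star - \boldsymbol{X}^\star \rVert \geq \frac{1}{L}\lVert \nabla F(\Tilde{\boldsymbol{X}}^\star) \rVert$.

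Next, I would upper-bound $\lVert \nabla F(\Tilde{\boldsymbol{X}}^\star) \rVert$ by exploiting that $\Tilde{\boldsymbol{X}}^\star$ is, by definition, a stationary point of the surrogate objective, i.e.\ $\nabla \Tilde{F}(\Tilde{\boldsymbol{X}}^\star) = \boldsymbol{0}$. Substituting $\boldsymbol{X}_{r-1} \leftarrow \Tilde{\boldsymbol{X}}^\star$ in Lemma \ref{Lemma: decent_true} collapses the linear-in-$\lVert \nabla \Tilde{F} \rVert$ term and leaves only
\begin{equation*}
\lVert \nabla F(\Tilde{\boldsymbol{X}}^\star) \rVert \leq \kappa \sqrt{\chi_{\boldsymbol{\omega}\Vert\boldsymbol{\Omega}}^2}.
\end{equation*}
The trivial lower endpoint $0$ follows from non-negativity of the norm, and is attained when either $\kappa = 0$ (no inter-client dissimilarity beyond a common scaling) or $\chi_{\boldsymbol{\omega}\Vert\boldsymbol{\Omega}}^2 = 0$ (so that the weights $\boldsymbol{\Omega}$ induced by heterogeneity coincide with the target weights $\boldsymbol{\omega}$), consistent with the sampling-based correction later promised by \texttt{FedACS}.

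Chaining the two bounds then yields the full statement, with the upper end $\sqrt{\chi_{\boldsymbol{\omega}\Vert\boldsymbol{\Omega}}^2}\,\kappa$ of $\lVert \nabla F(\Tilde{\boldsymbol{X}}^\star) \rVert$ being achievable as already argued in the Remark after Theorem \ref{theo:converge_bound_true}. Nothing in the argument is technically hard; it is essentially a two-line consequence of smoothness plus Lemma \ref{Lemma: decent_true}. The only subtlety worth flagging is the implicit well-posedness of $\Tilde{\boldsymbol{X}}^\star$ as a stationary point of $\Tilde{F}$: I would briefly justify that Theorem \ref{theo: Convergence of the Surrogate} guarantees such a point is approached, so that the evaluation $\nabla \Tilde{F}(\Tilde{\boldsymbol{X}}^\star) = \boldsymbol{0}$ is legitimate, and note that the inequality in fact holds for \emph{any} stationary point of $\Tilde{F}$, not merely a global minimizer.
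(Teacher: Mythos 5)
Your proof is correct and follows exactly the route the paper implies: the paper states this corollary without an explicit proof, and your two ingredients---the $L$-Lipschitz gradient of $F$ applied between $\Tilde{\boldsymbol{X}}^\star$ and the stationary point $\boldsymbol{X}^\star$, plus Lemma~\ref{Lemma: decent_true} evaluated where $\nabla\Tilde{F}(\Tilde{\boldsymbol{X}}^\star)=\boldsymbol{0}$---are precisely the results the authors position immediately before the corollary for this purpose. Your added remarks on the well-posedness of $\Tilde{\boldsymbol{X}}^\star$ via Theorem~\ref{theo: Convergence of the Surrogate} and on the achievability of the upper endpoint are consistent with the paper's surrounding discussion.
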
\label{corollary2.1}
\begin{remark}[No Convergence Guarantee in Dynamic Heterogeneous FL]
Under the same condition in Theorem \ref{theo: Convergence of the Surrogate}, but with time-varying $\{T_m^{(r)}\}_{m=1}^M$ and $\{Z_m^r\sim \mathrm{Ber}(1-q_m^{(r)})\}_{m=1}^M$, the surrogate functions $\Tilde{F}_r(\boldsymbol{X})$ in each round are statistically non-identical for $r\in[R]$, leading to a potential divergence.
\end{remark}

\section{Communication-Computation Co-Design} \label{sec:comm_comp_design}
According to Theorem \ref{theo:converge_bound_true}, under regular participation schemes, the optimization dynamics converge to the correct global model, if and only if $\chi_{\boldsymbol{\omega}\Vert\boldsymbol{\Omega} }^2 = 0$, i.e., $\forall m: \Omega_m = \omega_m$. This further gives 
\begin{align}
    (1-q_m)\lVert \boldsymbol{a}_m \rVert_1=\sum_{m=1}^M \omega_m (1-q_m) \lVert \boldsymbol{a}_m \rVert_1.
\end{align}
In matrix form, this gives
\begin{align}
    \boldsymbol{W} \Tilde{\boldsymbol{q}}=\mathbf{0}, 
    \label{eq: consistent_consition_0}
\end{align}
where $\Tilde{\boldsymbol{q}}=[
        1-q_1,
        1-q_2,
        \cdots,
        1-q_M]^\top$ 
and
\begin{align}
    &\boldsymbol{W}=\notag\\
    &\begin{bNiceMatrix}
        (\omega_1-1) \lVert \boldsymbol{a}_1 \rVert_1 &\omega_2\lVert \boldsymbol{a}_2 \rVert_1
        &\cdots &\omega_M \lVert \boldsymbol{a}_M \rVert_1\\
        \omega_1 \lVert \boldsymbol{a}_1 \rVert_1 &(\omega_2-1) \lVert \boldsymbol{a}_2 \rVert_1
        &\cdots &\omega_M \lVert \boldsymbol{a}_M \rVert_1\\
        \vdots & \vdots  & \ddots &\vdots\\
        \omega_1 \lVert \boldsymbol{a}_1 \rVert_1 &\omega_2  \lVert \boldsymbol{a}_2 \rVert_1
        &\cdots &(\omega_M-1) \lVert \boldsymbol{a}_M \rVert_1\\
    \end{bNiceMatrix}    \label{eq: consistent_consition_1}
\end{align}
with $\sum_{m=1}^M \omega_m=1$. Let $\boldsymbol{w}_m$ the $m$-th row in $\boldsymbol{W}$. It holds that
\begin{align}
    \boldsymbol{w}_1+\sum_{m=2}^M \omega_m\cdot\boldsymbol{w}_m=\mathbf{0},
    \label{eq: equation_contraint}
\end{align}
indicating the coefficient matrix is rank-deficient. Thus, \eqref{eq: consistent_consition_0} has many non-trivial solutions with degree of freedom $1$.  
The solution to \eqref{eq: equation_contraint} leads to the following theorem. 
\begin{theorem}[Optimal Communication-Computation Co-Design]
In heterogeneous FL, the objective is consistent in regular participation schemes, e.g., IS, US, OS, and full participation, and the subsequent frameworks evolved from them, if and only if the heterogeneous communication and computation satisfy 
\begin{align}
    \forall m\in [M]: \lVert \boldsymbol{a}_m \rVert_1=\frac{1-q_1}{1-q_m}\lVert \boldsymbol{a}_1 \rVert_1. 
\end{align}
Accordingly, the effective learning rate and the common epoch are given by 
\begin{align}
    &\eta_{\mathrm{eff}}=\eta\sum_{m=1}^M \omega_m(1-q_m),\\
    &T_{\mathrm{eff}}=\frac{(1-q_1)\lVert  \boldsymbol{a}_1 \rVert_1}{\sum_{m=1}^M \omega_m(1-q_m)}.
    \label{eq: comm-comp design}
\end{align}
\end{theorem}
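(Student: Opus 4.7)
The plan is to reduce the biconditional to the componentwise condition $\Omega_m = \omega_m$ for all $m \in [M]$, and then solve this scalar system algebraically. First, I would invoke the achievability remark following Theorem \ref{theo:converge_bound_true}: the gradient-norm bound $\chi_{\boldsymbol{\omega}\Vert\boldsymbol{\Omega}}^2 \kappa^2$ is tight (not merely an upper bound), so under any regular participation scheme the iterates converge to the true minimizer of $F(\boldsymbol{X})$ if and only if $\chi_{\boldsymbol{\omega}\Vert\boldsymbol{\Omega}}^2 = 0$. Since the $\chi^2$-divergence between two probability vectors vanishes exactly when the vectors coincide, this reduces the theorem to characterizing all $(\{q_m\}, \{\lVert \boldsymbol{a}_m \rVert_1\})$ satisfying $\Omega_m = \omega_m$ for every $m$.

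Next, I would unfold the definitions $\Omega_m = \frac{\gamma_m \lVert \boldsymbol{a}_m \rVert_1}{\sum_k \gamma_k \lVert \boldsymbol{a}_k \rVert_1}$ and $\gamma_m = \frac{\omega_m (1-q_m)}{\sum_k \omega_k (1-q_k)}$ from Section \ref{sec: Objective Inconsistency}. A direct substitution collapses $\Omega_m = \omega_m$ to the scalar condition $(1-q_m)\lVert \boldsymbol{a}_m \rVert_1 = \sum_{k=1}^M \omega_k (1-q_k) \lVert \boldsymbol{a}_k \rVert_1 \triangleq C$, which is precisely the system already linearized in \eqref{eq: consistent_consition_0}--\eqref{eq: equation_contraint}. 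The key observation is that $C$ is the \emph{same} constant for every $m$, so the product $(1-q_m)\lVert \boldsymbol{a}_m \rVert_1$ must be a client-independent quantity. Evaluating at $m=1$ pins this constant down as $C = (1-q_1)\lVert \boldsymbol{a}_1 \rVert_1$, and equating with a generic index yields the design principle $\lVert \boldsymbol{a}_m \rVert_1 = \frac{1-q_1}{1-q_m}\lVert \boldsymbol{a}_1 \rVert_1$. The converse (sufficiency) is immediate: if this principle holds, then $(1-q_m)\lVert \boldsymbol{a}_m \rVert_1$ is constant, so it equals its own convex combination $C$, returning $\Omega_m = \omega_m$.

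For the effective-parameter formulas, I would simply substitute. Since the listed regular schemes (IS, US, OS, full participation) satisfy $p_m = \omega_m$ under the FL goal \eqref{eq: goal}, the expression $\eta_{\mathrm{eff}} = \eta \sum_m p_m (1-q_m)$ immediately gives the stated form. For $T_{\mathrm{eff}} = \sum_m \gamma_m \lVert \boldsymbol{a}_m \rVert_1$, plug in the principle so that each summand becomes $\gamma_m \cdot \frac{(1-q_1)\lVert \boldsymbol{a}_1 \rVert_1}{1-q_m}$; using $\frac{\gamma_m}{1-q_m} = \frac{\omega_m}{\sum_k \omega_k (1-q_k)}$ and $\sum_m \omega_m = 1$, the sum telescopes to $\frac{(1-q_1)\lVert \boldsymbol{a}_1 \rVert_1}{\sum_m \omega_m (1-q_m)}$, matching \eqref{eq: comm-comp design}.

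The main obstacle is the necessity direction, which rests entirely on the tightness asserted in the achievability remark rather than on Theorem \ref{theo:converge_bound_true} alone; without tightness, $\chi_{\boldsymbol{\omega}\Vert\boldsymbol{\Omega}}^2 > 0$ would only upper-bound the gradient norm, not force persistent bias. A secondary subtlety is interpreting the rank-one deficiency flagged in \eqref{eq: equation_contraint}: the one-dimensional solution family does not reflect any ambiguity in the inter-client ratios $\lVert \boldsymbol{a}_m \rVert_1 / \lVert \boldsymbol{a}_1 \rVert_1$, which are uniquely determined by $(1-q_1)/(1-q_m)$, but rather an overall scale freedom (the common value of $C$, equivalently the choice of $\lVert \boldsymbol{a}_1 \rVert_1$) that is fixed implicitly by the chosen local solver.
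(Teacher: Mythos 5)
Your proposal is correct and follows essentially the same route as the paper: both reduce consistency to $\Omega_m=\omega_m$ via $\chi^2_{\boldsymbol{\omega}\Vert\boldsymbol{\Omega}}=0$, observe that this forces $(1-q_m)\lVert\boldsymbol{a}_m\rVert_1$ to be a client-independent constant (the paper packages this as the rank-deficient system $\boldsymbol{W}\Tilde{\boldsymbol{q}}=\mathbf{0}$, you solve the scalar equation directly), and then substitute into the definitions of $\eta_{\mathrm{eff}}$ and $T_{\mathrm{eff}}$. Your explicit remarks that the necessity direction rests on the achievability of the bound in Theorem \ref{theo:converge_bound_true} (not on the upper bound alone) and that the one degree of freedom is an overall scale fixed by $\lVert\boldsymbol{a}_1\rVert_1$ are both accurate and make the argument slightly more careful than the paper's own presentation.
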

It can be verified that the case for homogeneous communication and computation, i.e., when $\forall m\in[M]: q_m=q, \lVert \boldsymbol{a}_m \rVert_1= \lVert \boldsymbol{a} \rVert_1$, satisfying \eqref{eq: consistent_consition_1}. 
Homogeneous communication and computation are a sufficient but not necessary condition to avoid objective inconsistency in FL. 
\section{\texttt{FedACS}: The Proposed framework to Tackle Objective Inconsistency}\label{sec: FedACS}
\begin{figure*}[!htb]
    \begin{minipage}[b]{1\textwidth}
    \vspace{-10pt}
    \centering
    \includegraphics[width=0.9\linewidth]{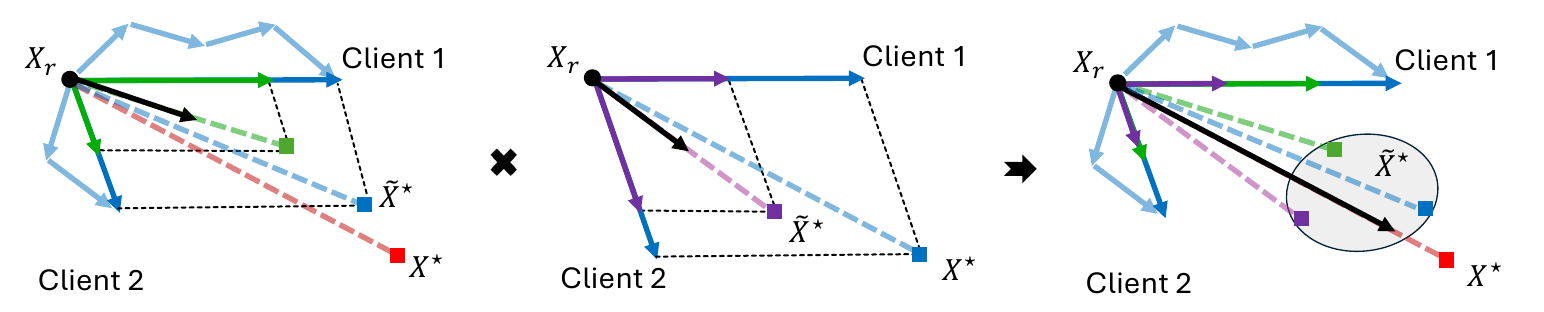}
    \vspace{-6pt} 
    \caption{\small Illustration of the proposed \texttt{FedACS} (purple) to counterbalance the objective inconsistency induced by heterogeneous communication (green) and computation (blue).}
    \vspace{-6pt}
   \label{fig:adap_CS}
   \end{minipage}
\end{figure*}
\subsection{Method}
The rationale behind is to design a biased sampling strategy that counteracts the biased aggregation arising from heterogeneous communication $q_m^{(r)}$ and computation $\boldsymbol{a}_m^{(r)}$. Theorem \ref{theo:converge_bound_true} suggests if the clients are sampled independently with probability $p_m^{(r)}$ such that the non-vanishing term $\chi_{\boldsymbol{\omega}\Vert\boldsymbol{\Omega}^{(r)}}^2=0$ in each round, the FL process will converge to the pursued global optimum in \eqref{eq: goal}, as shown in Figure  \ref{fig:adap_CS}, i.e., $\forall m\in [M]$,
\begin{subequations}
\begin{align}
    (1-q_m^{(r)})\lVert\boldsymbol{a}_m^{(r)}
    \rVert_1\cdot  p_m^{(r)}
    &=\sum_{m=1}^M \omega_m(1-q_m^{(r)})\lVert\boldsymbol{a}_m^{(r)}
    \rVert_1 \cdot p_m^{(r)},\\
    &\hspace{-10mm}\mathrm{s.t.}\; p_m^{(r)}\in (0,1), \sum_{m=1}^Mp_m^{(r)}=1.
    \label{eq: adaptive_CS}
\end{align}
\end{subequations}
The unique solution is of the following form, 
\begin{align}
    p_m^{(r)}=\frac{\frac{ \omega_m}{(1-q_m^{(r)})\lVert\boldsymbol{a}_m^{(r)}
    \rVert_1}}{\sum_{m=1}^M \frac{ \omega_m}{(1-q_m^{(r)})\lVert\boldsymbol{a}_m^{(r)}
    \rVert_1}}.
    \label{eq: adap_prob}
\end{align}
In \texttt{FedACS}, $K$ clients are sampled adaptively with replacement according to \eqref{eq: adap_prob}, at server aggregation, it yields the following new algorithm: 
\begin{align}
    &[\texttt{FedACS}]\hspace{5mm} \boldsymbol{X}_{r}-\boldsymbol{X}_{r-1}\notag\\
    &=-\underbrace{\eta\frac{\sum_{m=1}^M\frac{\omega_m}{\lVert\boldsymbol{a}_m^{(r)}
    \rVert_1}}{\sum_{m=1}^M\frac{\omega_m}{(1-q_m^{(r)})\lVert\boldsymbol{a}_m^{(r)}
    \rVert_1}}}_{ \eta_{\mathrm{eff}}^{(r)}} \cdot \underbrace{\frac{1}{\sum_{m=1}^M\frac{\omega_m}{\lVert\boldsymbol{a}_m^{(r)}
    \rVert_1}}}_{T_{\mathrm{eff}}^{(r)}}
   \sum_{m=1}^M \omega_m \nabla\Bar{\boldsymbol{F}}_m^{r}.
   \label{eq: fedACS_update}
\end{align}
\begin{remark}[Impact on Step Length]
Compared to \texttt{FedAvg} where $p_m^{(r)}=w_m$ in \eqref{eq: obj_inconsist_comm_comp}, 
equation \eqref{eq: fedACS_update} shows 
a variation in the effective step length given by $\eta_{\mathrm{eff}}^{(r)}T_{\mathrm{eff}}^{(r)} = \frac{\eta}{\sum_{m=1}^M \frac{ \omega_m}{(1-q_m^{(r)})\lVert\boldsymbol{a}_m^{(r)}\rVert_1}}$, which can either increase or decrease. This phenomenon is common in state-of-art works \cite{wang2024a,xiang2024efficient,wangTackling,jhunjhunwala2022fedvarp}. 
While such variability may impact convergence speed, the optimality of algorithms remains unaffected. 
\end{remark}

\begin{remark}[Compatibility with Privacy-Preserving Mechanisms]
The \texttt{FedACS} algorithm employs anonymous aggregation to protect client identities and inherits the \texttt{FedAvg} framework, ensuring compatibility with popular Gaussian mechanisms and various secure aggregation frameworks.
\end{remark}
\subsection{Convergence Analysis} 
In dynamic settings, the optimized function $F_r(\boldsymbol{X})$  per round is identical to $F(\boldsymbol{X})$ in \texttt{FedACS}, thus it is guaranteed to converge towards the pursued global optimum in \eqref{eq: goal}. 
\begin{theorem}[Convergence of \texttt{FedACS} to a Consistent Solution in Dynamic Heterogeneous Setting]\label{theo: FedACS}
In \texttt{FedACS}, clients are sampled independently according to \eqref{eq: adap_prob}, thus the aggregation is unbiased in each round. Therefore, \texttt{FedACS} will converge to the true target defined in \eqref{eq: goal}, and it yields that 
    \begin{align}
        &\frac{1}{R}\sum_{r=1}^R \mathbb{E}\left[ \left\lVert \nabla F(\boldsymbol{X}_{r-1}) \right\rVert^2 \right] \notag\\
        &\leq \frac{4 B}{R\eta}\left( \mathbb{E}\left[ F(\boldsymbol{X}_0) \right]-F^\star \right)+\eta\Bar{C}\frac{\kappa^2}{\beta^2}+2\eta L \Bar{D}\sigma^2,
    \end{align}
where $B$, $\Bar{C}$, $\Bar{D}$ are constants in \eqref{eq: defineABC}. If $\eta  \propto \frac{1}{\sqrt{R}}$, \texttt{FedACS} converges with rate $\mathcal{O}\left(\frac{B}{\sqrt{R}}\right)+\mathcal{O}\left(\frac{\Bar{C}\kappa^2}{\sqrt{R}\beta^2} \right)+\mathcal{O}\left( \frac{\Bar{D}\sigma^2}{\sqrt{R}}\right)$. 
\end{theorem}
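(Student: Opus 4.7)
The plan is to leverage the fact that the adaptive probabilities in \eqref{eq: adap_prob} are precisely engineered so that the per-round weights satisfy $\Omega_m^{(r)} = \omega_m$ for every $m$ and every $r$. Consequently, the per-round surrogate $\Tilde{F}_r(\boldsymbol{X}) = \sum_{m=1}^M \Omega_m^{(r)} F_m(\boldsymbol{X})$ collapses to the true objective $F(\boldsymbol{X})$ in \eqref{eq: goal} in every round, which is exactly the inconsistency-killing condition identified in Theorem~\ref{theo:converge_bound_true} applied round-wise. The divergence-inducing term $\chi_{\boldsymbol{\omega}\Vert\boldsymbol{\Omega}^{(r)}}^2$ therefore vanishes identically, and the analysis reduces to the (dynamic analog of the) surrogate descent already worked out in Lemma~\ref{Lemma: decent_surrogate}.

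First, I would substitute $\Omega_m^{(r)} = \omega_m$ into Lemma~\ref{Lemma: decent_surrogate}, which transforms the one-round descent inequality \eqref{eq:one_round_theorem_1} into an inequality on $\mathbb{E}[F(\boldsymbol{X}_r)] - \mathbb{E}[F(\boldsymbol{X}_{r-1})]$. In the dynamic setting the per-round effective quantities read
\begin{align*}
    \eta_{\mathrm{eff}}^{(r)} T_{\mathrm{eff}}^{(r)}
    = \frac{\eta}{\sum_{m=1}^M \frac{\omega_m}{(1-q_m^{(r)})\lVert \boldsymbol{a}_m^{(r)}\rVert_1}},
\end{align*}
and analogously for the variance-bearing prefactor $\eta_{\mathrm{eff}}^{(r)} \eta L (A^{(r)})^2$. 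With $\eta$ chosen sufficiently small so that the factor $\rho(\eta,L,A^{(r)})$ stays bounded below $1$ uniformly in $r$, the descent inequality yields
\begin{align*}
    \mathbb{E}[F(\boldsymbol{X}_r) - F(\boldsymbol{X}_{r-1})]
    \le -\tfrac{1}{4}\eta_{\mathrm{eff}}^{(r)} T_{\mathrm{eff}}^{(r)} \beta^2
        \mathbb{E}\bigl[\lVert \nabla F(\boldsymbol{X}_{r-1})\rVert^2\bigr]
        + \tfrac{1}{4}\eta_{\mathrm{eff}}^{(r)} T_{\mathrm{eff}}^{(r)} \kappa^2
        + \tfrac{1}{2}\eta_{\mathrm{eff}}^{(r)} \eta L (A^{(r)})^2 \sigma^2
\end{align*}
up to the $\rho$-correction absorbed into the coefficients.

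Next I would telescope this inequality from $r=1$ to $R$. The delicate step is that the coefficient of $\mathbb{E}[\lVert\nabla F(\boldsymbol{X}_{r-1})\rVert^2]$ is round-dependent, so before dividing I would introduce the uniform lower bound $\eta_{\min} \defeq \min_r \eta_{\mathrm{eff}}^{(r)} T_{\mathrm{eff}}^{(r)}$ on the left and uniform upper bounds $\bar{C} \defeq \max_r \eta_{\mathrm{eff}}^{(r)} T_{\mathrm{eff}}^{(r)}/\eta$ and $\bar{D} \defeq \max_r \eta_{\mathrm{eff}}^{(r)} (A^{(r)})^2/\eta$ on the right-hand noise terms (these uniform extrema exist because $q_m^{(r)}\in[0,1)$ and $\lVert \boldsymbol{a}_m^{(r)}\rVert_1$ is a positive number of local iterations). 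Setting $B \defeq \eta/\eta_{\min}$ absorbs the mismatch between $\eta_{\min}$ and the nominal $\eta$. Rearranging after telescoping gives exactly the stated bound
\begin{align*}
    \frac{1}{R}\sum_{r=1}^R \mathbb{E}\bigl[\lVert \nabla F(\boldsymbol{X}_{r-1}) \rVert^2 \bigr]
    \le \frac{4B}{R\eta}\bigl(\mathbb{E}[F(\boldsymbol{X}_0)] - F^\star\bigr)
        + \eta \bar{C} \frac{\kappa^2}{\beta^2}
        + 2\eta L \bar{D}\sigma^2.
\end{align*}
Finally, plugging in $\eta \propto 1/\sqrt{R}$ balances the $1/(R\eta)$ term against the $\eta$ terms, producing the announced $\mathcal{O}(B/\sqrt{R}) + \mathcal{O}(\bar{C}\kappa^2/(\sqrt{R}\beta^2)) + \mathcal{O}(\bar{D}\sigma^2/\sqrt{R})$ rate.

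The main obstacle is not the descent algebra, which is inherited almost verbatim from Lemma~\ref{Lemma: decent_surrogate} and Theorem~\ref{theo: Convergence of the Surrogate}, but rather justifying that in a dynamic environment the telescoped inequality remains meaningful. In particular, one must verify that $\eta_{\mathrm{eff}}^{(r)} T_{\mathrm{eff}}^{(r)}$ never collapses (otherwise $B$ blows up) and that the $\rho(\eta,L,A^{(r)}) < 1$ condition used to absorb the local-drift term holds uniformly across rounds. Both are handled by requiring $\eta$ to satisfy $2\eta L A^{(r)} < 1$ for the worst-round $A^{(r)}$, which is consistent with the $\eta\propto 1/\sqrt{R}$ regime for sufficiently large $R$, and then the constants $B, \bar{C}, \bar{D}$ encode the resulting worst-case factors cleanly.
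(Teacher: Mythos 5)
Your proposal follows essentially the same route as the paper's proof: it exploits $\Omega_m^{(r)}=\omega_m$ to identify the per-round surrogate with the true objective $F$, reuses the descent inequality from Lemma~\ref{Lemma: decent_surrogate} (equivalently \eqref{eq:one_round_theorem_1}) round by round, and then telescopes while absorbing the round-varying effective step sizes into the constants $B$, $\Bar{C}$, $\Bar{D}$. The only cosmetic difference is that the paper defines $\Bar{C}$ and $\Bar{D}$ as round-averages of $\sum_m p_m^{(r)}(1-q_m^{(r)})$ and $A_r^2/T_{\mathrm{eff}}^{(r)}$ rather than worst-case maxima, which yields slightly tighter (but otherwise equivalent) constants.
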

\begin{proof}
    The proof of Theorem \ref{theo: FedACS} is provided in Appendix \ref{appx: theo 4}.
\end{proof}

\vspace{-0.5em}
\section{Numerical Experiments}\label{sec: simulation}
\subsection{System Setups} 
We evaluate the performance of the proposed \texttt{FedACS} algorithm by comparing it against six state-of-the-art benchmark frameworks, each representing a distinct class of frameworks based on similar principles: \texttt{FedAvg}, and the communication-aware variant \texttt{c-a-FedAvg}~\cite{perazzone2022communication}, \texttt{FedVarp} \cite{jhunjhunwala2022fedvarp}, \texttt{FedNova} \cite{wangTackling},  \texttt{FedAU} \cite{wang2024a}, OS \cite{chen2020optimal}. In each communication round, a fraction of $0.3$ of the clients is randomly selected out of $M=20$ clients to participate in training.  
The clients are divided into two groups. For the first half, local iterations $T_m^{(r)}\sim\mathcal{U}(1, 10)$ and communication probabilities $q_m^{(r)}\sim\mathcal{U}(0.4, 0.5)$; for the second half, $T_m^{(r)}\sim\mathcal{U}(20, 30)$ and $q_m^{(r)}\sim\mathcal{U}(0.0, 0.1)$. This models a heterogeneous system where clients exhibit varying levels of computation and communication capabilities.    

The experiments are performed on image classification tasks on the MNIST~\cite{lecun1998gradient}, CIFAR-10~\cite{krizhevsky2009learning}, and CINIC-10~\cite{darlow2018cinic} datasets. 
The data distribution is non-IID across clients. For the MNIST dataset, an extreme scenario is adopted by assigning only one class to each client. 
For CIFAR-10 and CINIC-10 datasets, we use the Dirichlet distribution to partition the datasets. For a given concentration parameter $\alpha$, the Dirichlet distribution controls the degree of skewness of the label distribution among the clients. A lower $\alpha$ indicates more imbalanced and disjoint class distributions. The $\alpha$ is set to $0.1$. 
The employed NNs and hyper-parameter are specified in Table \ref{tbl: cnn structures}. 

\begin{table}[!ht]
\caption{
\footnotesize
NN architecture, loss function, learning rate, and batch size specifications}
\label{tbl: cnn structures}
\resizebox{\linewidth}{!}{
\begin{tabular}{cccc}
\toprule
{\bf Datasets}& 
{\bf MNIST} & 
{\bf CIFAR-10} & 
{\bf CINIC-10} \\
\toprule
Neural network &
CNN &
CNN &
CNN \\
Model architecture$^*$ & 
\begin{tabular}{p{.18\textwidth}}
\centering
{\bf C}(1,10)
-- {\bf C}(10,20)
-- {\bf D}
-- {\bf L}(50)
-- {\bf L}(10)
\end{tabular}
&
\begin{tabular}{p{.18\textwidth}}
\centering
{\bf C}(3,32)
-- {\bf R}
-- {\bf M}
-- {\bf C}(32,32)
-- {\bf R}
-- {\bf M}
-- {\bf L}(256)
-- {\bf R}
-- {\bf L}(64)
-- {\bf R}
-- {\bf L}(10)
\end{tabular}
&
\begin{tabular}{p{.18\textwidth}}
\centering
{\bf C}(3,32)
-- {\bf R}
-- {\bf M}
-- {\bf C}(32,32)
-- {\bf R}
-- {\bf M}
-- {\bf D}
-- {\bf L}(512)
-- {\bf R}
-- {\bf D}
-- {\bf L}(256)
-- {\bf R}
-- {\bf D}
-- {\bf L}(10)
\end{tabular} \\
\midrule
Loss function &
\multicolumn{2}{c}{Negative log likelihood loss} &\multicolumn{1}{c}{Cross-entropy loss} \\
\addlinespace[1ex]
Learning rate $\eta_1$
&
\multicolumn{2}{c}{$\eta_1=0.02$} & \multicolumn{1}{c}{$\eta_1=0.03$} \\ 
\addlinespace[1ex] 
Number of global rounds $T$ &
\multicolumn{3}{c}{200} 
\\
\midrule
Batch size &
\multicolumn{2}{c}{512} &\multicolumn{1}{c}{1024 } \\
\bottomrule
\end{tabular}}
\vskip.2\baselineskip
\begin{tabular}{p{.47\textwidth}}
$^*$
\begin{footnotesize}    
{\bf C}(\# in-channel, \# out-channel): a 2D convolution layer (kernel size 3, stride 1, padding 1);
{\bf R}: ReLU activation function;
{\bf M}: a 2D max-pool layer (kernel size 2, stride 2);
{\bf L}: (\# outputs): a fully-connected linear layer;
{\bf D}: a dropout layer (probability 0.2).
\end{footnotesize}
\end{tabular}
\end{table}
\vspace{-0.5em}
\subsection{Optimal Communication-Computation Co-Design}  
The learning rate is set to $\eta=0.001$ in FedAvg.  In the optimal communication-computation design, $q_1=0.01$ remains the same, and $q_m$, $m=2, \cdots, M,$ are re-designed according to \eqref{eq: comm-comp design}. To ensure a fair comparison, the effective step size $\eta_{\mathrm{eff}}T_{\mathrm{eff}}$ is ensured to be equal to that in FedAvg.

Fig. \ref{fig:comm-comp co-design} compares the convergence behavior of different optimizers and the proposed optimal communication-computation co-design in terms of the log-distance to the true optimum versus communication rounds. 
The proposed communication–computation design consistently achieves the most accurate convergence, constantly approaching the true optimum compared with FedAvg, FedProx, SGD with learning-rate decay, and Momentum SGD. 
In contrast, the baseline frameworks exhibit slower or plateauing convergence due to biased optimization dynamics, whereas the proposed design continues to reduce the optimality gap as communication rounds increase, demonstrating the optimality of the design rule under heterogeneous settings.

\begin{figure}
    \centering
    \includegraphics[width=0.8\linewidth]{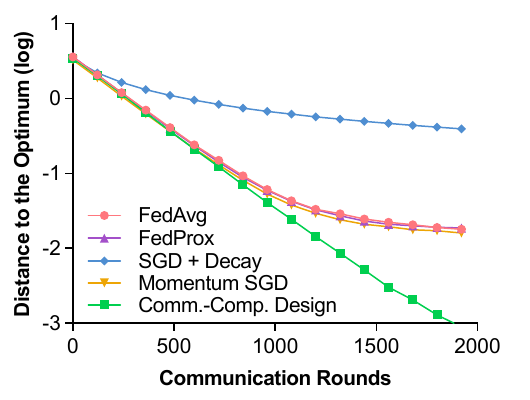}
    \caption{Verification of optimal communication-computation design principle. In a heterogeneous setting, $T_m$ takes fixed values in $\{2,3\}$, where the first $20$ clients have $T_m = 2$ and the last 10 clients have $T_m = 3$, and $q_m$ from $0.01$ to $0.3$ across clients. Other parameters remain the same as in Fig. \ref{fig:examples}.
    }
    \label{fig:comm-comp co-design}
\end{figure}

\begin{figure*}[!htb]
\vspace{-2mm}
    \centering
    \begin{minipage}[b]{0.3\textwidth}
    \centering
    \includegraphics[width=1\linewidth]{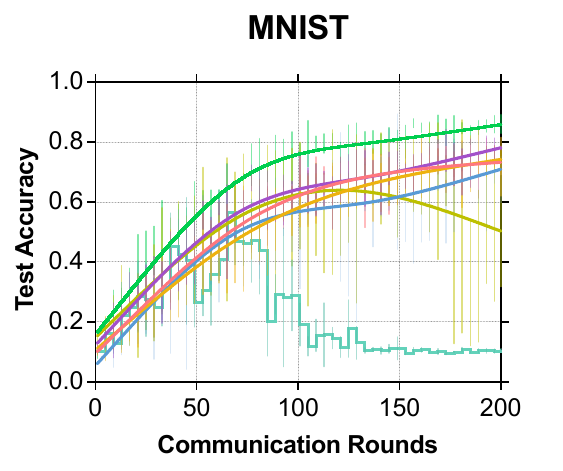}
    \end{minipage}
    \hfill
    \begin{minipage}[b]{0.3\textwidth}
    \centering
    \includegraphics[width=1\linewidth]{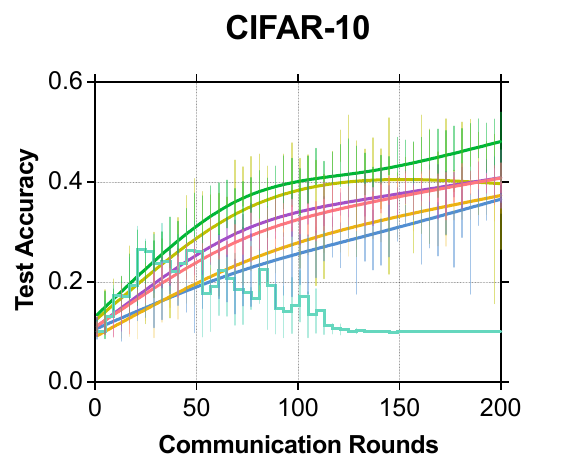}
    \end{minipage}
    \hfill
    \begin{minipage}[b]{0.3\textwidth}
    \centering
    \includegraphics[width=1\linewidth]{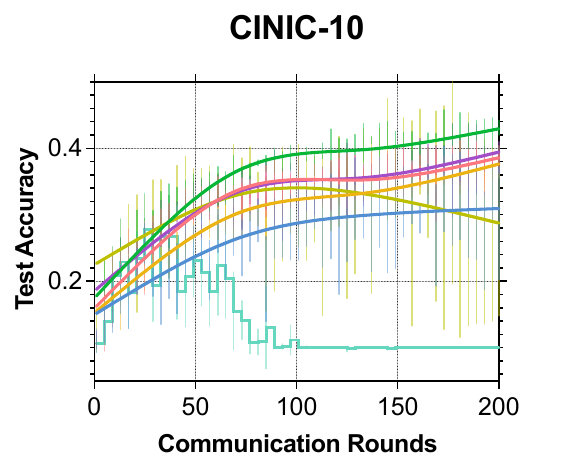}
    \end{minipage}
    \vspace{-0mm}
    \begin{minipage}[b]{1\textwidth}
    \centering
    \vspace{0mm}
    \includegraphics[width=0.8\linewidth]{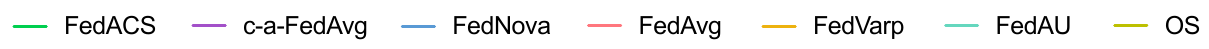}
    \end{minipage}
    \vspace{-7mm}
    \caption{\small \textbf{Test Accuracy of algorithms over various datasets in dynamic heterogeneous FL.} The average test accuracy over multiple runs is smoothed and plotted for each algorithm. Vertical bars indicate the variance of ra wat the corresponding communication round, reflecting the stability of each algorithm.
    } \label{fig:dynamic_exp}
    \vspace{-4mm}
\end{figure*}
\subsection{Superiority of FedACS}   
To ensure a fair comparison of each algorithm’s ability to reach the correct optimum, all benchmark frameworks are calibrated to operate with the same effective step size by matching the learning rate $\eta$ \footnote{Step length calibration is detailed in Appendix \ref{appx: step_carlibration}.}. This eliminates confounding factors, allowing performance differences to reflect each algorithm’s ability to handle objective inconsistency precisely.

\textbf{Convergence to the Consistent Solution.} In Figure \ref{fig:dynamic_exp}, the results show that \texttt{FedACS} outperforms all benchmark frameworks achieving improvements in test accuracy of $7.5\%$-$36\%$ on MNIST, $7\%$-$11\%$ on CIFAR-10, and $4.3\%$-$15\%$ on CINIC-10. This benefits from its ability to address both statistical and structural objective inconsistencies, effectively handling heterogeneous communication and computation and allowing it to converge to the correct global optimum, while other benchmarks cannot.

The \texttt{FedAvg} fails to address any form of objective inconsistency. In contrast, \texttt{c-a-FedAvg} accounts for communication heterogeneity and yields slight performance gains, albeit with increased variance in the global model. The \texttt{FedVarp} reuses previously observed local updates; however, due to non-stationary statistics across training rounds, it fails to resolve any form of objective inconsistency as well. Worse, it can introduce misleading statistics, further destabilizing the training process. The \texttt{FedNova} addresses computational heterogeneity, but its performance degrades under heterogeneous communication due to the resulting increase in variance.
The OS reflected the performance of a class of frameworks, including \texttt{DELTA}, etc., \cite{chen2020optimal,luo2022tackling}. These frameworks determine sampling probabilities without considering system heterogeneity, relying on alternative factors; as a result, their performance tends to be opportunistic and sub-optimal in heterogeneous FL. The \texttt{FedAU}  is representative of the group taking account for prior connectivity realization of clients, e.g., \texttt{FedAWE} \cite{xiang2024efficient}. While \texttt{FedAU} aims to mitigate statistical inconsistency, their performance presents less stability as the accumulation of gradients after consecutive transmission failures is too large and leads to divergence. 

\begin{table*}[!htb]
 \centering
 \vspace{0pt}
 \caption{\textbf{Training efficiency of algorithms over various datasets in dynamic heterogeneous FL.} We report the average number of communication rounds and total computation time required to reach the predefined accuracy thresholds for the first time.} \label{table: dynamic_sota}
 \vspace{-.em}
    \resizebox{1.\textwidth}{!}{
  \begin{tabular}{l c c c c c c c c c c}
   \toprule
   \multirow{2}{*}{Algorithm} & \multicolumn{2}{c}{MNIST }& \multicolumn{2}{c}{CIFAR-10} & \multicolumn{2}{c}{CINIC-10} \\
   \cmidrule(lr){2-3} \cmidrule(lr){4-5}  \cmidrule(lr){6-7} 
            & Rounds for 70\% & Time (s) for 70\% 
            & Rounds for 40\% & Time (s) for 40\%
            & Rounds for 35\% & Time (s) for 35\%     \\
   \midrule
   \rowcolor{lightgray} \texttt{FedACS}     & \textbf{68\small {\transparent{0.5} (1.0$\times$)}}   &\textbf{3548.6 \small {\transparent{0.5} (1.0$\times$)}}  & \textbf{100 \small {\transparent{0.5} (1.0$\times$)}}& \textbf{6928.9 \small {\transparent{0.5} (1.0$\times$)}} & \textbf{55 \small {\transparent{0.5} (1.0$\times$)}} & \textbf{11452.3 \small {\transparent{0.5} (1.0$\times$)}} \\
   \texttt{FedAvg}     & 93\small {\transparent{0.5} (1.37$\times$)}  & 5470.4 \small {\transparent{0.5} (1.38$\times$)} &  143 \small {\transparent{0.5} (1.43$\times$)} & 12482.0 \small {\transparent{0.5} (1.80$\times$)} & 74 \small {\transparent{0.5} (1.34$\times$)} &14550.0 \small {\transparent{0.5} (1.27$\times$)} \\
\texttt{c-a-FedAvg}    & 84\small {\transparent{0.5} (1.24$\times$)}  & 4884.4 \small {\transparent{0.5} (1.51$\times$)}&  130 \small {\transparent{0.5} (1.30$\times$)} & 7917.0\small {\transparent{0.5} (1.14$\times$)} &67 \small {\transparent{0.5} (1.22$\times$)} &13256.8\small {\transparent{0.5} (1.16$\times$)} \\ 
\texttt{FedVarp}         & 115\small {\transparent{0.5} (1.69$\times$)}  & 7289.9 \small {\transparent{0.5} (2.05$\times$)} &  165 \small {\transparent{0.5} (1.65$\times$)} & 13479.8\small {\transparent{0.5} (1.95$\times$)} & 104 \small {\transparent{0.5} (1.89$\times$)} &22200.5\small {\transparent{0.5} (1.94$\times$)}  \\
\texttt{FedNova}   & 99\small {\transparent{0.5} (1.46$\times$)}  & 6043.4 \small {\transparent{0.5} (1.70$\times$)} &  161 \small {\transparent{0.5} (1.61$\times$)} & 13314.8\small {\transparent{0.5} (1.92$\times$)} & 100 \small {\transparent{0.5} (1.81$\times$)} &21802.2 \small {\transparent{0.5} (1.90$\times$)} \\ 
\texttt{FedAU}         & --  & --  &  --  & -- & -- &--  \\ 
OS        & 72\small {\transparent{0.5} (1.06$\times$)}  & 4391.9 \small {\transparent{0.5} (1.24$\times$)} &  80 \small {\transparent{0.5} (0.8$\times$)} & 5095.3 \small {\transparent{0.5} (0.74$\times$)} & 65 \small {\transparent{0.5} (1.18$\times$)} &14248.9\small {\transparent{0.5} (1.24$\times$)} \\
   \bottomrule
  \end{tabular}
  }
\vskip.2\baselineskip
\begin{tabular}{p{.95\textwidth}}
\begin{footnotesize}    
A dash (–) indicates a failure to reach the specified threshold within the entire training process.
\end{footnotesize}
\end{tabular}
  \vspace{-8pt}
\end{table*}

\textbf{Communication and Computation Efficiency.} As shown in Table~\ref{table: dynamic_sota}, \texttt{FedACS} consistently achieves the target accuracy thresholds across different datasets within fewer communication rounds than \texttt{FedAvg}, \texttt{c-a-FedAvg}, \texttt{FedVarp}, \texttt{FedNova}, and \texttt{FedAU}, achieving a reduction of $24\%$-$69\%$ on MNIST, $30\%$-$65\%$ on CIFAR-10, and $22\%$-$89\%$ on CINIC-10. This superior performance is due to \texttt{FedACS}'s ability to guide the global model toward the true optimum. 
The \texttt{FedACS} facilitates more accurate convergence, thereby is faster, while the baseline frameworks often converge to biased stationary points around the true global optimum.
In addition, \texttt{FedACS} also demonstrates significantly lower computation time in reaching the thresholds, achieving a reduction of $38\%$-$105\%$ on MNIST, $14\%$-$95\%$ on CIFAR-10, $16\%$-$94\%$ on CINIC-10. This benefit is not solely a result of improved convergence direction, but also stems from its adaptive client selection strategy. Specifically, \texttt{FedACS} tends to prioritize clients with fewer local training epochs, as long as clients with larger workloads do not suffer from extremely poor communication reliability. This implicit selection preference contributes to overall training efficiency by reducing computational load without sacrificing convergence quality. In conclusion, \texttt{FedACS} is computation-communication-efficient. The OS is not stable in achieving the threshold accuracy and is strictly sub-optimal.

\vspace{-0.5em}
\section{Conclusion}\label{sec: conclusion}
System heterogeneity can significantly degrade the FL performance. Despite its practical prevalence, this issue remains largely underexplored in general heterogeneous settings. To bridge this gap, we conduct a rigorous analysis of the optimization dynamics under heterogeneous conditions, revealing distinct mechanisms through which communication and computation heterogeneity induce objective inconsistency.
Our findings show that existing state-of-the-art frameworks are insufficient when both forms of heterogeneity are present. In contrast, we proposed \texttt{FedACS}, a universal framework designed to handle all types of inconsistencies. Both theoretical analysis and empirical results confirm the effectiveness and robustness of \texttt{FedACS} in heterogeneous FL.
Future research may explore how our findings can be integrated into existing frameworks in broader FL scenarios.


\bibliographystyle{IEEEtran}
\bibliography{IEEEabrv,reference}

\begin{appendices}
\section{Proof of Auxiliary Lemmas}
\vspace{-0.5em}
\subsection{Proof of Lemma \ref{lemma: lemma 1}} \label{appx: lemma: lemma 1}
 For ease of use, we upper bound the aggregated arbitrary functions with the same structure,
    \begin{subequations}
    \begin{align}
        &\sum_{m=1}^M p_m(1-q_m) \lVert \boldsymbol{a}_m \rVert_1^2 f_m(x) \notag\\
        &=\sum_{m=1}^Mp_m(1-q_m)\sum_{m=1}^M \gamma_m \lVert \boldsymbol{a}_m \rVert_1^2 f_m(x) \label{eq: agg_f_1}\\
        &=\sum_{m=1}^Mp_m(1-q_m)T_{\mathrm{eff}} \sum_{m=1}^M \Omega_m \lVert \boldsymbol{a}_m \rVert_1 f_m(x)  \label{eq: agg_f_2}\\
        &\leq A\sum_{m=1}^Mp_m(1-q_m)T_{\mathrm{eff}}\sum_{m=1}^M \Omega_m f_m(x),
    \end{align}
    \end{subequations}
     where \eqref{eq: agg_f_1} and \eqref{eq: agg_f_2} are similar to \eqref{eq: E_agg} and \eqref{eq: obj_inconsist_comm_comp}, and $A\triangleq\max_m\{\lVert \boldsymbol{a}_m \rVert_1\}$.

\subsection{Proof of Lemma \ref{lemma:5}} \label{appx: lemma:5}
Let $\mathbf{1}$ denote all-one vector, then $\mathbf{1}\cdot \nabla F_m(\boldsymbol{X}_{r-1})$ is a vector formed by identical entries $ \nabla F_m(\boldsymbol{X}_{r-1})$. The first term in \eqref{eq: second_term_pre} can be bounded by 
\begin{align}
    &\mathbb{E}\left[ \left\lVert \frac{1}{K}\sum_{m\in \Tilde{\mathcal{S}}_r(\boldsymbol{p},\boldsymbol{q})} \boldsymbol{\Delta}_{m,r}\right\rVert^2 \right]
    \hspace{-1mm}=\mathbb{E}\left[ \left\lVert \frac{1}{K}\sum_{m\in \Tilde{\mathcal{S}}_r(\boldsymbol{p},\boldsymbol{q})} \nabla\boldsymbol{F}_m^{r} \boldsymbol{a}_m  \right\rVert^2 \right] \notag\\
    &\leq \underbrace{2 \mathbb{E}\left[ \left\lVert \frac{1}{K}\sum_{m\in \Tilde{\mathcal{S}}_r(\boldsymbol{p},\boldsymbol{q})} \left(\nabla\boldsymbol{F}_m^{r}-\mathbf{1}\cdot \nabla F_m(\boldsymbol{X}_{r-1})\right) \cdot  \boldsymbol{a}_m  \right\rVert^2 \right] }_{T_1} \notag\\
    &+\underbrace{ 2 \mathbb{E}\left[ \left\lVert \frac{1}{K}\sum_{m\in \Tilde{\mathcal{S}}_r(\boldsymbol{p},\boldsymbol{q})} \mathbf{1}\cdot \nabla F_m(\boldsymbol{X}_{r-1}) \cdot \boldsymbol{a}_m  \right\rVert^2 \right] }_{T_2} \label{eq: 51b} 
\end{align}
Furthermore, $T_1$ can be upper bounded by 
\begin{subequations}
    \begin{align}
        &T_1 \notag\\
        &\leq 2 \mathbb{E}\left[ \frac{1}{K}\hspace{-1mm} \sum_{m\in \mathcal{S}_r(\boldsymbol{p})}\hspace{-2mm}\left\lVert \left(\nabla\boldsymbol{F}_m^{r}-\mathbf{1}\cdot \nabla F_m(\boldsymbol{X}_{r-1})\right) \cdot \boldsymbol{a}_m  \right\rVert^2 \cdot Z_m^r\right] \label{eq:52a}\\ 
        &\leq 2 \mathbb{E}\Bigg[   \frac{1}{K}\sum_{m\in \mathcal{S}_r(\boldsymbol{p})} \lVert \boldsymbol{a}_m \rVert_1\sum_{t=1}^{T_m} a_{m,t} \Big\lVert \nabla F_m(\boldsymbol{X}_{m,r}^{t-1}) \notag\\
        &\hspace{4cm} - \nabla F_m(\boldsymbol{X}_{r-1})  \Big\rVert^2 \cdot Z_m^r\Bigg] \label{eq:52b}\\ 
        &\leq 2 \mathbb{E}\left[ \frac{1}{K}\hspace{-1mm}\sum_{m\in \mathcal{S}_r(\boldsymbol{p})} \hspace{-2mm} \lVert \boldsymbol{a}_m \rVert_1\sum_{t=1}^{T_m} a_{m,t} L^2 \left\lVert \boldsymbol{X}_{m,r}^{t-1}- \boldsymbol{X}_{r-1}  \right\rVert^2 \hspace{-0.5mm} \cdot \hspace{-1mm} Z_m^r\hspace{-0.5mm}\right] \label{eq:52c}\\ 
        &= 2L^2 \sum_{m=1}^M p_m (1-q_m) \lVert \boldsymbol{a}_m \rVert_1 \notag\\ 
        &\hspace{2.5cm}\cdot \sum_{t=1}^{T_m} a_{m,t} \mathbb{E}\left[\left\lVert \boldsymbol{X}_{m,r}^{t-1}- \boldsymbol{X}_{r-1}  \right\rVert^2 \right],  \label{eq:52d}
    \end{align}
    \label{eq: bound_T1}
\end{subequations}
where \eqref{eq:52a} and \eqref{eq:52b} are due to Jensen's Inequality,  \eqref{eq:52c} is due to Assumption \ref{assump: Smooth},  \eqref{eq:52d} is due to the independent client sampling and that $\mathbb{E}[Z_m^r]=1-q_m$.  

Moreover, $T_2$ can be upper bounded by 
\begin{subequations}
    \begin{align}
        &T_2 
        = 2 \mathbb{E}\left[ \left\lVert \frac{1}{K}\sum_{m\in \Tilde{\mathcal{S}}_r(\boldsymbol{p},\boldsymbol{q})} \mathbf{1}\cdot \nabla F_m(\boldsymbol{X}_{r-1}) \cdot \boldsymbol{a}_m  \right\rVert^2 \right]\\
        &= 2 \mathbb{E}\left[ \left\lVert \frac{1}{K}\sum_{m\in \mathcal{S}_r(\boldsymbol{p})} \mathbf{1}\cdot \nabla F_m(\boldsymbol{X}_{r-1}) \cdot \boldsymbol{a}_m  \cdot Z_m^r \right\rVert^2 \right]\\
        &\leq 2 \mathbb{E}\left[ \frac{1}{K}\sum_{m\in \mathcal{S}_r(\boldsymbol{p})} \left\lVert  \mathbf{1}\cdot \nabla F_m(\boldsymbol{X}_{r-1}) \cdot \boldsymbol{a}_m  \right\rVert^2 \cdot Z_m^r \right]\label{eq:53c}\\
        &= 2 \mathbb{E}\hspace{-1mm}\left[ \frac{1}{K}\hspace{-1mm}\sum_{m\in \mathcal{S}_r(\boldsymbol{p})} \hspace{-1mm}\hspace{-1mm}\left\lVert  \mathbf{1}\cdot \nabla F_m(\boldsymbol{X}_{r-1}) \cdot \boldsymbol{a}_m  \right\rVert^2 (1-q_m) \right]\\
        &= 2 \mathbb{E}\hspace{-1mm}\left[ \frac{1}{K}\hspace{-1mm}\sum_{m\in \mathcal{S}_r(\boldsymbol{p})} \left\lVert \sum_{t=1}^{T_m} a_{m,t}  \nabla F_m(\boldsymbol{X}_{r-1}) \right\rVert^2\hspace{-1mm}\hspace{-1mm} (1-q_m) \right]\\
        &= 2 \mathbb{E}\left[ \frac{1}{K}\hspace{-1mm}\sum_{m\in \mathcal{S}_r(\boldsymbol{p})}\hspace{-1mm} \lVert \boldsymbol{a}_m \rVert_1^2  \cdot \left\lVert  \nabla F_m(\boldsymbol{X}_{r-1}) \right\rVert^2 \hspace{-1mm}(1-q_m) \right] \label{eq:53f} \\
        &= 2 \sum_{m=1}^{M} p_m(1-q_m) \lVert \boldsymbol{a}_m \rVert_1^2 \cdot \mathbb{E}\left[ \left\lVert  \nabla F_m(\boldsymbol{X}_{r-1}) \right\rVert^2 \right] \label{eq:53g}\\
        &\leq 2A\sum_{m=1}^Mp_m(1-q_m)T_{\mathrm{eff}}\sum_{m=1}^M \Omega_m \mathbb{E}\left[ \left\lVert  \nabla F_m(\boldsymbol{X}_{r-1}) \right\rVert^2 \right] \label{eq:53h}\\
        &\leq 2A\sum_{m=1}^Mp_m(1-q_m)T_{\mathrm{eff}} \left(\beta^2\mathbb{E}\left[ \left\lVert  \nabla \Tilde{F}(\boldsymbol{X}_{r-1}) \right\rVert^2 \right]+\kappa^2\right), \label{eq:53i}
    \end{align}\label{eq: bound_T2}
\end{subequations}
where \eqref{eq:53c} follows Jensen's Inequality, \eqref{eq:53f} owes to the fact that $\nabla F_m(\boldsymbol{X}_{r-1})$ is irrelevant to index $t$, \eqref{eq:53h} follows Lemma \ref{lemma: lemma 1}, \eqref{eq:53i} follows Assumption \ref{assump: Dissimilarity}.   

By substituting \eqref{eq: bound_T1} and \eqref{eq: bound_T2} into \eqref{eq: 51b}, Lemma \ref{lemma:5} is derived.

\section{Proofs for Surrogate Objective Function}\label{appx: theo1}
\subsection{Proof of Lemma \ref{Lemma: decent_surrogate}} \label{appx: theo1-A}
By Assumption \ref{assump: Smooth},  the following inequality concerning global objective function hold: 
\begin{align}
    &\mathbb{E}\left[ \Tilde{F}(\boldsymbol{X}_{r}) \right]-\mathbb{E}\left[ \Tilde{F}(\boldsymbol{X}_{r-1}) \right] \notag\\
    &\leq \mathbb{E}\left[ \left\langle\nabla\Tilde{F}(\boldsymbol{X}_{r-1}), \boldsymbol{X}_{r}-\boldsymbol{X}_{r-1}\right\rangle \right]
    +\frac{L}{2} \mathbb{E}\left[ \left\lVert \boldsymbol{X}_{r}-\boldsymbol{X}_{r-1}\right\rVert^2 \right].
    \label{eq: smooth_lossF}
\end{align}
Furthermore, it holds that 
\begin{align}
    &\frac{L}{2} \mathbb{E}\left[ \left\lVert \boldsymbol{X}_{r}-\boldsymbol{X}_{r-1}\right\rVert^2 \right]
    = \frac{\eta^2 L}{2} \mathbb{E}\left[ \left\lVert \frac{1}{K}\sum_{m\in \Tilde{\mathcal{S}}_r(\boldsymbol{p},\boldsymbol{q})} \boldsymbol{\Delta}_{m,r}^{\boldsymbol{\xi}}\right\rVert^2 \right] \notag\\
    &= \frac{\eta^2 L}{2} \Bigg\{\mathbb{E}\left[ \left\lVert \frac{1}{K}\sum_{m\in \Tilde{\mathcal{S}}_r(\boldsymbol{p},\boldsymbol{q})} \boldsymbol{\Delta}_{m,r}\right\rVert^2 \right] \notag\\
    &+\mathbb{E}\left[ \left\lVert \frac{1}{K}\sum_{m\in \Tilde{\mathcal{S}}_r(\boldsymbol{p},\boldsymbol{q})} \left(\boldsymbol{\Delta}_{m,r}^{\boldsymbol{\xi}}
    - \boldsymbol{\Delta}_{m,r}\right)\right\rVert^2 \right]
    \Bigg\},\label{eq: second_term_pre}  
\end{align}  
where \eqref{eq: second_term_pre} is due to unbiased estimation.

\noindent\textbf{Immediate Result.}
By substituting Lemma 3--6 into \eqref{eq: smooth_lossF}, the immediate result is obtained as follows, 
\begin{subequations}
\begin{align}
    &\mathbb{E}\left[ \Tilde{F}(\boldsymbol{X}_{r}) \right]
    -\mathbb{E}\left[ \Tilde{F}(\boldsymbol{X}_{r-1}) \right]\\
    &\leq \frac{1}{2}\eta_{\mathrm{eff}} T_{\mathrm{eff}} \sum_{m=1}^M \Omega_m \frac{L^2}{\lVert\boldsymbol{a}_m \rVert_1} \sum_{t=1}^{T_m} a_{m,t} \mathbb{E}\left[ \left\lVert \boldsymbol{X}_{r-1}-\boldsymbol{X}_{m,r}^{t-1} \right\rVert^2 \right] \notag\\
    &-\frac{1}{2}\eta_{\mathrm{eff}} T_{\mathrm{eff}}\mathbb{E}\left[ \left\lVert  \nabla \Tilde{F}(\boldsymbol{X}_{r-1}) \right\rVert^2 \right] \notag\\
    &-\frac{1}{2}\eta_{\mathrm{eff}} T_{\mathrm{eff}}\mathbb{E}\left[ \left\lVert \sum_{m=1}^M\Omega_m \nabla\Bar{\boldsymbol{F}}_m^{r} \right\rVert^2 \right] \notag \\
    &+\frac{1}{2}\eta^2 L\sigma^2 \sum_{m=1}^M p_m(1-q_m)\lVert \boldsymbol{a}_m\rVert_1^2 +\notag\\
    &\eta^2L^3 \sum_{m=1}^M p_m(1-q_m) \lVert \boldsymbol{a}_m\rVert_1 \sum_{t=1}^{T_m} a_{m,t} \mathbb{E}\left[ \left\lVert \boldsymbol{X}_{r-1}-\boldsymbol{X}_{m,r}^{t-1} \right\rVert^2 \right]\notag\\
    &+\eta\eta_{\mathrm{eff}}LAT_{\mathrm{eff}} \left(\beta^2\mathbb{E}\left[ \left\lVert  \nabla \Tilde{F}(\boldsymbol{X}_{r-1}) \right\rVert^2 \right]+\kappa^2\right)\\
    &\leq \frac{1}{2}\eta_{\mathrm{eff}} T_{\mathrm{eff}} \sum_{m=1}^M \Omega_m \frac{L^2}{\lVert\boldsymbol{a}_m \rVert_1} \sum_{t=1}^{T_m} a_{m,t} \mathbb{E}\left[ \left\lVert \boldsymbol{X}_{r-1}-\boldsymbol{X}_{m,r}^{t-1} \right\rVert^2 \right] \notag\\
    &-\frac{1}{2}\eta_{\mathrm{eff}} T_{\mathrm{eff}}\mathbb{E}\left[ \left\lVert  \nabla \Tilde{F}(\boldsymbol{X}_{r-1}) \right\rVert^2 \right] \notag\\
    &+\frac{1}{2}\eta^2 L\sigma^2 \sum_{m=1}^M p_m(1-q_m)\lVert \boldsymbol{a}_m\rVert_1^2+ \notag\\
    &\eta^2L^3 \sum_{m=1}^M p_m(1-q_m) \lVert \boldsymbol{a}_m\rVert_1 \sum_{t=1}^{T_m} a_{m,t} \mathbb{E}\left[ \left\lVert \boldsymbol{X}_{r-1}-\boldsymbol{X}_{m,r}^{t-1} \right\rVert^2 \right]\notag\\
    &+\eta\eta_{\mathrm{eff}}LAT_{\mathrm{eff}} \left(\beta^2\mathbb{E}\left[ \left\lVert  \nabla \Tilde{F}(\boldsymbol{X}_{r-1}) \right\rVert^2 \right]+\kappa^2\right).
\end{align}\label{eq: immediate_result}
\end{subequations}
\noindent\textbf{Final Proof.}
Substitute Lemma \ref{lemma: lemma 2} into \eqref{eq: immediate_result}, we get 
\begin{subequations}
\begin{align}
    &\mathbb{E}\left[ \Tilde{F}(\boldsymbol{X}_{r}) \right]
    -\mathbb{E}\left[ \Tilde{F}(\boldsymbol{X}_{r-1}) \right]\\
    &\leq \eta_{\mathrm{eff}} T_{\mathrm{eff}} \sum_{m=1}^M \Omega_m \frac{\eta^2L^2\lVert \boldsymbol{a}_m \rVert_1^2}{1-2\eta^2L^2\lVert \boldsymbol{a}_m \rVert_1^2} \mathbb{E}\left[\lVert \nabla F_m\left( \boldsymbol{X}_{r-1}\right) \rVert^2\right] \notag\\
    &-\frac{1}{2}\eta_{\mathrm{eff}} T_{\mathrm{eff}}\mathbb{E}\left[ \left\lVert  \nabla \Tilde{F}(\boldsymbol{X}_{r-1}) \right\rVert^2 \right] \notag\\
    &+\frac{1}{2}\eta^2 L\sigma^2 \sum_{m=1}^M p_m(1-q_m)\lVert \boldsymbol{a}_m\rVert_1^2
    +\eta^2L^3 \cdot \notag\\
    &\sum_{m=1}^M p_m(1-q_m) \lVert \boldsymbol{a}_m\rVert_1^2 \frac{2\eta^2\lVert \boldsymbol{a}_m \rVert_1^2}{1-2\eta^2L^2\lVert \boldsymbol{a}_m \rVert_1^2} \mathbb{E}\left[\lVert \nabla F_m\left( \boldsymbol{X}_{r-1}\right) \rVert^2\right]\notag\\
    &+\eta\eta_{\mathrm{eff}}LAT_{\mathrm{eff}} \left(\beta^2\mathbb{E}\left[ \left\lVert  \nabla \Tilde{F}(\boldsymbol{X}_{r-1}) \right\rVert^2 \right]+\kappa^2\right)\\
    &\leq \eta_{\mathrm{eff}} T_{\mathrm{eff}} \frac{\eta^2L^2 A^2}{1-2\eta^2L^2A^2} \sum_{m=1}^M \Omega_m \mathbb{E}\left[\lVert \nabla F_m\left( \boldsymbol{X}_{r-1}\right) \rVert^2\right] \notag\\
    &-\frac{1}{2}\eta_{\mathrm{eff}} T_{\mathrm{eff}}\mathbb{E}\left[ \left\lVert  \nabla \Tilde{F}(\boldsymbol{X}_{r-1}) \right\rVert^2 \right] \notag\\
    &+\frac{1}{2}\eta^2 L\sigma^2 \sum_{m=1}^M p_m(1-q_m)\lVert \boldsymbol{a}_m\rVert_1^2
    +\eta^2L \frac{2\eta^2 L^2A^2}{1-2\eta^2L^2A^2} \cdot \notag\\
    &\sum_{m=1}^M p_m(1-q_m) \lVert \boldsymbol{a}_m\rVert_1^2 \mathbb{E}\left[\lVert \nabla F_m\left( \boldsymbol{X}_{r-1}\right) \rVert^2\right]\notag\\
    &+\eta\eta_{\mathrm{eff}}LAT_{\mathrm{eff}} \left(\beta^2\mathbb{E}\left[ \left\lVert  \nabla \Tilde{F}(\boldsymbol{X}_{r-1}) \right\rVert^2 \right]+\kappa^2\right) \label{eq: 57b}\\
    &\leq \eta_{\mathrm{eff}} T_{\mathrm{eff}} \frac{\eta^2L^2 A^2}{1-2\eta^2L^2A^2} \left(\beta^2\mathbb{E}\left[ \left\lVert  \nabla \Tilde{F}(\boldsymbol{X}_{r-1}) \right\rVert^2 \right]+\kappa^2\right) \notag\\
    &-\frac{1}{2}\eta_{\mathrm{eff}} T_{\mathrm{eff}}\mathbb{E}\left[ \left\lVert  \nabla \Tilde{F}(\boldsymbol{X}_{r-1}) \right\rVert^2 \right]
    +\frac{1}{2}\eta \eta_{\mathrm{eff}} LA^2\sigma^2 \notag\\
    &+\eta\eta_{\mathrm{eff}}LAT_{\mathrm{eff}} \frac{2\eta^2 L^2A^2}{1-2\eta^2L^2A^2} \left(\beta^2\mathbb{E}\left[ \left\lVert  \nabla \Tilde{F}(\boldsymbol{X}_{r-1}) \right\rVert^2 \right]+\kappa^2\right)    \notag\\
    &+\eta\eta_{\mathrm{eff}}LAT_{\mathrm{eff}} \left(\beta^2\mathbb{E}\left[ \left\lVert  \nabla \Tilde{F}(\boldsymbol{X}_{r-1}) \right\rVert^2 \right]+\kappa^2\right) \label{eq: 57c}\\
    &\leq \eta_{\mathrm{eff}} T_{\mathrm{eff}} \frac{\eta^2L^2 A^2}{1-2\eta^2L^2A^2} \left(\beta^2\mathbb{E}\left[ \left\lVert  \nabla \Tilde{F}(\boldsymbol{X}_{r-1}) \right\rVert^2 \right]+\kappa^2\right) \notag\\
    &-\frac{1}{2}\eta_{\mathrm{eff}} T_{\mathrm{eff}}\mathbb{E}\left[ \left\lVert  \nabla \Tilde{F}(\boldsymbol{X}_{r-1}) \right\rVert^2 \right]
    +\frac{1}{2}\eta \eta_{\mathrm{eff}} LA^2\sigma^2 \notag\\
    &+\eta\eta_{\mathrm{eff}}LAT_{\mathrm{eff}} \frac{2\eta^2 L^2A^2}{1-2\eta^2L^2A^2} \left(\beta^2\mathbb{E}\left[ \left\lVert  \nabla \Tilde{F}(\boldsymbol{X}_{r-1}) \right\rVert^2 \right]+\kappa^2\right)    \notag\\
    &+\eta\eta_{\mathrm{eff}}LAT_{\mathrm{eff}} \left(\beta^2\mathbb{E}\left[ \left\lVert  \nabla \Tilde{F}(\boldsymbol{X}_{r-1}) \right\rVert^2 \right]+\kappa^2\right)
    \label{eq: 57d}\\
    &\leq \eta_{\mathrm{eff}} T_{\mathrm{eff}} \beta^2 \Big(-\frac{1}{2} + \frac{\eta^2L^2 A^2}{1-2\eta^2L^2A^2} + 
    \eta LA \frac{2\eta^2 L^2A^2}{1-2\eta^2L^2A^2}\notag\\
    &+\eta LA
    \Big)\mathbb{E}\left[ \left\lVert  \nabla \Tilde{F}(\boldsymbol{X}_{r-1}) \right\rVert^2 \right]+\eta_{\mathrm{eff}}T_{\mathrm{eff}} \frac{\eta LA^2}{2T_{\mathrm{eff}}} \sigma^2 \notag\\
    &+ \eta_{\mathrm{eff}} T_{\mathrm{eff}} \kappa^2 \left( \frac{\eta^2L^2 A^2}{1-2\eta^2L^2A^2} + 
    \eta LA \frac{2\eta^2 L^2A^2}{1-2\eta^2L^2A^2}+\eta LA
    \right) \label{eq: 57e}, 
\end{align}
\end{subequations}
where \eqref{eq: 57b} owes to Lemma \ref{lemma: lemma 1}, \eqref{eq: 57c} follows the definition of $A$, \eqref{eq: 57d} follows Assumption \ref{assump: Dissimilarity}. After minor re-arrangement, we derive Lemma \ref{Lemma: decent_surrogate}.

\vspace{-0.5em}
\subsection{Proof for Theorem \ref{theo: Convergence of the Surrogate}}
If the choice of $\eta$ satisfies $ \frac{\eta^2L^2 A^2}{1-2\eta^2L^2A^2} + 
    \eta LA \frac{2\eta^2 L^2A^2}{1-2\eta^2L^2A^2}+\eta LA\leq \frac{1}{4}$, a simple variation of \eqref{eq: 57e} is given by 
\begin{align}
    \mathbb{E}\left[ \left\lVert  \nabla \Tilde{F}(\boldsymbol{X}_{r-1}) \right\rVert^2 \right]
    &\leq \frac{4}{\eta_{\mathrm{eff}}T_{\mathrm{eff}}}\left( \mathbb{E}\left[ \Tilde{F}(\boldsymbol{X}_{r-1}) \right]-\mathbb{E}\left[ \Tilde{F}(\boldsymbol{X}_{r}) \right]
    \right) \notag\\ 
    &+\eta_{\mathrm{eff}}T_{\mathrm{eff}} \frac{\kappa^2}{\beta^2}
    +\frac{2\eta LA^2}{T_{\mathrm{eff}}} \sigma^2.
\end{align}
Take the average over all training rounds, we obtain
\begin{align}
    \frac{1}{R}\sum_{r=1}^R\mathbb{E}\left[ \left\lVert  \nabla \Tilde{F}(\boldsymbol{X}_{r-1}) \right\rVert^2 \right]
    &\leq \frac{4}{R\eta_{\mathrm{eff}}T_{\mathrm{eff}}}\left( \mathbb{E}\left[ \Tilde{F}(\boldsymbol{X}_{0}) \right]-F^\star
    \right) \notag\\
    &+\eta_{\mathrm{eff}}T_{\mathrm{eff}} \frac{\kappa^2}{\beta^2}
    +\frac{2\eta LA^2}{T_{\mathrm{eff}}} \sigma^2.
\end{align}
Now we complete the proof for Theorem \ref{theo: Convergence of the Surrogate}.

\vspace{-0.5em}
\section{Proofs for True Objective Function} \label{appx: theo2}
\subsection{Proof of Lemma \ref{Lemma: decent_true}}
By Cauchy–Schwarz Inequality and Assumption \ref{assump: Dissimilarity}, it holds that
\begin{subequations}
\begin{align}
    &\left \lVert \nabla F(x)- \nabla\Tilde{F}(x) \right\rVert^2
    \hspace{-1mm}= \left \lVert \sum_{m=1}^M \left(\omega_m-\Omega_m\right)
    \nabla F_m(x) \right\rVert^2\\ 
    &=\left \lVert \sum_{m=1}^M \frac{\omega_m-\Omega_m}{\sqrt{\Omega_m}}\cdot\sqrt{\Omega_m}
    \cdot\nabla F_m(x) \right\rVert^2\\
    &\leq \left(\sum_{m=1}^M \frac{\left(\omega_m-\Omega_m\right)^2}{\Omega_m}\right) 
    \cdot \left(  \sum_{m=1}^M \Omega_m \left \lVert \nabla F_m(x) \right\rVert^2 \right) \label{eq: 577c}  \\
    &\leq \chi_{\boldsymbol{\omega}\Vert\boldsymbol{\Omega} }^2  \cdot  \left( \beta^2\left\lVert  \nabla \Tilde{F}(x) \right\rVert^2+\kappa^2 \right) \label{eq: 577d}\\
    &\leq \chi_{\boldsymbol{\omega}\Vert\boldsymbol{\Omega} }^2  \cdot  \left( \beta\left\lVert  \nabla \Tilde{F}(x) \right\rVert+\kappa \right)^2.  \label{eq: 577e}
\end{align}
\end{subequations}
Take the square root of both sides of \eqref{eq: 577e} and let $x=\boldsymbol{X}_{r}$, Lemma \ref{Lemma: decent_true} is acquired.

\vspace{-0.5em}
\subsection{Proof for Theorem \ref{theo:converge_bound_true}}\label{appx: theo:converge_bound_true}
To begin with, we introduce the construction of the convergence bound of $F(x)$ in terms of $\Tilde{F}(x)$. 

By the property of $\ell_2$-norm, it also holds that 
\begin{align}
    &\left \lVert \nabla F(x)- \nabla\Tilde{F}(x) \right\rVert^2 
    \geq \left(\left\lVert \nabla F(x) \right\rVert-\left\lVert \nabla\Tilde{F}(x) \right\rVert\right)^2 \notag\\
    &= \hspace{-0.5mm}\left\lVert \nabla F(x) \right\rVert^2\hspace{-0.5mm}+\left\lVert \nabla\Tilde{F}(x) \right\rVert^2
    \hspace{-1mm}-\hspace{-0mm}2\left\lVert \nabla F(x) \right\rVert \cdot \left\lVert \nabla\Tilde{F}(x) \right\rVert.\label{eq:56}
\end{align}

Substitute \eqref{eq: 57e} into \eqref{eq:56}, by minor re-arrangement, we get
\begin{align}
     &\left\lVert \nabla F(x) \right\rVert^2
     -2 \left\lVert \nabla F(x) \right\rVert \cdot \left\lVert \nabla\Tilde{F}(x) \right\rVert \notag\\
     &\leq \chi_{\boldsymbol{\omega}\Vert\boldsymbol{\Omega} }^2  \left( \beta^2\left\lVert  \nabla \Tilde{F}(x) \right\rVert^2+\kappa^2 \right)
    - \left\lVert \nabla\Tilde{F}(x) \right\rVert^2.
    \label{eq: F_and_tilde{F}}
\end{align}
Take $x=\boldsymbol{X}_{r}$, we further have 
\begin{align}
     &\frac{1}{R}\sum_{r=1}^R \left\lVert \nabla F(\boldsymbol{X}_{r}) \right\rVert^2
     -2 \cdot \frac{1}{R}\sum_{r=1}^R \left\lVert \nabla F(\boldsymbol{X}_{r}) \right\rVert \cdot \left\lVert \nabla\Tilde{F}(\boldsymbol{X}_{r}) \right\rVert\notag\\
     &\leq \left(\chi_{\boldsymbol{\omega}\Vert\boldsymbol{\Omega} }^2 \beta^2-1 \right) \cdot \frac{1}{R}\sum_{r=1}^R\left\lVert  \nabla \Tilde{F}(\boldsymbol{X}_{r}) \right\rVert^2+ \chi_{\boldsymbol{\omega}\Vert\boldsymbol{\Omega} }^2\kappa^2,
     \label{eq: 60}
\end{align}
The second term on the left-hand side in \eqref{eq: 60} can be further lower bounded by
\begin{subequations}
\begin{align}
     &2 \cdot \frac{1}{R}\sum_{r=1}^R \left\lVert \nabla F(\boldsymbol{X}_{r}) \right\rVert \cdot \left\lVert \nabla\Tilde{F}(\boldsymbol{X}_{r}) \right\rVert \\
     &=2 \cdot \frac{1}{R} \sqrt{\left(\sum_{r=1}^R \left\lVert \nabla F(\boldsymbol{X}_{r}) \right\rVert \cdot \left\lVert \nabla\Tilde{F}(\boldsymbol{X}_{r}) \right\rVert\right)^2} \\
     &\leq  2 \cdot \frac{1}{R}\sqrt{\sum_{r=1}^R \left\lVert \nabla F(\boldsymbol{X}_{r}) \right\rVert^2} \cdot \sqrt{\sum_{r=1}^R\left\lVert \nabla\Tilde{F}(\boldsymbol{X}_{r}) \right\rVert^2}\\
     &= 2 \sqrt{\frac{1}{R}\sum_{r=1}^R \left\lVert \nabla F(\boldsymbol{X}_{r}) \right\rVert^2} \cdot \sqrt{\frac{1}{R}\sum_{r=1}^R\left\lVert \nabla\Tilde{F}(\boldsymbol{X}_{r}) \right\rVert^2}, 
\end{align}     \label{eq: 61}
\end{subequations}
where Cauchy–Schwarz Inequality applies. Combine \eqref{eq: 61} with \eqref{eq: 60} and \eqref{eq: theo1}, we get 
\begin{align}
    &\frac{1}{R}\sum_{r=1}^R \left\lVert \nabla F(\boldsymbol{X}_{r}) \right\rVert^2
     -2 \sqrt{\frac{1}{R}\sum_{r=1}^R \left\lVert \nabla F(\boldsymbol{X}_{r}) \right\rVert^2} \cdot \sqrt{\epsilon_{\mathrm{opt}}} \notag\\
     &\leq \left(\chi_{\boldsymbol{\omega}\Vert\boldsymbol{\Omega} }^2 \beta^2-1 \right) \cdot\epsilon_{\mathrm{opt}}+ \chi_{\boldsymbol{\omega}\Vert\boldsymbol{\Omega} }^2\kappa^2. 
\end{align}
Taking the limit of both sides and using the fact that $\lim_{R\rightarrow \infty}\epsilon_{\mathrm{opt}}=0$, we finally arrive
\begin{align}
    \lim_{R\rightarrow \infty} \frac{1}{R}\sum_{r=1}^R \left\lVert \nabla F(\boldsymbol{X}_{r}) \right\rVert^2
    \leq \chi_{\boldsymbol{\omega}\Vert\boldsymbol{\Omega} }^2\kappa^2. 
\end{align}
The construction of \eqref{eq:converge_bound_true} for the true objective function is thereby completed. 

Due to unbiased estimation in Assumption \ref{assump: unbiased}, we have the following inequality,
\begin{subequations}
\begin{align}
    &\lim_{R\rightarrow +\infty}\frac{1}{R}\sum_{r=1}^R \mathbb{E}_\xi\left[\left\lVert \nabla F(\boldsymbol{X}_{r}) \right\rVert^2\right] \notag\\
    &=\lim_{R\rightarrow +\infty}\frac{1}{R}\sum_{r=1}^R \Big(\left\lVert \nabla F(\boldsymbol{X}_{r}) \right\rVert^2\notag\\
    & \hspace{2cm}+\mathbb{E}_{\xi}\left[\left\lVert \nabla F(\boldsymbol{X}_{r}) - \nabla F(\boldsymbol{X}_{r}\vert \xi) \right\rVert^2\right]\Big) \\
    &=\lim_{R\rightarrow +\infty}\frac{1}{R}\sum_{r=1}^R \Bigg(\left\lVert \nabla F(\boldsymbol{X}_{r}) \right\rVert^2 \notag\\
    & \hspace{0cm}+\mathbb{E}_{\xi}\left[\left\lVert \sum_{m=1}^M\omega_m\left(\nabla F_m(\boldsymbol{X}_{r}) - \nabla F_m(\boldsymbol{X}_{r}\vert \xi)\right) \right\rVert^2\right]\Bigg)\\
    &\leq\lim_{R\rightarrow +\infty}\frac{1}{R}\sum_{r=1}^R \Big(\left\lVert \nabla F(\boldsymbol{X}_{r}) \right\rVert^2 \notag\\
    &+\sum_{m=1}^M\omega_m\mathbb{E}_{\xi}\left[\left\lVert \left(\nabla F_m(\boldsymbol{X}_{r}) - \nabla F_m(\boldsymbol{X}_{r}\vert \xi)\right) \right\rVert^2\right]\Big)\\
    &\leq \chi_{\boldsymbol{\omega}\Vert\boldsymbol{\Omega}}^2 \kappa^2+\sigma^2.
\end{align}
\end{subequations}
Hence, the proof of \eqref{eq:converge_bound_true2} is completed. 
\vspace{-0.5em}
\subsection{Achievability of Theorem \ref{theo:converge_bound_true}}\label{Appx: achieve}
Next, we show that the acquired convergence bound is tight by showing that it is achievable. Consider the example given in Section \ref{sec: Phenomenon of Interest}, specifically when $M=2$, $d=1$, $e_m=(-1)^m e$. Then, we have
\begin{subequations}
\begin{align}
    \sum_{m=1}^M \Omega_m\lVert \nabla F_m(x) \rVert^2
    &=\Omega_1(x+e)^2+\Omega_2(x-e)^2\\
    &=x^2+2(\Omega_1-\Omega_2)xe +e^2,
    \label{eq: 64}
\end{align}
\end{subequations}
and
\begin{subequations}
\begin{align}
    \lVert  \nabla F(x) \rVert^2
    &=\left\lVert \sum_{m=1}^M \Omega_m\nabla F_m(x) \right\rVert^2\\
    &=\left(\Omega_1(x+e)+\Omega_2(x-e)\right)^2\\
    &=x^2+2(\Omega_1-\Omega_2)xe+(\Omega_1-\Omega_2)^2e^2.
    \label{eq: 65}
\end{align}
\end{subequations}
Compare \eqref{eq: 64} and \eqref{eq: 65} with Assumption \ref{assump: Dissimilarity}, we can derive $\beta^2=1$, $\kappa^2=e^2-(\Omega_1-\Omega_2)^2e^2=(1+\Omega_1-\Omega_2)(1-\Omega_1+\Omega_2)e^2=4\Omega_1\Omega_2e^2$ in this example. Furthermore, $\sigma^2=0$ by its definition.

Equation \eqref{eq: converge_point_sgd} derives the expected convergence point of \texttt{FedAvg} with IS. Combined with Theorem~\ref{theo: Convergence of the Surrogate}, which guarantees its convergence to a stationary point, this implies that the convergence point is exactly
\begin{align}
      \boldsymbol{X}_{R}=\frac{(1-q_2)T_2 e-(1-q_1)T_1 e}{(1-q_1)T_1+(1-q_2)T_2}.
\end{align}
Accordingly, we have
\begin{subequations}
\begin{align}
    &\Omega_1=\frac{(1-q_1)T_1}{(1-q_2)T_2+(1-q_1)T_1}\\
    &\Omega_2=\frac{(1-q_2)T_2}{(1-q_2)T_2+(1-q_1)T_1}.
\end{align}\label{eq: weights_example}
\end{subequations}
As a result, we have
\begin{subequations}
\begin{align}
   \lim_{R\rightarrow \infty}  \left\lVert\nabla F(\boldsymbol{X}_{R})\right\lVert^2
    &=\lim_{R\rightarrow \infty} \left( \frac{1}{2}(\boldsymbol{X}_{R}-e)+\frac{1}{2}(\boldsymbol{X}_{R}+e) \right)^2\\
    =&\lim_{R\rightarrow \infty} \left(\boldsymbol{X}_{R}  \right)^2\\
    =&\left(\frac{(1-q_2)T_2 -(1-q_1)T_1}{(1-q_1)T_1+(1-q_2)T_2}\right)^2e^2\\
    =&\frac{((1-q_2)T_2 -(1-q_1)T_1)^2}{4(1-q_1)(1-q_2)T_1T_2}\kappa^2.
    \label{eq: lim_example}
\end{align}
\end{subequations}
In addition, we have 
\begin{subequations}
\begin{align}
    \chi_{\boldsymbol{\omega}\Vert\boldsymbol{\Omega} }^2
    &=\frac{(\Omega_1-\frac{1}{2})^2}{\Omega_1}+\frac{(\Omega_2-\frac{1}{2})^2}{\Omega_2}\\
    &=-1+\frac{1}{4\Omega_1\Omega_2},
\end{align}\label{eq: chi_square}
\end{subequations}
where $\Omega_1+\Omega_2=1$ is used in the derivation. By plugging \eqref{eq: weights_example} into \eqref{eq: chi_square}, we further get
\begin{align}
     \chi_{\boldsymbol{\omega}\Vert\boldsymbol{\Omega} }^2=\frac{((1-q_2)T_2 -(1-q_1)T_1)^2}{4(1-q_1)(1-q_2)T_1T_2}. 
     \label{eq:chi_square_2}
\end{align}
By comparing \eqref{eq:chi_square_2} with \eqref{eq: lim_example}, we prove
\begin{align}
     \lim_{R\rightarrow \infty}  \left\lVert\nabla F(\boldsymbol{X}_{R})\right\lVert^2= \chi_{\boldsymbol{\omega}\Vert\boldsymbol{\Omega} }^2,
\end{align}
indicating Theorem \ref{theo:converge_bound_true} is achievable. 

\vspace{-0.5em}
\section{Proof for Theorem \ref{theo: FedACS}}\label{appx: theo 4}
In the case of \texttt{FedACS}, the aggregation weights in each round is unbiased, i.e., $\Omega_m^{(r)}=\omega_m^{(r)}$. The surrogate function $\Tilde{F}(\boldsymbol{X})$ is the same as $F(\boldsymbol{X})$, so we can directly reuse the immediate result in \eqref{eq:one_round_theorem_1} with minor adjustment. That gives
\begin{align}
    \mathbb{E}\left[ \left\lVert  \nabla F(\boldsymbol{X}_{r-1}) \right\rVert^2 \right]
    &\leq \frac{4}{\eta_{\mathrm{eff}}^{(r)}T_{\mathrm{eff}}^{(r)}}\left( \mathbb{E}\left[ F(\boldsymbol{X}_{r-1}) \right]-\mathbb{E}\left[ F(\boldsymbol{X}_{r}) \right]
    \right) \notag\\
    &+\eta_{\mathrm{eff}}^{(r)}T_{\mathrm{eff}}^{(r)} \frac{\kappa^2}{\beta^2}
    +\frac{2\eta LA_r^2}{T_{\mathrm{eff}}^{(r)}} \sigma^2.
\end{align}
Average both sides over $R$ rounds, we get 
\begin{align}
    &\frac{1}{R}\sum_{r=1}^R \mathbb{E}\left[ \left\lVert  \nabla F(\boldsymbol{X}_{r-1}) \right\rVert^2 \right]
    \leq   \notag\\ 
    &\frac{4}{R\eta}\sum_{r=1}^R \frac{\mathbb{E}\left[ F(\boldsymbol{X}_{r-1}) \right]-\mathbb{E}\left[ F(\boldsymbol{X}_{r}) \right]}{\sum_{m=1}^M p_m^{(r)}(1-q_m^{(r)})T_{\mathrm{eff}}^{(r)}}  +\frac{2\eta L}{R}\sum_{r=1}^R \frac{A_r^2}{T_{\mathrm{eff}}^{(r)}} \sigma^2 \notag\\ 
    &+\eta\frac{\kappa^2}{\beta^2} \frac{1}{R}\sum_{r=1}^R \sum_{m=1}^M p_m^{(r)}(1-q_m^{(r)})T_{\mathrm{eff}}^{(r)}.
    \label{eq:one_round_fedacs}
\end{align}
Define 
\begin{subequations}
\begin{align}
    B&=\max_{r}\left\{\frac{1}{\sum_{m=1}^M p_m^{(r)}(1-q_m^{(r)})T_{\mathrm{eff}}^{(r)}}\right\},\\
    C_r&=\sum_{m=1}^M p_m^{(r)}(1-q_m^{(r)}),\\
    D_r&=\frac{A_r^2}{T_{\mathrm{eff}}^{(r)}},
\end{align}\label{eq: defineABC}
\end{subequations}
and $\Bar{C}=\frac{1}{R}\sum_{r=1}^R C_r $, $\Bar{D}=\frac{1}{R}\sum_{r=1}^R D_r $, Theorem \ref{theo: FedACS} is derived.

\newpage
\section{Proof of Lemma \ref{lemma:3}, \ref{lemma:4}, \ref{lemma: lemma 2}}
\subsection{Proof of Lemma \ref{lemma:3}} \label{appx: lemma:3}
First, we bound the first term at the right-hand side of \eqref{eq: smooth_lossF}. During the global model update process, the communication is unreliable. In \eqref{eq: bound_first_term}, we directly applied \eqref{eq: obj_inconsist_comm_comp} to the expectation of $\boldsymbol{X}_{r}-\boldsymbol{X}_{r-1}$, where $\eta_{\mathrm{eff}}$, $T_{\mathrm{eff}}$, and $\Omega_m$ already reflected the impact of unreliable communication.  
\begin{subequations}
    \begin{align}
        &\mathbb{E}\left[ \left\langle\nabla\Tilde{F}(\boldsymbol{X}_{r-1}), \boldsymbol{X}_{r}-\boldsymbol{X}_{r-1}\right\rangle \right]\\
        &=\mathbb{E}\left[ \left\langle\nabla\Tilde{F}(\boldsymbol{X}_{r-1}), -\eta_{\mathrm{eff}}T_{\mathrm{eff}}\sum_{m=1}^M\Omega_m \nabla\Bar{\boldsymbol{F}}_m^{r,\boldsymbol{\xi}}\right\rangle \right]\\
        &=-\eta_{\mathrm{eff}}T_{\mathrm{eff}}\mathbb{E}\left[ \left\langle\nabla\Tilde{F}(\boldsymbol{X}_{r-1}), \sum_{m=1}^M\Omega_m \nabla\Bar{\boldsymbol{F}}_m^{r}\right\rangle \right]\\
        &=-\frac{1}{2}\eta_{\mathrm{eff}}T_{\mathrm{eff}}\mathbb{E}\left[ \left\lVert \nabla\Tilde{F}(\boldsymbol{X}_{r-1}) \right\rVert^2 \right]\notag\\
        &\hspace{5mm}-\frac{1}{2}\eta_{\mathrm{eff}}T_{\mathrm{eff}}\mathbb{E}\left[ \left\lVert \sum_{m=1}^M\Omega_m \nabla\Bar{\boldsymbol{F}}_m^{r} \right\rVert^2 \right]\notag\\
        &\hspace{5mm}+\frac{1}{2}\eta_{\mathrm{eff}}T_{\mathrm{eff}}\mathbb{E}\left[ \left\lVert \nabla\Tilde{F}(\boldsymbol{X}_{r-1})-\sum_{m=1}^M\Omega_m \nabla\Bar{\boldsymbol{F}}_m^{r}
        \right\rVert^2 \right]\label{eq: 48d}\\ 
        &\leq -\frac{1}{2}\eta_{\mathrm{eff}}T_{\mathrm{eff}}\mathbb{E}\left[ \left\lVert \nabla\Tilde{F}(\boldsymbol{X}_{r-1}) \right\rVert^2 \right]\notag\\
        &\hspace{5mm}-\frac{1}{2}\eta_{\mathrm{eff}}T_{\mathrm{eff}}\mathbb{E}\left[ \left\lVert \sum_{m=1}^M\Omega_m \nabla\Bar{\boldsymbol{F}}_m^{r} \right\rVert^2 \right] \notag\\ 
        &\hspace{5mm}+\frac{1}{2}\eta_{\mathrm{eff}}T_{\mathrm{eff}}\sum_{m=1}^M\Omega_m \mathbb{E}\left[ \left\lVert \nabla\Tilde{F}(\boldsymbol{X}_{r-1})- \nabla\Bar{\boldsymbol{F}}_m^{r}
        \right\rVert^2 \right]\label{eq: 48e}\\ 
        &=-\frac{1}{2}\eta_{\mathrm{eff}}T_{\mathrm{eff}}\mathbb{E}\left[ \left\lVert \nabla\Tilde{F}(\boldsymbol{X}_{r-1}) \right\rVert^2 \right]\notag\\
        &\hspace{5mm}-\frac{1}{2}\eta_{\mathrm{eff}}T_{\mathrm{eff}}\mathbb{E}\left[ \left\lVert \sum_{m=1}^M\Omega_m \nabla\Bar{\boldsymbol{F}}_m^{r} \right\rVert^2 \right] 
        +\frac{1}{2}\eta_{\mathrm{eff}}T_{\mathrm{eff}}\sum_{m=1}^M\Omega_m\cdot \notag\\
        &\mathbb{E}\left[ \left\lVert \nabla\Tilde{F}(\boldsymbol{X}_{r-1})- \frac{1}{\lVert \boldsymbol{a}_m \rVert_1} \sum_{t=1}^{T_m} a_{m,t} \nabla F_m(\boldsymbol{X}_{m,r}^{t-1}) 
        \right\rVert^2 \right]\\
        &\leq -\frac{1}{2}\eta_{\mathrm{eff}}T_{\mathrm{eff}}\mathbb{E}\left[ \left\lVert \nabla\Tilde{F}(\boldsymbol{X}_{r-1}) \right\rVert^2 \right]\notag\\
        &\hspace{5mm}-\frac{1}{2}\eta_{\mathrm{eff}}T_{\mathrm{eff}}\mathbb{E}\left[ \left\lVert \sum_{m=1}^M\Omega_m \nabla\Bar{\boldsymbol{F}}_m^{r} \right\rVert^2 \right] +\frac{1}{2}\eta_{\mathrm{eff}}T_{\mathrm{eff}}\sum_{m=1}^M\Omega_m\cdot \notag\\
        &\frac{1}{\lVert \boldsymbol{a}_m \rVert_1} \sum_{t=1}^{T_m} a_{m,t} \mathbb{E}\left[ \left\lVert \nabla\Tilde{F}(\boldsymbol{X}_{r-1})-  \nabla F_m(\boldsymbol{X}_{m,r}^{t-1}) 
        \right\rVert^2 \right]\label{eq: 48g} \\
        &\leq -\frac{1}{2}\eta_{\mathrm{eff}}T_{\mathrm{eff}}\mathbb{E}\left[ \left\lVert \nabla\Tilde{F}(\boldsymbol{X}_{r-1}) \right\rVert^2 \right]\notag\\
        &\hspace{5mm}-\frac{1}{2}\eta_{\mathrm{eff}}T_{\mathrm{eff}}\mathbb{E}\left[ \left\lVert \sum_{m=1}^M\Omega_m \nabla\Bar{\boldsymbol{F}}_m^{r} \right\rVert^2 \right]+\frac{1}{2}\eta_{\mathrm{eff}}T_{\mathrm{eff}}\sum_{m=1}^M\Omega_m\cdot \notag\\ 
        & \frac{L^2}{\lVert \boldsymbol{a}_m \rVert_1} \sum_{t=1}^{T_m} a_{m,t} \mathbb{E}\left[ \left\lVert \boldsymbol{X}_{r-1}-  \boldsymbol{X}_{m,r}^{t-1} 
        \right\rVert^2 \right]\label{eq: 48h}
    \end{align}
    \label{eq: bound_first_term}
\end{subequations}
In \eqref{eq: bound_first_term}, \eqref{eq: 48d} is due to property of $\ell_2$-norm, \eqref{eq: 48e} and \eqref{eq: 48g} are due to Jensen's inequality, and \eqref{eq: 48h} is due to Assump. \ref{assump: Smooth}.
\subsection{Proof of Lemma \ref{lemma:4}} \label{appx: lemma:4}
\begin{subequations}
\begin{align}
&\mathbb{E}\left[ \left\lVert \frac{1}{K}\sum_{m\in \Tilde{\mathcal{S}}_r(\boldsymbol{p},\boldsymbol{q})} \left(\boldsymbol{\Delta}_{m,r}^{\boldsymbol{\xi}}
    - \boldsymbol{\Delta}_{m,r}\right)\right\rVert^2 \right]\\
&=\mathbb{E}\left[ \left\lVert \frac{1}{K}\sum_{m\in\mathcal{S}_r(\boldsymbol{p})} \left(\nabla\boldsymbol{F}_m^{r,\boldsymbol{\xi}}
    - \nabla\boldsymbol{F}_m^{r}\right)\boldsymbol{a}_m\cdot Z_m^r\right\rVert^2 \right] \\
&\leq \mathbb{E}\left[ \frac{1}{K}\sum_{m\in\mathcal{S}_r(\boldsymbol{p})} \left\lVert \left(\nabla\boldsymbol{F}_m^{r,\boldsymbol{\xi}}
    - \nabla\boldsymbol{F}_m^{r}\right)\boldsymbol{a}_m\cdot Z_m^r\right\rVert^2 \right]\label{eq: 50b} \\ 
&=\mathbb{E}\Bigg[ \frac{1}{K}\sum_{m\in\mathcal{S}_r(\boldsymbol{p})} \Bigg\lVert 
\sum_{t=1}^{T_m} a_{m,t} \cdot \Big(\nabla F_m(\boldsymbol{X}_{m,r}^{t-1},\boldsymbol{\xi}_{m,r}^{t}) \notag\\
&\hspace{3cm} -\nabla F_m(\boldsymbol{X}_{m,r}^{t-1}) \Big)
\Bigg\rVert^2 \cdot Z_m^r \Bigg]\\
&\leq\mathbb{E}\Bigg[ \frac{1}{K}\sum_{m\in\mathcal{S}_r(\boldsymbol{p})} 
\lVert \boldsymbol{a}_m \rVert_1\sum_{t=1}^{T_m} a_{m,t}\Big\lVert 
 \Big(\nabla F_m(\boldsymbol{X}_{m,r}^{t-1},\boldsymbol{\xi}_{m,r}^{t})\notag\\
 &\hspace{3cm} -\nabla F_m(\boldsymbol{X}_{m,r}^{t-1}) \Big)
\Big\rVert^2 \cdot Z_m^r \Bigg] \label{eq: 50d}\\ 
&\leq\mathbb{E}\left[ \frac{1}{K}\sum_{m\in\mathcal{S}_r(\boldsymbol{p})} 
\lVert \boldsymbol{a}_m \rVert_1\sum_{t=1}^{T_m} a_{m,t} \sigma^2 \cdot Z_m^r \right]\label{eq: 50e}\\ 
&=\mathbb{E}\left[ \frac{1}{K}\sum_{m\in\mathcal{S}_r(\boldsymbol{p})} 
\lVert \boldsymbol{a}_m \rVert_1\sum_{t=1}^{T_m} a_{m,t} \sigma^2 (1-q_m) \right]\\
&=K\cdot \frac{1}{K} \sum_{m=1}^M p_m
\lVert \boldsymbol{a}_m \rVert_1\sum_{t=1}^{T_m} a_{m,t} \sigma^2 (1-q_m)\label{eq: 50g}\\ 
&= \sigma^2 \sum_{m=1}^M p_m
\lVert \boldsymbol{a}_m \rVert_1\sum_{t=1}^{T_m} a_{m,t}(1-q_m),
\end{align}   
\end{subequations}
where \eqref{eq: 50b} and \eqref{eq: 50d} are due to Jensen's inequality, \eqref{eq: 50e} is due to Assumption \ref{assump: unbiased}, and \eqref{eq: 50g} is due to independent sampling.

\subsection{Proof of Lemma \ref{lemma: lemma 2}} \label{appx: lemma: lemma 2}
On each device, the accumulation of the expected squared $\ell_2$-norm of the local model updates is upper bounded by
    \begin{subequations}
    \begin{align}
        &\sum_{t=1}^{T_m} a_{m,t}\mathbb{E}\left[ \left\lVert  \boldsymbol{X}_{r-1}-\boldsymbol{X}_{m,r}^{t-1}  \right\rVert^2 \right]\\
        &=\eta^2 \sum_{t=1}^{T_m} a_{m,t}\mathbb{E}\left[ \left\lVert  \sum_{s=1}^{t-1} a_{m,s} \nabla F_m(\boldsymbol{X}_{m,r}^{s-1})  \right\rVert^2 \right]\\
        &\leq\eta^2 \sum_{t=1}^{T_m} a_{m,t} \sum_{s=1}^{t-1} a_{m,s} \sum_{s=1}^{t-1} a_{m,s}\mathbb{E}\left[ \left\lVert \nabla F_m(\boldsymbol{X}_{m,r}^{s-1})  \right\rVert^2 \right]\\
        &\leq\eta^2 \left(\sum_{t=1}^{T_m} a_{m,t}\right)^2 \sum_{s=1}^{T_m-1} a_{m,s}\mathbb{E}\left[ \left\lVert \nabla F_m(\boldsymbol{X}_{m,r}^{s-1})  \right\rVert^2 \right]\\
        &\leq\eta^2 \lVert \boldsymbol{a}_m \rVert_1^2 \sum_{s=1}^{T_m-1} a_{m,s}\mathbb{E}\left[ \left\lVert \nabla F_m(\boldsymbol{X}_{m,r}^{s-1})  \right\rVert^2 \right]\\
        &\leq \eta^2 \lVert \boldsymbol{a}_m \rVert_1^2 \sum_{s=1}^{T_m-1} a_{m,s} \bigg(2\mathbb{E}\Big[ \lVert \nabla F_m(\boldsymbol{X}_{m,r}^{s-1})\notag\\
        &\hspace{5mm} -\nabla F_m(\boldsymbol{X}_{r-1}) \rVert^2 \Big]+2\mathbb{E}\Big[ \left\lVert \nabla F_m(\boldsymbol{X}_{r-1})  \right\rVert^2 \Big]\bigg)\\
        &\leq 2\eta^2 \lVert \boldsymbol{a}_m \rVert_1^2 \sum_{s=1}^{T_m-1} a_{m,s} \Big(L^2\mathbb{E}\left[ \left\lVert \boldsymbol{X}_{m,r}^{s-1}-\boldsymbol{X}_{r-1} \right\rVert^2 \right]\notag\\ 
        &\hspace{5mm}+\mathbb{E}\left[ \left\lVert \nabla F_m(\boldsymbol{X}_{r-1})  \right\rVert^2 \right]\Big).
    \end{align}
    \end{subequations}  
    Both sides in the above inequality contains $\sum_{t=1}^{T_m} a_{m,t}\mathbb{E}\left[ \left\lVert  \boldsymbol{X}_{r-1}-\boldsymbol{X}_{m,r}^{t-1}  \right\rVert^2 \right]$, by minor rearrangement, we have
    \begin{align}
        &\left(1-2\eta^2L^2\lVert \boldsymbol{a}_m \rVert_1^2\right)\sum_{t=1}^{T_m} a_{m,t}\mathbb{E}\left[ \left\lVert  \boldsymbol{X}_{r-1}-\boldsymbol{X}_{m,r}^{t-1}  \right\rVert^2 \right]\notag\\
        &\leq 2\eta^2\lVert \boldsymbol{a}_m \rVert_1^3\cdot\mathbb{E}\left[\lVert \nabla F_m\left( \boldsymbol{X}_{r-1}\right) \rVert^2\right].
    \end{align}
    Then, we can easily derive Lemma \ref{lemma: lemma 2}.

\section{Step Length Calibration}\label{appx: step_carlibration}
The step size influences the convergence speed, either accelerating or decelerating the optimization process, but does not determine the final convergence point. Although a larger step size may lead to faster initial convergence, the algorithm can still settle at an incorrect stationary point. To ensure a fair comparison of each algorithm’s ability to reach the correct optimum, all benchmark methods are operated with the same effective step size. This calibration is achieved by adjusting the learning rate $\eta$ with more details given below, which does not compromise the stability of the algorithms. Such calibration eliminates confounding variables, ensuring that any observed differences are solely due to the algorithm's capacity to address objective inconsistency.  
For ease of writing and W.L.O.G., we drop the index $r$ below.

In \texttt{FedAvg}, by plugging $p_m=\omega_m$ into \eqref{eq: E_agg} and \eqref{eq: obj_inconsist_comm_comp}, we get 
\begin{subequations}
\begin{align}
    &\eta_{\mathrm{eff},1}=\eta_1\sum_{m=1}^M\omega_m(1-q_m),\\
    &T_{\mathrm{eff},1}=\sum_{m=1}^M \frac{\omega_m(1-q_m)}{\sum_{m=1}^M\omega_m(1-q_m)}\lVert \boldsymbol{a}_m \rVert_1.
\end{align}
\end{subequations}

In \texttt{FedACS}, in \eqref{eq: fedACS_update}, we have
\begin{subequations}
\begin{align}
    &\eta_{\mathrm{eff},2}=\eta_2\frac{\sum_{m=1}^M\frac{\omega_m}{\lVert\boldsymbol{a}_m
    \rVert_1}}{\sum_{m=1}^M\frac{\omega_m}{(1-q_m)\lVert\boldsymbol{a}_m
    \rVert_1}},\\
    &T_{\mathrm{eff},2}=\frac{1}{\sum_{m=1}^M\frac{\omega_m}{\lVert\boldsymbol{a}_m
    \rVert_1}}.
\end{align}
\end{subequations}

In \texttt{c-a-FedAvg}, the expectation of the server aggregation is given as follows, 
\begin{subequations}
\begin{align}
    &\mathbb{E}\left[ \frac{1}{K} \sum_{m\in \Tilde{\mathcal{S}}_r(\boldsymbol{p},\boldsymbol{q})} \frac{1}{1-q_m}  (-\eta_3\boldsymbol{\Delta}_{m}^{r,\boldsymbol{\xi}}) \right]\\
    &=-\eta_3\mathbb{E}\left[ \frac{1}{K} \sum_{m\in \mathcal{S}_r(\boldsymbol{p})} \frac{1}{1-q_m}  \boldsymbol{\Delta}_{m}^{r,\boldsymbol{\xi}} \cdot Z_m^r \right]\\
    &=-\eta_3\mathbb{E}\left[ \frac{1}{K} \sum_{m\in \mathcal{S}_r(\boldsymbol{p})} \boldsymbol{\Delta}_{m}^{r,\boldsymbol{\xi}} \right]
    =-\eta_3 K\cdot \frac{1}{K} \sum_{m=1}^M \omega_m \boldsymbol{\Delta}_{m}^{r,\boldsymbol{\xi}} \\
    &= -\underbrace{\eta_3 \sum_{m=1}^M \omega_m \lVert\boldsymbol{a}_m
    \rVert_1}_{\eta_{\mathrm{eff},3}}  \cdot \underbrace{\sum_{m=1}^M\frac{\omega_m \lVert\boldsymbol{a}_m
    \rVert_1}{\sum_{m=1}^M \omega_m \lVert\boldsymbol{a}_m
    \rVert_1}}_{T_{\mathrm{eff},3}} \cdot  \nabla\Bar{\boldsymbol{F}}_m^{r,\boldsymbol{\xi}}.
    \label{eq: c-a-fedavg_agg}
\end{align}
\end{subequations}

In \texttt{FedVarp}, let $r_m'$ denote the latest previous round in which PS receives the local model update from client $m$, we have the aggregation at the server as follows,
\begin{align}
    \mathbb{E}\left[ \frac{1}{K} \left(\sum_{m\in \Tilde{\mathcal{S}}_r(\boldsymbol{p},\boldsymbol{q})} (-\eta_4\boldsymbol{\Delta}_{m}^{r,\boldsymbol{\xi}}) + \sum_{m\in \mathcal{S}_r(\boldsymbol{p})\backslash\Tilde{\mathcal{S}}_r(\boldsymbol{p},\boldsymbol{q})} (-\eta_4\boldsymbol{\Delta}_{m}^{r_m',\boldsymbol{\xi}}) \right) \right],
    \label{eq: fedvarp_agg}
\end{align}
if $\mathbb{E}\left[ \boldsymbol{\Delta}_{m}^{r,\boldsymbol{\xi}} \right]=\mathbb{E}\left[ \boldsymbol{\Delta}_{m}^{r_m',\boldsymbol{\xi}}\right]$, then we have \eqref{eq: fedvarp_agg} further equal to
\begin{align}
    \eqref{eq: fedvarp_agg}
    =-\eta_4\mathbb{E}\left[ \frac{1}{K} \sum_{m\in \mathcal{S}_r(\boldsymbol{p})} \boldsymbol{\Delta}_{m}^{r,\boldsymbol{\xi}} \right].
\end{align}
This is identical to \eqref{eq: c-a-fedavg_agg}, so $\eta_4$ can be set the same to $\eta_3$. 

In \texttt{FedNova}, only the aggregation weights are reset to $\omega_m$, i.e., \texttt{FedNova} only changes the direction of the aggregated vectors, the step length is unaffected. So we have
\begin{subequations}
\begin{align}
    &\eta_{\mathrm{eff},5}=\eta_5\sum_{m=1}^M\omega_m(1-q_m),\\
    &T_{\mathrm{eff},5}=\sum_{m=1}^M \frac{\omega_m(1-q_m)}{\sum_{m=1}^M\omega_m(1-q_m)}\lVert \boldsymbol{a}_m \rVert_1.
\end{align}
\end{subequations}

In OS, the effective learning rate $\eta_{\mathrm{eff},6}$ and effective length of accumulation $T_{\mathrm{eff},6}$ are the same in \texttt{FedAvg}. In \texttt{FedAU}, it's hard to compute the effective step size, so the hyper-parameter is set the same in \texttt{FedAvg}.  

In summary, if $\eta_1$ is preset, then $\eta_2$, $\eta_3$, $\eta_4$, $\eta_5$ can be chosen accordingly such that $\eta_{\mathrm{eff},1}T_{\mathrm{eff},1}=\eta_{\mathrm{eff},2}T_{\mathrm{eff},2}=\eta_{\mathrm{eff},3}T_{\mathrm{eff},3}=\eta_{\mathrm{eff},4}T_{\mathrm{eff},4}=\eta_{\mathrm{eff},5}T_{\mathrm{eff},5}$. In \texttt{FedAU} and OS, the learning rate is the same as in \texttt{FedAvg}. 

\end{appendices}

\end{document}